\documentclass[11pt,letterpaper]{article}
\usepackage{arxiv}
\usepackage{amsmath,amssymb,amsthm}
\usepackage{array}
\usepackage{booktabs}
\usepackage{colortbl}
\usepackage{enumitem}
\usepackage{float}
\usepackage{graphicx}
\usepackage{hyperref}
\usepackage{longtable}
\usepackage{microtype}
\usepackage{multirow}
\usepackage{tikz}
\usepackage{xcolor}
\usepackage[most]{tcolorbox}
\usetikzlibrary{arrows.meta,positioning,fit,calc}
\newcommand{\skill}[1]{\texttt{/#1}}

\newcommand{\R}{\mathbb R}

\newcommand{\tableidea}[1]{\textcolor{MLTteal!60!black}{\sffamily\bfseries Idea~#1}}
\newcommand{\githubentry}[2]{
\noindent{\fontsize{8}{10}\selectfont
\scalebox{0.93}{\textcolor{MLTnavy}{\sffamily\bfseries #1:}\hspace{0.45em}%
\href{#2}{\texttt{\detokenize{#2}}}}}\par
}

\newtcolorbox{openquestionbox}{
  enhanced,
  breakable,
  colback=orange!10,
  colframe=orange!10,
  boxrule=0pt,
  arc=1mm,
  left=2mm,
  right=2mm,
  top=1.5mm,
  bottom=1.5mm,
  before skip=6pt,
  after skip=6pt
}
\newenvironment{openquestion}
  {\begin{openquestionbox}\noindent\textbf{Open question.}\ }
  {\end{openquestionbox}}

\tcolorboxenvironment{theorem}{
  enhanced,
  breakable,
  colback=MLTblue!4,
  colframe=MLTblue!4,
  boxrule=0pt,
  borderline west={1.5pt}{0pt}{MLTblue!72},
  arc=1mm,
  left=2.5mm,
  right=2.5mm,
  top=1.5mm,
  bottom=1.5mm,
  before skip=7pt,
  after skip=7pt
}
\tcolorboxenvironment{corollary}{
  enhanced,
  breakable,
  colback=MLTteal!4,
  colframe=MLTteal!4,
  boxrule=0pt,
  borderline west={1.5pt}{0pt}{MLTteal!68},
  arc=1mm,
  left=2.5mm,
  right=2.5mm,
  top=1.5mm,
  bottom=1.5mm,
  before skip=7pt,
  after skip=7pt
}

\tcbset{
  valg-note/.style={
    enhanced,
    breakable,
    colback=MLTblue!3,
    colframe=MLTblue!3,
    boxrule=0pt,
    borderline west={1.4pt}{0pt}{MLTblue!68},
    arc=1mm,
    outer arc=1mm,
    boxsep=0pt,
    left=3mm,
    right=3mm,
    top=2mm,
    bottom=2mm,
    before skip=6pt,
    after skip=6pt
  },
  valg-comparison/.style={
    valg-note,
    colback=MLTblue!4,
    coltitle=MLTnavy,
    fonttitle=\sffamily\bfseries\normalsize,
    attach title to upper={\par\smallskip}
  },
  valg-rubric/.style={
    valg-note,
    colback=MLTteal!3,
    borderline west={1.4pt}{0pt}{MLTteal!72},
    coltitle=MLTteal!55!black,
    fonttitle=\sffamily\bfseries\normalsize,
    attach title to upper={\par\smallskip}
  }
}

\title{\huge\VALG{}\textcolor{MLTnavy}{: An Agentic System for ML Theory Research}\\[0.35em]
{\normalfont\Large\color{MLTink}and Demonstrations on COLT 2026 Open Problems}}

\author{%
\begin{minipage}{0.94\textwidth}
\centering
{\sffamily\normalsize\color{MLTink}
Dechen Zhang\textsuperscript{1,2} \quad
Xuan Tang\textsuperscript{1} \quad
Xinxiang Yin\textsuperscript{1}\\[-0.05em]
Xingwu Chen\textsuperscript{1} \quad
Jian Qian\textsuperscript{1} \quad
Difan Zou\textsuperscript{1,2}}
\\[0.38em]
{\sffamily\footnotesize\color{MLTink!72}
\textsuperscript{1}The University of Hong Kong \quad
\textsuperscript{2}Shenzhen Loop Area Institute }
\\[0.2em]
{\sffamily\footnotesize\color{MLTink!58}
Correspondence: \texttt{dzou@hku.hk}}
\end{minipage}
}
\date{\sffamily\footnotesize\color{MLTink!58}\today}

\begin{document}

%%%%%%%%%%%%%%%%%%%% title/author/date/abstract/content %%%%%%%%%%%%%%%%%%%%

\maketitle

\begin{abstract}
Machine learning theory studies learning procedures through mathematical setups in which the data model, training protocol, oracle access, loss, metric, and randomness define the phenomenon that a theorem is meant to explain. Solving an open problem therefore requires the problem formulation, theorem target, and proof mechanism to be developed in concert. Researchers formulate hypotheses, test them through preliminary theoretical or empirical analysis, and refine both assumptions and proofs. We investigate whether this process can be organized as an autonomous agentic workflow for ML theory research.

We develop \VALG, an agentic system that combines multi-level Verification, Adaptive formulation of Learning-theory problems, and Graph-structured proof development. Within each source-relative theorem branch, \VALG maintains a fixed mathematical specification, checks the theorem-level composition of a typed proof-dependency graph, and constructs and reviews local proofs in dependency order. When a proof attempt fails, \VALG identifies whether the obstruction lies in a derivation, the proof structure, or the theorem formulation and routes the next attempt accordingly. Formulation-level obstructions initiate an explicitly related variant or relaxation, preserving the mathematical relation between the resulting theorem and the source problem.

We evaluate \VALG on nine subproblems from five COLT 2026 open problems spanning tensor decomposition, learning complexity, one-bit mean estimation, differential privacy, and online optimization. Two runs produce internally finalized theorem candidates that match the scope of their source briefs and address the original subproblems; the remaining seven yield restricted-method results, special cases, or conditional theorems. For each candidate, we report the problem setup, theorem statement, and progress relative to the source problem. These case studies show how \VALG keeps source-scope matches, relaxations, conditional results, and blocked attempts mathematically distinct. \VALG is open source.
\begin{tcolorbox}[valg-note]
    \githubentry{VALG}{https://github.com/DechenZhang/VALG-ML-Theory-Agent/tree/main/skills}
    \githubentry{COLT Problem Solutions}{https://github.com/DechenZhang/VALG-ML-Theory-Agent/tree/main/case-studies}  
\end{tcolorbox}

\end{abstract}
\section{Introduction}
\label{sec:introduction}

Machine learning theory studies learning procedures through mathematical
setups that specify how data are generated, what information an algorithm can
use, which loss or risk is measured, and how performance scales with problem
parameters.  This viewpoint is already present in classical PAC learning and
statistical learning theory, where the hypothesis class, distributional model,
sample size, loss, and success probability are part of the statement being
studied~\citep{valiant1984theory,vapnik1998statistical,shalev2014understanding,wainwright2019highdimensional}.
Modern learning-theory papers often make the same structure more explicit:
a result may depend on the training protocol, oracle model, randomness model,
tail condition, approximation notion, or asymptotic regime.  In this literature,
such choices usually appear in the theorem statement itself, because they
specify the learning phenomenon under study.

Consequently, ML-theoretic research often proceeds as a concurrent process in
which the problem setup, theorem formulation, and proof technique are
discovered and refined together.  This flexibility can yield scientifically
useful outcomes: a well-stated relaxation, a conditional theorem, or a
restricted setting can illuminate why the original problem is hard and which
assumptions are essential.  It also creates a risk.  Once the protocol becomes
adaptive, the representation approximate, the claim distribution-dependent,
or the target property assumed, the resulting theorem may no longer address
the original problem.

This observation points to a need for systematic support: when the setup,
theorem formulation, and proof technique co-evolve, a reasoning system for ML
theory must track that evolution explicitly.  Recent language-model systems
have made substantial progress on extended reasoning.  General-purpose agents
combine generation, tool use, feedback, and refinement across multiple
steps~\citep{yao2023react,shinn2023reflexion,madaan2023selfrefine}.
Scientific-agent systems organize research into literature search, hypothesis
generation, experimentation, critique, and paper
writing~\citep{lu2024aiscientist,schmidgall2025agentlab,gottweis2026coscientist},
while research-agent benchmarks evaluate extended machine-learning tasks
rather than single responses~\citep{huang2023mlagentbench,chan2024mlebench,
wijk2024rebench,starace2025paperbench}.  Mathematical research agents coordinate
conjecture generation, long-horizon proof search, criticism, and revision in
natural language, while AI4Theory systems also connect literature synthesis,
theorem proving, algorithm design, and numerical experiments~\citep{
feng2026aletheia,liu2026danus,zheng2026comath,he2026reasflow}.  In parallel,
formal-mathematics agents use proof assistants, retrieval, and repair loops to
prove or formalize fixed statements, including complete research-paper
developments and statistical learning theory libraries~\citep{
yang2023leandojo,kripner2026openprover,zhang2026leanmarathon,zhang2026ai4slt}.
Together, these advances leave an ML-theory-specific pre-formalization question:
how should an agentic system manage informal theorem development when both the
proof and the problem formulation may require revision?

To this end, we develop \VALG\footnote{The name also honors four pioneers of learning theory: Leslie \textbf{V}aliant, whose PAC framework formalized efficient learnability~\citep{valiant1984theory}; Dana \textbf{A}ngluin, whose \(L^\ast\) algorithm established polynomial-time exact learning of regular languages from membership and equivalence queries~\citep{angluin1987learning}; Nick \textbf{L}ittlestone, whose mistake-bound analysis and Winnow algorithm shaped online learning~\citep{littlestone1988learning}; and E. Mark \textbf{G}old, whose identification-in-the-limit framework provided an early mathematical model of language learning~\citep{gold1967language}.}, an agentic system that combines
multi-level Verification, Adaptive formulation of Learning-theory problems,
and Graph-structured proof development within source-relative theorem branches.
The task requires generating a
plausible proof while tracking the theorem under consideration, every added
assumption, the cause of each failed attempt, and the relation of any revised
formulation to the source problem.  Each branch in \VALG fixes one mathematical
specification and represents its proof as a typed dependency graph from
primitive assumptions, through intermediate lemmas, to the target theorem.
Before local proof work begins, a mechanism-aware global analysis checks the
mathematical source of each hard claim, the compatibility of lemma interfaces,
and the closure of recursive or limiting arguments.  Proof development then
proceeds from global structure to local derivations.  We localize failures to a
derivation, the proof graph, or the theorem formulation; only formulation-level
obstructions create a new variant or relaxation.  Specialized reviewers
separately examine structure, rigor, citation use, and adversarial boundary
cases before assigning a source-relative outcome.

We evaluate \VALG on nine subproblems drawn from five COLT 2026 open-problem papers: tensor ALS/GD overparameterization~\citep{arvanitakis2026als},
distribution-independent deep or statistical-query learning versus linear
dimension complexity~\citep{feldman2026deep}, non-adaptive one-bit mean
estimation~\citep{lau2026interaction}, differential privacy PAC learning~\citep{nissim2026private} and online optimization of Piecewise-Lipschitz functions~\citep{balcan2026online}.  These questions remain unresolved in
the source works and provide a substantive testbed beyond synthetic
proof-generation tasks. Across nine runs, \VALG produces 22 internally finalized theorem candidates. \textbf{Two runs produce finalized theorem candidates that fully match the scope of their source subproblems; the remaining seven yield restricted-method results, special cases, or conditional theorems.} For each run, we report the problem setup and finalized theorem for selected representative candidates, together with an assessment of their progress toward the corresponding open problem.

\paragraph{Contributions.}
Our contributions are highlighted as follows:
\begin{enumerate}[leftmargin=*,itemsep=1pt]
  \item  We formulate informal ML-theory research
  as an agentic theorem-development problem, in which the learning setup,
  assumptions, theorem target, proof dependencies, and relation to the
  originating research question must be developed and tracked together. This
  formulation reflects the fact that ML theory often requires refinement of both
  the mathematical assumptions and the object being proved.
  \item We develop \VALG,
  which organizes this process through source-relative perspective--idea
  branches, fixed theorem contracts, typed proof-dependency graphs, and a
  sketch--global--step--assembly proof pipeline. The architecture separates
  open-ended problem design from contract-based proof development and assigns
  different review mechanisms to the two stages.
  \item We introduce a hierarchical
  diagnosis and revision mechanism that localizes failures to the derivation,
  proof graph, or theorem formulation. Local failures trigger local repairs,
  structural failures trigger proof-architecture revisions, and only
  formulation-level obstructions create an explicitly source-related variant or
  relaxation. This allows \VALG to retain the mathematical relationship between
  a revised theorem and the original ML-theory problem rather than silently
  replacing the target.
  \item We apply \VALG to nine subproblems
  drawn from five COLT 2026 open-problem papers and obtain 22 internally
  finalized theorem candidates. Two runs produce candidates that match the
  scope of their source subproblems, while the remaining seven produce
  restricted-method results, special cases, or conditional theorems. The case
  studies illustrate how an agentic system can distinguish source-scope matches
  from related but weaker results and unsuccessful branches when developing
  theory for unresolved ML problems.
\end{enumerate}

The retrospective focuses on branch records and their mathematical outcomes. 

\paragraph{Organization.}
Section~\ref{sec:related} positions the work relative to informal mathematical-reasoning agent, formal theorem proving and research-agent benchmarks.
Section~\ref{sec:workflow} presents the theorem-development method,
and section~\ref{sec:evaluation} then examines the COLT open problems and the resulting theorem candidates.  The paper closes with limitations and open directions.

\section{Related Work}
\label{sec:related}

\subsection{Mathematical discovery and research agents}

AI for mathematics includes neural--symbolic systems for conjecture formation,
geometry, and executable program search~\citep{davies2021guiding,
trinh2024alphageometry,romeraparedes2024funsearch,novikov2025alphaevolve}, as
well as more open-ended systems that combine conjecturing, computation, and
symbolic solving~\citep{chen2026moonshine,chen2026iteris,xia2026neurosymbolic}.
General scientific research agents extend this pattern across ideas,
experiments, literature, and writing~\citep{lu2024aiscientist,
schmidgall2025agentlab,gottweis2026coscientist,yang2026aris}; mathematical research places
additional weight on constructing and checking long derivations.

Research-level proof agents organize natural-language mathematics around
generation, criticism, and revision.  Aletheia scales a
generator--verifier--reviser loop across pure mathematics, while ProofCouncil
and RMA distribute analysis, literature use, proof construction, criticism, and
computation across specialized roles~\citep{feng2026aletheia,
schmitt2026proofcouncil,zhao2026rma}.  Rethlas, QED, and the MechMath Agent Team
connect informal exploration with formal checking through theorem retrieval,
decomposition, and natural- or formal-language provers~\citep{ju2026rethlas,
an2026qed,cao2026mechmath}; the AI Co-Mathematician instead supports interactive
refinement of definitions, questions, conjectures, computations, and proof
directions~\citep{zheng2026comath}.

Danus addresses long-horizon coordination through parallel proof search and a
shared fact graph~\citep{liu2026danus}.  A main agent redirects workers across
lines of attack, while a stateless verifier checks each natural-language claim
before the claim, its proof, and its logical dependencies enter the graph.  This
structure lets many local contributions accumulate into a long argument and
supports transitive removal when an accepted fact is later rejected.  Its fact
graph is close to our typed proof dependencies, although Danus terminates when a
supplied target or its refutation becomes a verified fact.  \VALG additionally
separates a proposed proof architecture from theorem-level feasibility and local
derivations, and it treats a formulation-level obstruction as a reason to create
an explicitly source-related theorem branch rather than to edit the target in
place.

The closest AI4Theory systems extend agentic research beyond pure proof:
ReasFlow integrates literature synthesis, algorithm design, theorem proving,
numerical experiments, and manuscript preparation, while Iteris combines proof
drafts with numerical and adversarial exploration~\citep{he2026reasflow,
chen2026iteris}.  \VALG focuses more specifically on ML-theory questions whose
learning setup, assumptions, and theorem target may co-evolve.  It records
whether a result addresses the source question itself, a restricted method, a
special case, or a conditional variant; this source-relative distinction guides
both failure routing and case-study evaluation.

\subsection{Formal theorem proving agents}

Formal theorem proving begins after a mathematical statement and its assumptions
have been encoded in a proof assistant.  The kernel then supplies an exact
acceptance criterion for a completed proof.  LeanDojo provides a programmatic
Lean environment, premise annotations, and retrieval-augmented proving
benchmarks, while HyperTree Proof Search combines learned proof-step proposals
with structured search~\citep{yang2023leandojo,lample2022hypertree}.  COPRA wraps
a general-purpose language model in stateful backtracking search and uses proof
assistant errors and retrieved lemmas as feedback~\citep{thakur2024copra}.

Subsequent systems improve formal proof search through synthetic data,
proof-assistant feedback, informal planning, and reusable lemma libraries.
DeepSeek-Prover and its later versions combine large-scale Lean data,
reinforcement learning, tree search, and subgoal decomposition; Lean-STaR
interleaves informal thoughts with tactic generation; and LEGO-Prover and
DreamProver grow transferable libraries of verified lemmas across
problems~\citep{xin2024deepseekprover,ren2025deepseekproverv2,
lin2025leanstar,wang2023legoprover,zhang2026dreamprover}.  OpenProver, OProver,
Numina-Lean-Agent, and LAMP expose agentic interaction, verifier-guided repair,
and tool use in Lean, while Discover and Prove separates answer discovery from
formal proof for hard-mode statements~\citep{kripner2026openprover,
ma2026oprover,liu2026numinaleanagent,srinivasan2026lamp,
liu2026discoverprove}.  These methods can provide kernel-checked correctness for
the encoded theorem, but they generally take the formal target as fixed; they do
not decide whether a revised learning-theory formulation remains a meaningful
answer to an originating research question.

AI4SLT brings this formal perspective directly to machine learning theory.  It
combines human-designed proof strategies with AI-assisted Lean construction to
develop an empirical-process library containing Gaussian concentration,
Dudley's entropy integral, and sharp least-squares regression rates~\citep{
zhang2026ai4slt}.  The formalization reveals implicit assumptions and omitted
steps in standard presentations, demonstrating the value of machine-checked ML
theory.  Its objective is to formalize an established mathematical development;
\VALG instead operates earlier, when the learning setup and theorem target are
still under construction, and can pass a finalized branch to such a formal
verification workflow.

\subsection{Autoformalization}

Autoformalization addresses a different interface: translating natural-language
mathematics into a formal statement or proof.  Early LLM-based work demonstrated
statement translation into Isabelle, ProofNet paired informal undergraduate
problems with Lean statements and proofs, and MMA expanded training through
large-scale multilingual informal--formal pairs~\citep{wu2022autoformalization,
azerbayev2023proofnet,jiang2023mma}.  A recent survey organizes this rapidly
growing area by mathematical domain, model, data, and evaluation
strategy~\citep{weng2025autoformalizationsurvey}.

Recent work moves from isolated statements toward complete proofs and research
libraries.  ProofFlow first recovers a directed dependency graph from an
informal proof and formalizes its steps as intermediate lemmas, explicitly
measuring semantic and structural fidelity~\citep{cabral2025proofflow}.  Beyond
the Library introduces an agentic pipeline that can add definitions and
auxiliary lemmas missing from Mathlib when formalizing research papers, and
TOMAP concentrates test-time refinement on the decomposition that supplies
claims, assumptions, and dependencies to downstream formalizer and prover
agents~\citep{moakhar2026beyond,liu2026tomap}.  FormalRx diagnoses semantic
misalignment by error type and location rather than returning only a binary
translation score, while theory-level autoformalization argues that useful
formalization must ultimately construct coherent libraries of definitions,
lemmas, theorems, and their interdependencies~\citep{wang2026formalrx,
min2026theorylevel}.  LeanMarathon addresses long-horizon research-paper
formalization through an evolving Lean blueprint that serves as both proof
skeleton and natural-language proof graph.  Four specialized agents stabilize
target fidelity and discharge the resulting dependency graph from its leaves in
parallel, producing complete Lean formalizations of seven target theorems from
recent research papers~\citep{zhang2026leanmarathon}.  These graph- and
theory-level perspectives are close to \VALG's emphasis on compositional
structure.  Their input, however, is an existing informal statement or proof to
be translated faithfully; \VALG uses a dependency graph earlier, while
constructing and revising the theorem itself, and leaves proof-assistant
formalization as a separate verification layer.

\subsection{Benchmarks for mathematical and research agents}

Formal benchmarks such as miniF2F, ProofNet, and PutnamBench evaluate proving or
formalizing fixed competition and undergraduate problems~\citep{zheng2021minif2f,
azerbayev2023proofnet,tsoukalas2024putnambench}.  Research-level evaluations
instead use expert-proposed open problems, as in Aletheia's FirstProof
benchmark~\citep{feng2026aletheia}, or extract larger problem collections from
the mathematical literature.  RMA evaluates an agentic system on
research-level problems, while ResearchMath-14K collects 14,056 questions and
documents recurrent failures including fabricated references, non-attempts, and
substitution of a narrower problem~\citep{zhao2026rma,son2026researchmath}.
These failures motivate evaluating both mathematical validity and fidelity to
the supplied research question.

Research-agent benchmarks cover a broader executable workflow.  MLAgentBench
evaluates iterative model training, debugging, and experimental improvement;
MLE-bench expands this setting to 75 offline Kaggle competitions; and RE-Bench
compares agents with human experts under different time budgets~\citep{
huang2023mlagentbench,chan2024mlebench,wijk2024rebench}.  Other suites begin
from published scientific work: CORE-Bench tests computational reproducibility,
PaperBench asks agents to replicate complete AI papers, and ScienceAgentBench
isolates expert-validated scientific coding tasks~\citep{siegel2024corebench,
starace2025paperbench,chen2025scienceagentbench}.  These benchmarks measure
experimentation, research engineering, reproducibility, and data analysis under
fixed task specifications.  Our case studies address a complementary question:
whether an agent can develop a theorem and, when necessary, a related
formulation while keeping each outcome mathematically tied to the original
ML-theory problem.

\section{\VALG: ML Theory Research Agent}
\label{sec:workflow}
\VALG develops theorem candidates from an ML-theory research direction or question. Starting from a source problem, it maps the relevant literature, identifies distinct perspectives, develops mechanism-level ideas, formalizes each viable idea as a precise setting and goal, constructs and reviews a proof from global structure to local derivations, and reports every accepted theorem relative to the source problem. A run may return zero, one, or several accepted theorem candidates.

%\subsection{Overall Workflow}
\VALG divides theorem development into a pre-proof stage for formalizing the problem setup and a proof-review stage for constructing and checking the proof. Dedicated revision loops operate at both stages because local derivations, proof structure, and theorem formulation require different repairs. Figure~\ref{fig:overall-workflow} summarizes the complete workflow.

\begin{figure}[!t]
  \centering
  \resizebox{\textwidth}{!}{\input{figures/workflow.tex}}
  \caption{The \VALG workflow. Workflow~1 creates active perspective branches; Workflow~2 is shown for one branch. Teal, amber, and green denote producers, independent reviewers, and accepted output, respectively. Solid arrows show validated forward flow. A finalized branch does not stop the remaining branches.}
  \label{fig:overall-workflow}
\end{figure}

\subsection{Workflow 1: Pre-Proof Stage}
Theorem proving in pure mathematics often begins with well-defined problem setup, including formalized assumptions, mathematical objects and targets. However, an ML-theory research question is often less settled. It may specify a learning phenomenon but \textit{leave open which model subclass, data distribution, algorithmic mechanism, performance metric, or asymptotic regime makes a theorem both true and informative}. Workflow~1 therefore treats problem formulation as an explicit research task rather than an implicit prerequisite for proof search.

The workflow begins with a source question and, when available, an accompanying research brief. Guided by the literature survey, it organizes exploration around a two-level \textbf{perspective-idea hierarchy}. A perspective provides a broad research lens, whereas an idea fully specifies a concrete problem setup sufficient for downstream formalization and theorem proving. Steps of Workflow 1 are summarized as follows.

\begin{enumerate}[leftmargin=*]
    \item \textbf{Literature survey.} The literature worker maps the relevant settings, results, proof techniques, and evidence-supported gaps. Different from conventional literature survey in auto research, ML-theory research 
    \begin{tcolorbox}[valg-note]
    \begin{itemize}[leftmargin=*,nosep]
        \item should separate empirical evidence from theoretical foundations to \textcolor{MLTblue}{discover research gaps in explaining empirical phenomenon};
        \item should determine applicable theoretical testbeds to \textcolor{MLTblue}{study theoretical research question}.
    \end{itemize}
    \end{tcolorbox}
    We therefore separate existing literature into direct theoretical sources, foundational theoretical frameworks and empirical practice, while preserving the constraints that define the source direction.
    \item \textbf{Two-level perspective-idea structure.}
    \begin{tcolorbox}[valg-note]
    \begin{itemize}[leftmargin=*]
        \item \textbf{Breadth-focus perspective selection.} The perspective selector to select a small set of distinct, gap-grounded perspectives and launches them as parallel research branches, where a \emph{perspective} is defined as a coherent, literature-supported research lens, represented by the normalized tuple
        \[
            (\text{analysis target},\ \text{model class},\ \text{data assumption},\ \text{regime},\ \text{algorithm}).
        \]
        Such characterization for perspective distinguishes a meaningful research direction, such as the desired theoretical guarantee, model and data setting, learning regime, or algorithmic family. It needs not fix every aspect of the eventual theorem; rather, it defines a broad search region and leaves appropriate choices open for refinement during idea generation.
        \item \textbf{Depth-focus idea generator.} Within each perspective branch, an distinct idea generator proposes a concrete mechanism and candidate result \textbf{using the upstream perspective as an explicit anchor}. Ideas that are unsupported by the literature or duplicate ideas already proposed in another branch are rejected. \textbf{Each generated idea must be a field-wise specialization of its upstream perspective}: it may make the analysis target, model class, data assumption, regime, or algorithm more specific, but it may not broaden any of these choices or contradict the perspective.
    \end{itemize}
    \textcolor{MLTnavy}{This separation keeps exploration broad without spending all proof effort on a single interpretation, while giving each surviving branch a focused mechanism and theorem target.}
    \end{tcolorbox}
    \item \textbf{Formalization.} For each viable, nonduplicate idea, the formalizer fixes the notation, primitive assumptions, quantifiers, regime, and exactly one mathematical goal. The resulting setting is the theorem contract passed to Workflow~2.
\end{enumerate}

\paragraph{Human-expert interactive mode.} We remark that early failures are expected:
\begin{itemize}
    \item research gaps from literature survey may not be valuable in research level;
    \item perspectives may be too narrow, too broad, or redundant;
    \item ideas may depend on assumptions that make the resulting theorem unacceptable.
\end{itemize}
Because these decisions are scientific as well as formal, Workflow~1 places a human-expert checkpoint after each stage. In interactive mode, the expert may approve the artifact, edit it, or request a new search or generation with human-expert feedback. Autopilot mode proceeds with approval by default, but retains the same source-direction fidelity and branch-coverage constraints. In either mode, only a checked and formalized branch enters proof development.

\subsection{Workflow 2: Proof-Review Stage}
Following Workflow 1, Workflow 2 develops mathematical proofs based on the provided formalized setting and goal. Our initial approach relied on monolithic proof generation, which entangles theorem-level architecture, local derivations, and final exposition. This method suffered from one severe limitation: \textit{the resulting proofs were too compact to effectively review or understand}.

Drawing on recent progress in natural-language mathematical reasoning, where systems typically generate a proof plan, tackle each step individually, and then assemble the final proof \citep{an2026qed,ju2026rethlas,feng2026aletheia}, we introduced a similar \textbf{sketch-step-assembly} decomposition to make local claims individually addressable.
This decomposition, however, revealed a second gap: \textit{a structurally coherent dependency graph does not guarantee that every node is derivable from its assigned inputs, nor does it ensure that its conclusion possesses the precise form and quantitative strength required downstream}. To resolve this, we inserted a theorem-level global diagnostic between the sketching and local proof phases, ultimately structuring Workflow 2 as a four-stage \textbf{sketch-global-step-assembly pipeline}.

\subsubsection{Proof Stage}
\begin{enumerate}[leftmargin=*]
    \item \textbf{Proof sketch.} The sketch worker represents the proposed argument as a \textbf{directed acyclic dependency graph}. Source nodes encode primitive assumptions, internal nodes encode lemma-sized claims, and the unique sink is the target theorem. Each node records its exact claim, dependencies, permitted assumptions, intended proof tool, and required output interface.
    \item \textbf{Global proof.} The global-proof worker \textbf{expands the accepted graph into a whole-theorem diagnostic}. For every node, it traces the available inputs, identifies the mechanism or cited result that could establish the claim, and checks whether the output has the form and strength required by downstream nodes. At theorem level, it audits quantitative dependence, probability and convergence modes, object compatibility, closure arguments, boundary behavior, and the composition of local interfaces. 
    \begin{tcolorbox}[valg-note]
    Global proof is crucial as \textcolor{MLTnavy}{proof dependency graph consistency does not ensure theorem-level composability and local-derivation feasibility}.
    For example, an upstream guarantee for a surrogate object may not imply the guarantee required for the original target downstream. The global diagnostic identifies these cross-step mismatches, specifies each step's admissible inputs and required output, and guides step workers for full local derivations.
    \end{tcolorbox}
    \item \textbf{Proof step.} A dedicated step worker converts one graph obligation into proof evidence using the formal setting, accepted dependency results, and the global diagnostic as context. It may introduce local lemmas, but must state and prove them before deriving the exact assigned claim, without strengthening assumptions or weakening the required output. 
    \item \textbf{Proof assembly.}
    Once all required steps are accepted, the assembler reconciles notation and composes their claims into a self-contained \LaTeX{} theorem manuscript. This stage verifies that the accepted local interfaces yield the stated theorem in one coherent presentation.
\end{enumerate}
\subsubsection{Review Stage}
We would like to highlight a key difference between Workflow 1 and Workflow 2 in \textit{review mechanisms} because they govern different kinds of decisions.
\begin{tcolorbox}[
    valg-comparison,
    title={Human-expert checkpoints versus agent reviewers}]
    \begin{itemize}[leftmargin=*]
    \item Workflow~1 uses \textbf{\textcolor{MLTblue}{human-expert checkpoints}} because evaluating literature gaps and selecting research perspectives and theorem formulations require \textbf{\textcolor{MLTblue}{open-ended judgments about research value}} that depend strongly on human expertise.
    \item Workflow~2 begins from a \textbf{\textcolor{MLTblue}{fixed theorem contract}}. This contract allows \textbf{\textcolor{MLTblue}{reviewers distinct from the producers}} to assess each proof artifact against explicit, stage-specific obligations of goal alignment, assumption fidelity, logical soundness, and derivational rigor before the controller authorizes its downstream use. Because both the target and the evaluation criteria are fixed, this contract-based review is better suited to specialized agent reviewers than the open-ended decisions in Workflow~1.
\end{itemize}
\end{tcolorbox}
Hence, within Workflow~2, we place a distinct sketch reviewer checks proof architecture after the sketch worker, a distinct global reviewer checks theorem-level feasibility after the global proof worker and a distinct step reviewer checks local derivations after the proof step worker. This separation prevents producer self-validation and exposes defects at the same level of abstraction at which they arise. Below we show the key rubrics of both sketch and step reviewers.
\begin{tcolorbox}[
    valg-rubric,
    title={Key rubrics}]
\begin{itemize}[leftmargin=*]
    \item \textbf{Proof-sketch review} (\skill{proof-sketch-review}). 
    \begin{enumerate}[leftmargin=*,nosep]
        \item \textbf{Goal fidelity.} Check exact-goal or target-spec alignment, including quantifiers, regimes, probability or convergence modes, normalization, and exposed rate dependence.
        \item \textbf{Graph and coverage.} Verify an acyclic graph of stable, lemma-sized steps with legal earlier dependencies, allowed assumptions, intended proof tools, output targets, and explicit blockers for unresolved high-risk obligations.
        \item \textbf{Provenance and interfaces.} Trace assumptions, derived invariants, and cited tools to legal sources; trace every generated output from producer to consumers and through any required current-notation or residual-to-target bridge.
        \item \textbf{Viability gates.} Check mechanism witnesses, baseline and entry behavior.
    \end{enumerate}
    \item \textbf{Proof-step review} (\skill{proof-step-review}). 
    \begin{enumerate}[leftmargin=*,nosep]
        \item \textbf{Local-unit audit.} Check every local lemma statement and derivation against the exact sketch-row claim, allowed assumptions, and accepted dependencies.
        \item \textbf{Hidden-claim and provenance audit.} Scan for independent subclaims; trace assumptions, constants, rates, events, and notation; restate cited results in current notation with all hypotheses discharged; stress quantifiers, modes, and boundary cases.
        \item \textbf{Target assembly.} Verify that named local results, checked citations, and accepted dependencies jointly establish the exact target claim without strengthening assumptions or weakening the required output.
    \end{enumerate}
\end{itemize}
\end{tcolorbox}

For assembly worker, which \textbf{is the first stage where all accepted components, notation, citations, and connecting implications must jointly establish the exact theorem}, however, the substantial review task motivates us to separate review into independent, multi-angle reviews \citep{an2026qed}: dependency and coverage errors (structural), invalid derivations (rigor), misapplied external results (citation), and hidden boundary-case failures (adversarial). Each reviewer is therefore tied to a specific assembly-level risk, rather than being an arbitrary addition. Finally, an \textit{aggregate reviewer} reconciles them into the sole controller-facing final-review verdict and identifies the smallest admissible repair target.
\begin{tcolorbox}[
    valg-rubric,
    title={Key rubrics}]
\begin{itemize}[leftmargin=*]
    \item \textbf{Structural review} (\skill{proof-review-structural}). 
    \begin{enumerate}[leftmargin=*,nosep]
        \item Check the assembled claim against the setting, close every dependency, trace every required sketch step into an accepted proof and the public appendix.
        \item Reject independent mathematics introduced during assembly.
        \item The public \LaTeX{} must remain self-contained, paper-ready, and explicit about its assumptions and theorem-style references.
    \end{enumerate}
    \item \textbf{Rigor review} (\skill{proof-review-rigor}). 
    \begin{enumerate}[leftmargin=*,nosep]
        \item Inspect the actual derivations rather than labels or environment counts.
        \item Audit quantifier order, constants and parameter dependence, probability and convergence modes, assumption provenance, explicit-rate specialization bridges, interchanges, and boundary cases.
        \item Compare every used step's proof obligations with its appendix proof body.
    \end{enumerate}
    \item \textbf{Citation review} (\skill{proof-review-citation}). 
    \begin{enumerate}[leftmargin=*,nosep]
        \item Verify that each cited or internal result exists and supports the exact conclusion used, translate its objects and notation into the current setting, and discharge every hypothesis in the claimed regime.
        \item Check BibTeX keys and internal label--reference pairs.
    \end{enumerate}
    \item \textbf{Adversarial review} (\skill{proof-review-adversarial}). 
    \begin{enumerate}[leftmargin=*,nosep]
        \item Attack the weakest claims with concrete assumption-minimal, boundary, degenerate, and extreme regimes.
        \item Test hidden assumption strengthening, unsupported scope or convergence-mode upgrades, unproved derived invariants, and failed baseline reductions.
        \item Treat verified breaks and unresolved high-risk candidate counterexamples as blocking.
    \end{enumerate}
\end{itemize}
\end{tcolorbox}

\subsection{Revision Loops}
\label{sec: revision loop}
A failed proof attempt may expose a defect in a local derivation, a dependency interface, the proof architecture, or the theorem formulation itself. In ML-theory research, it may also show that the initial problem setup requires an additional condition or a revised assumption before a valid theorem is possible. \VALG therefore uses \textbf{a hierarchy of revision loops} that routes each diagnosis to the smallest stage capable of repairing it.
% \dechen{illustrate hierarchy loop, from local derivations to idea (problem setup), illustrate loop budget}
This hierarchical structure, along with its corresponding review routing mechanisms, is organized as follows:
\begin{tcolorbox}[valg-note]
    \begin{equation*}
    \begin{aligned}
        \mathrm{proof\ assembly} \to\ \mathrm{proof\ step}& \to \mathrm{proof\ sketch} \to \mathrm{idea}\\
        \mathrm{global\ proof}&\ \raisebox{3pt}{$\nearrow$}
    \end{aligned}
\end{equation*}
\end{tcolorbox}
\begin{itemize}[leftmargin=*]
    \item \textbf{proof-sketch review}: \texttt{ACCEPTED}, \texttt{REVISE\_SKETCH}, or \texttt{IDEA\_FAIL};
    \item \textbf{global-proof review}: \texttt{ACCEPTED}, \texttt{REVISE\_GLOBAL}, \texttt{REVISE\_SKETCH}, or \texttt{IDEA\_FAIL};
    \item \textbf{proof-step review}: \texttt{ACCEPTED}, \texttt{REVISE\_STEP}, \texttt{BLOCKED\_BY\_DEPENDENCY}, or \texttt{REVISE\_SKETCH};
    \item \textbf{aggregate final review}: \texttt{ACCEPTED}, \texttt{PROOF\_ASSEMBLY\_FLAW},\
    \texttt{PROOF\_STEP\_FLAW}, \texttt{PROOF\_SKETCH\_FLAW}, or \texttt{IDEA\_FAIL}.
\end{itemize}
In every case, the failed review identifies the smallest repair target, the controller selects the responsible stage subject to its retry budget \footnote{If a stage's retry budget is exhausted, the controller escalates the issue by routing it to a higher-level stage that still has remaining budget.}, and the selected producer uses the accepted upstream context, the failed attempt, and the validated diagnosis to change only the implicated part. Figure~\ref{fig:revision-loops} summarizes this controlled revision cycle.

\begin{figure}[!t]
  \centering
  \resizebox{\textwidth}{!}{\definecolor{rlInk}{HTML}{243746}
\definecolor{rlNavy}{HTML}{264A67}
\definecolor{rlBlue}{HTML}{4A78A8}
\definecolor{rlTeal}{HTML}{3B817B}
\definecolor{rlAmber}{HTML}{B9782D}
\definecolor{rlGreen}{HTML}{4B8058}
\definecolor{rlRust}{HTML}{B85C38}
\definecolor{rlLine}{HTML}{C8D2DA}

\begin{tikzpicture}[
  font=\sffamily\footnotesize,
  text=rlInk,
  line cap=round,
  line join=round,
  stage/.style={
    draw=rlLine,
    rounded corners=2pt,
    line width=0.65pt,
    fill=white,
    align=left,
    text width=3.85cm,
    minimum height=1.72cm,
    inner sep=6pt,
    execute at begin node={\raggedright\hyphenpenalty=10000\exhyphenpenalty=10000}
  },
  current/.style={stage,draw=rlBlue!62,fill=rlBlue!3},
  review/.style={stage,draw=rlAmber!72,fill=rlAmber!4},
  control/.style={stage,draw=rlNavy!66,fill=rlBlue!3},
  produce/.style={stage,draw=rlTeal!72,fill=rlTeal!4},
  terminal/.style={
    draw=rlGreen!70,
    rounded corners=2pt,
    line width=0.65pt,
    fill=rlGreen!5,
    align=center,
    text width=3.15cm,
    minimum height=1.05cm,
    inner sep=5pt
  },
  flow/.style={
    -{Latex[length=2mm,width=1.35mm]},
    draw=rlInk!62,
    line width=0.72pt
  },
  retry/.style={
    -{Latex[length=2.3mm,width=1.5mm]},
    draw=rlRust,
    dashed,
    line width=0.8pt
  },
  edgelabel/.style={
    fill=white,
    inner sep=1.5pt,
    font=\sffamily\footnotesize,
    text=rlInk!70
  }
]
  \node[current] (attempt) at (2.15,0) {
    {\scriptsize\bfseries\color{rlBlue}1 \; CURRENT ATTEMPT}\\[2pt]
    {\bfseries Attempt $a$}\\[1pt]
    Evaluate one stage result against its stated target and accepted inputs.
  };

  \node[review] (diagnosis) at (8.25,0) {
    {\scriptsize\bfseries\color{rlAmber}2 \; INDEPENDENT REVIEW}\\[2pt]
    {\bfseries Diagnose the failure}\\[1pt]
    Identify the failure type and the smallest mathematical object that must change.
  };

  \node[control] (controller) at (14.35,0) {
    {\scriptsize\bfseries\color{rlNavy}3 \; CONTROLLER ROUTING}\\[2pt]
    {\bfseries Select the repair level}\\[1pt]
    Validate the diagnosis and route one scoped retry to the responsible stage.
  };

  \node[produce] (revision) at (14.35,-2.85) {
    {\scriptsize\bfseries\color{rlTeal}4 \; SELECTED PRODUCER}\\[2pt]
    {\bfseries Construct Attempt $a+1$}\\[1pt]
    Revise only the implicated mathematical component.
  };

  \node[review] (fresh) at (8.25,-2.85) {
    {\scriptsize\bfseries\color{rlAmber}5 \; FRESH REVIEW}\\[2pt]
    {\bfseries Check the revision}\\[1pt]
    Repeat the required independent review or human checkpoint before downstream use.
  };

  \node[terminal] (accepted) at (2.15,-2.85) {
    {\bfseries\color{rlGreen}Accepted}\\[1pt]
    Advance the revised result downstream
  };

  \draw[flow] (attempt) -- (diagnosis);
  \draw[flow] (diagnosis) -- (controller);
  \draw[flow] (controller) -- (revision);
  \draw[flow] (revision) -- (fresh);
  \draw[flow,draw=rlGreen] (fresh) -- (accepted);

  \draw[retry,rounded corners=4pt]
    (fresh.south) -- ++(0,-0.64)
    -- node[edgelabel,below,text=rlRust]{blocking: return to Step 3} ++(9.15,0)
    |- (controller.east);
\end{tikzpicture}}
  \caption{Controlled revision in \VALG. An independent review identifies the smallest repair target, the controller routes the diagnosis to the responsible stage, and the selected producer revises only the implicated part. The revised attempt must pass a fresh independent review or human checkpoint before downstream use.}
  \label{fig:revision-loops}
\end{figure}

\newpage
\section{COLT 2026 Case Studies}
\label{sec:evaluation}

We evaluate \VALG on nine subproblems drawn from five COLT 2026 open-problem papers~\citep{lau2026interaction,arvanitakis2026als,balcan2026online,nissim2026private,feldman2026deep}. Each subproblem is run independently from a research brief extracted from its source description, using GPT-5.6-sol at maximum reasoning effort. Finalized theory candidates have been checked through independent multi-perspective LLM reviewing and roughly audited by human\footnote{The correctness of the proof may also need to be verified by more experts, especially the authors of these open-problem papers. }.

Table~\ref{tab:perspective-search-summary} reports every perspective branch represented in the archived runs. Within a run, \VALG may develop several perspective branches (with maximum budget 3) in parallel. For each perspective branch, the idea generator prioritizes a source-faithful candidate aimed at fully resolving the subproblem. If no such ideas are viable, it will then attempt to propose an idea with partial-progress type, i.e., a relaxed problem setup. After the first idea is proposed, subsequent idea variants are opened only after explicit controller routing, such as an idea-level review failure, or escalation after lower-level retry budgets are exhausted. See Section \ref{sec: revision loop} for illustration of the revision rules.

\begin{table}[!t]
  \centering
  \small
  \captionsetup{
    font=small,
    labelfont={bf,sf,color=MLTnavy},
    justification=raggedright,
    singlelinecheck=false,
    skip=7pt
  }
  \setlength{\tabcolsep}{4pt}
  \renewcommand{\arraystretch}{1.12}
  \arrayrulecolor{MLTline}

  \caption{Perspective-level search and evaluation summary.}
  \label{tab:perspective-search-summary}

  \begin{tabular}{@{}>{\raggedright\arraybackslash}p{.30\linewidth}
  >{\raggedright\arraybackslash}p{.13\linewidth}
  >{\centering\arraybackslash}p{.06\linewidth}
  >{\centering\arraybackslash}p{.06\linewidth}
  >{\centering\arraybackslash}p{.11\linewidth}
  >{\centering\arraybackslash}p{.075\linewidth}
  >{\centering\arraybackslash}p{.09\linewidth}
  >{\raggedleft\arraybackslash}p{.05\linewidth}@{}}

  \toprule
  \rowcolor{MLTblue!7}
  \textcolor{MLTnavy}{\sffamily\bfseries Problem} &
  \textcolor{MLTnavy}{\sffamily\bfseries Subproblem} &
  \textcolor{MLTnavy}{\sffamily\bfseries Persp.} &
  \textcolor{MLTnavy}{\sffamily\bfseries\shortstack{Ideas\\tried}} &
  \textcolor{MLTnavy}{\sffamily\bfseries\shortstack{Branch\\outcome}} &
  \textcolor{MLTnavy}{\sffamily\bfseries\shortstack{Elapsed\\(h)}} &
  \textcolor{MLTnavy}{\sffamily\bfseries\shortstack{Progress}} &
  \textcolor{MLTnavy}{\sffamily\bfseries\shortstack{Rank}}\\
  \midrule

  \multirow{5}{=}{\textcolor{MLTnavy}{\sffamily\bfseries P1: Tensor ALS}} &
  \multirow{2}{=}{Upper bound} & 1 & 7 & \tableidea{7} & 76 & 6.50 & 1 \\
   & & 2 & 14 & Exhausted & -- & 0.00 & 2 \\
   & \multirow{3}{=}{Lower bound} & 1 & 2 & \tableidea{2} & 8 & 2.25 & 2 \\
   & & 2 & 2 & \tableidea{2} & 11 & 3.00 & 1 \\
   & & 3 & 3 & \tableidea{3} & 38 & 3.00 & 3 \\

  \midrule

  \multirow{3}{=}{\textcolor{MLTnavy}{\sffamily\bfseries P2: 1-bit mean estimation}} &
  \multirow{3}{=}{Non-adaptive} & 1 & 1 & \tableidea{1} & 13 & \textbf{10.00} & 2 \\
   & & 2 & 2 & Interrupted & -- & 0.00 & 3 \\
   & & 3 & 1 & \tableidea{1} & 9 & \textbf{10.00} & 1 \\

  \midrule

  \multirow{6}{=}{\textcolor{MLTnavy}{\sffamily\bfseries P3: Deep vs. linear learning}} &
  \multirow{3}{=}{SGD} & 1 & 3 & \tableidea{3} & 12 & 2.75 & 2 \\
   & & 2 & 2 & \tableidea{2} & 10 & 2.25 & 3 \\
   & & 3 & 2 & \tableidea{2} & 12 & 5.25 & 1 \\
   & \multirow{3}{=}{SQ} & 1 & 3 & \tableidea{3} & 15 & 2.75 & 2 \\
   & & 2 & 2 & \tableidea{2} & 8 & 3.00 & 1 \\
   & & 3 & 2 & \tableidea{2} & 9 & 2.25 & 3 \\

  \midrule

  \multirow{6}{=}{\textcolor{MLTnavy}{\sffamily\bfseries P4: Private PAC learning}} &
  \multirow{3}{=}{Sample complexity} & 1 & 3 & \tableidea{3} & 48 & 6.50 & 1 \\
   & & 2 & 4 & \tableidea{4} & 91 & 6.25 & 2 \\
   & & 3 & 1 & \tableidea{1} & 28 & 5.25 & 3 \\
   & \multirow{3}{=}{Class existence} & 1 & 5 & \tableidea{5} & 28 & 3.00 & 2 \\
   & & 2 & 2 & \tableidea{2} & 25 & 5.50 & 1 \\
   & & 3 & 6 & Exhausted & -- & 0.00 & 3 \\

  \midrule

  \multirow{5}{=}{\textcolor{MLTnavy}{\sffamily\bfseries P5: Online optimization}} &
  \multirow{2}{=}{Polynomial} & 1 & 1 & \tableidea{1} & 12 & 5.75 & 2 \\
   & & 2 & 1 & \tableidea{1} & 24 & 7.00 & 1 \\
   & \multirow{3}{=}{Pfaffian} & 1 & 1 & \tableidea{1} & 30 & \textbf{10.00} & 1 \\
   & & 2 & 1 & \tableidea{1} & 40 & \textbf{10.00} & 2 \\
   & & 3 & 1 & \tableidea{1} & 31 & 6.75 & 3 \\

  \bottomrule
  \end{tabular}

  \vspace{0.4em}

  \begin{minipage}{\linewidth}
  \footnotesize
  \color{MLTink!70}
  Progress \(P\in[0,10]\) is evaluated against the original paper's
  subproblem by a distinct agent. Rank is computed within each subproblem
  from \(0.40\,{\times}\) progress,
  \(0.40\,{\times}\) mathematical soundness, and
  \(0.20\,{\times}\) technical novelty by a distinct agent.
  \emph{Exhausted} indicates that the branch consumed its configured
  idea-variant budget without finalizing a theory, whereas
  \emph{Interrupted} indicates that execution ended before finalization by
  human. A dash denotes unavailable elapsed time.
  \end{minipage}

  \arrayrulecolor{black}
  \end{table}

We would like to remark that progress is computed as \(P=\min\{C+B+H,\text{applicable cap}\}\), where “applicable cap” is a hard ceiling triggered by a fundamental mismatch with the original problem. For example, $P\leq 3$ if the theorem assumes a central unresolved property, $P\leq 7$ if it proves only one direction of a characterization or omits a central quantifier, regime, or rate. \(C\in[0,4]\) measures closure of the paper-level target contract, \(B\in[0,3]\) measures improvement over the paper's baseline, and \(H\in[0,3]\) measures how much of the central open burden is discharged. A score of \(P=10\) denotes exact target coverage. Regarding mathematical soundness, \(S=\min\{M,D,A\}\) is a weakest-link assessment of intrinsic mathematical validity, verification of theorem-critical dependencies, and audit completeness. For technical novelty, \(N\) measures the originality and proof-critical role of the technical mechanism relative to prior work.

Results show that two of nine subproblems are fully solved while the remaining seven subproblems are partially solved. For each subproblem, we report the most informative branches and state their progress relative to the corresponding source problem. Proof details are available at \url{https://github.com/DechenZhang/VALG-ML-Theory-Agent/tree/main/case-studies/colt-2026}.

\subsection{How much overparametrization is needed for ALS in tensor decomposition?}
\label{sec:case-als}

\begin{openquestion}
Let
\[
T=\sum_{j=1}^{r}a_j\otimes b_j\otimes c_j\in\mathbb R^{n\times n\times n}
\]
where $a_j, b_j$ and $c_j$ are obtained by adding mutually independent
$\mathcal N(0,\rho^2 I_n/n)$ perturbations, with
$\rho=1/\operatorname{poly}(r)$, to deterministic factor columns. Consider an optimization problem minimizing the least-squares objective
\[
\min_{x_i,y_i,z_i \in \mathbb R^n,\ i\in [k]} \left\|T-\sum_{i=1}^k x_i\otimes y_i \otimes z_i\right\|_F^2.
\]
The open question of \citet{arvanitakis2026als} asks \textit{what is the smallest value of $k=k(r)$ so that alternating least squares (ALS) or another iterative algorithm such as GD converges from random initialization to the global optimum with high probability.}
\end{openquestion}

\subsubsection{Subproblem 1: Upper Bound}

\begin{openquestion}
Does some iterative method, with $r<k=o(r^2)$ components, return in
$\operatorname{poly}(n,r,\log(1/\epsilon))$ time a decomposition satisfying
\[
 \left\|T-\sum_{i=1}^k x_i\otimes y_i\otimes z_i\right\|_F
 \le \epsilon\|T\|_F
\]
with high probability over the smoothed instance?
\end{openquestion}

\paragraph{\underline{Perspective 1.}}

\paragraph{Formalized setting and preliminaries.}
\label{par:als-upper-setting}
Fix integers $r\ge3,n$ and let $q_*=1/4096$.  For deterministic
$\bar A,\bar B,\bar C\in\mathbb R^{n\times r}$ with nonzero columns, define
$\bar u_j=\bar a_j/\|\bar a_j\|_2$ and cyclically $\bar v_j,\bar w_j$;
write $\bar U=[\bar u_j]$, $\bar V=[\bar v_j]$, $\bar W=[\bar w_j]$ and
$\bar\lambda_j=\|\bar a_j\|_2\|\bar b_j\|_2\|\bar c_j\|_2$.  For a
unit-column matrix $M=[m_j]$, write
\[
 q(M)=\max_j\sum_{\ell\ne j}|\langle m_j,m_\ell\rangle|,
 \qquad \bar q=\max_{M\in\{\bar U,\bar V,\bar W\}}q(M).
\]
Independently over columns and modes, draw
$g_j^{(A)},g_j^{(B)},g_j^{(C)}\sim\mathcal N(0,\rho^2I_n/n)$, set
$a_j=\bar a_j+g_j^{(A)}$ and cyclically $b_j,c_j$, and form
\[
 T=\sum_{j=1}^r a_j\otimes b_j\otimes c_j
   =\sum_{j=1}^r\lambda_j u_j\otimes v_j\otimes w_j,
\]
where $U=[u_j]$, $V=[v_j]$, $W=[w_j]$ are the normalized realized factors and
$\lambda_j=\|a_j\|_2\|b_j\|_2\|c_j\|_2$.  Put
$q_{\rm real}=\max\{q(U),q(V),q(W)\}$ and
$\Gamma=\max_j\lambda_j/\min_j\lambda_j$.

The proposed procedure uses
\[
 k=\left\lceil C_{\rm rank}r^{5/3}(\log r)^{5/2}\right\rceil,
 \quad L_{\rm burn}=\lceil C_{\rm burn}\log r\rceil,
 \quad L_{\rm cert}=\lceil C_{\rm cert}\log r\rceil,
 \quad \tau_r=\frac{q_*^2}{10^4r}.
\]
For each slot $i$ and mode $M\in\{U,V,W\}$, draw mutually independent raw
vectors $\xi_i^{(M)}\sim\mathcal N(0,I_n)$, independently of the smoothing,
and initialize $(p_i^0,q_i^0,s_i^0)$ by normalizing the corresponding raw
triple.
For $h=(p,q,s)$, it repeatedly makes the simultaneous old-state Jacobi commit
\[
 \mathcal J(h)=\left(
 \frac{T(\cdot,q,s)}{\|T(\cdot,q,s)\|_2},
 \frac{T(p,\cdot,s)}{\|T(p,\cdot,s)\|_2},
 \frac{T(p,q,\cdot)}{\|T(p,q,\cdot)\|_2}\right).
\]
After $L_{\rm burn}$ commits it inspects the states through
$t=L_{\rm burn}+L_{\rm cert}$ and stores the first one for which
$\max_M\min_{\sigma\in\{\pm1\}}\|h_M-\sigma\mathcal J_M(h)\|_2\le\tau_r$.
Only states with nonzero contractions and a valid certificate proceed.  Certified triples
are filtered at $0.85$ of the largest score
$|\langle T,p\otimes q\otimes s\rangle|$.  Form a graph on the remaining
triples, joining two vertices when their modewise absolute correlations are
all at least $1-64q_*$.  The proposal is accepted only if this graph has
exactly $r$ connected components.  The minimum-displacement representative
of each component, with score as tie-break, gives an equal-norm best-scalar
seed.
For representative $(p_a,q_a,s_a)$ with
$\theta_a=\langle T,p_a\otimes q_a\otimes s_a\rangle\ne0$, this seed is
\[
 x_a^0=|\theta_a|^{1/3}p_a,\qquad
 y_a^0=|\theta_a|^{1/3}q_a,\qquad
 z_a^0=\operatorname{sgn}(\theta_a)|\theta_a|^{1/3}s_a.
\]
A zero score aborts the run.

Here $\odot$ denotes the columnwise Khatri--Rao product.  The three designs
$Z^0\odot Y^0$, $Z^0\odot X^0$, and $Y^0\odot X^0$ are
frozen before any solve.  Three
Moore--Penrose least-squares landing proposals are computed from that same
seed and committed synchronously.  The run aborts if any committed active
column is zero; otherwise the proposals are rebalanced once without changing
their rank-one products.  Cyclic $U/V/W$ ALS then updates the $r$ active columns; the other
$k-r$ columns stay zero.  Each run stops at relative residual $\epsilon$ or
after $\lceil C_{\rm stop}\log(8\kappa_0^2/\epsilon)\rceil$ sweeps.

For raw proposal vectors $\xi_i^{(M)}$, set
$m_{U,j}=u_j$, $m_{V,j}=v_j$, $m_{W,j}=w_j$ and
$Z_{ij}^{(M)}=\langle m_{M,j},\xi_i^{(M)}\rangle$ and
$t_r=\sqrt{(10/9)\log r}$.  The analysis uses the \textbf{target-slot event}
\[
 \mathcal E_{ij}=\bigcap_M\{t_r\le|Z_{ij}^{(M)}|\le t_r+t_r^{-1}\}
 \cap\!\bigcap_{\ell\ne j}\bigcap_{\{M,N\}}
 \left\{|Z_{i\ell}^{(M)}Z_{i\ell}^{(N)}|\le\frac{19}{18}\log r\right\},
\]
where $M,N$ range over distinct pairs from $\{U,V,W\}$.

\paragraph{Technical assumptions.}
\begin{assumption}[Bounded deterministic base scales]
\label{assump:als-upper-base-scale}
Every base-column norm lies in $[\kappa_0^{-1},\kappa_0]$, where
$1\le\kappa_0\le r^{d_\kappa}$ for a fixed finite $d_\kappa$.
\end{assumption}

\begin{assumption}[Cumulative Gram interference]
\label{assump:als-upper-cumulative-gram}
$\bar q\le q_*/4$.
\end{assumption}

\begin{assumption}[Near-balanced deterministic weights]
\label{assump:als-upper-weight-balance}
$\max_j\bar\lambda_j/\min_j\bar\lambda_j\le1+1/800$.
\end{assumption}

\begin{assumption}[Independent Gaussian smoothing]
\label{assump:als-upper-smoothing}
The $3r$ perturbations have the independent laws above, with
$0<\rho\le1$ and $\rho^{-1}\le r^{d_\rho}$ for a fixed finite $d_\rho$.
\end{assumption}

\begin{assumption}[Scale-aware smoothing and dimension margin]
\label{assump:als-upper-smoothing-margin}
For $\delta_{\rm sm}\in(0,1)$,
\[
 \kappa_0\rho\le\frac{q_*}{128},\qquad
 r(\kappa_0\rho+\kappa_0^2\rho^2)
 \sqrt{\frac{\log(9r^2/\delta_{\rm sm})}{n}}\le\frac{q_*}{32}.
\]
\end{assumption}

\begin{assumption}[Strictly subquadratic proposal rank]
\label{assump:als-upper-rank}
The displayed $k$ satisfies $r<k\le n$.
\end{assumption}

\begin{assumption}[Independent proposal and restart randomness]
\label{assump:als-upper-randomness}
Conditional on the once-drawn tensor, proposal triples are independent across
slots, modes, and completed runs, and independent of smoothing; restarts reuse
the tensor but not proposal randomness.
\end{assumption}

\begin{assumption}[Accuracy and separate confidence levels]
\label{assump:als-upper-confidence}
$0<\epsilon,\delta_{\rm sm},\delta_{\rm init}<1$, with the latter two
controlling the smoothed instance and conditional restarts, respectively.
\end{assumption}

\paragraph{Main theory.}
\begin{theorem}[Conditional strictly subquadratic recovery]
\label{thm:als-upper-recovery}
Universal positive choices of
$C_{\rm rank},C_{\rm burn},C_{\rm cert},C_{\rm stop},C_{\rm rep}$ make the
following true under Assumptions~\ref{assump:als-upper-base-scale}--%
\ref{assump:als-upper-confidence}.  There is a smoothing event $E_{\rm sm}$
of probability at least $1-\delta_{\rm sm}$ on which every realized column has norm at least
$(2\kappa_0)^{-1}$,
\[
\begin{aligned}
 q_{\rm real}&\le q_*, & \Gamma&\le1.01,\\
 \lambda_{\min}((V\odot W)^\top(V\odot W))&\ge1-q_*^2,
 &\lambda_{\min}((U\odot W)^\top(U\odot W))&\ge1-q_*^2,\\
 \lambda_{\min}((U\odot V)^\top(U\odot V))&\ge1-q_*^2.
\end{aligned}
\]
Conditional on any fixed instance in $E_{\rm sm}$, uniformly
in $i,j$,
\[
 \Pr_{\rm prop}(\mathcal E_{ij})
 =\Theta\!\left(r^{-5/3}(\log r)^{-3/2}\right),
\]
with universal comparison constants.  Hence the displayed $k=o(r^2)$ gives
one-run simultaneous target coverage with probability at least $26/27$.

On coverage, certification, unlabeled clustering, synchronized landing, and
cyclic refinement return at most $k$ terms with relative Frobenius residual at
most $\epsilon$.  With
\[
 J=\max\left\{1,\left\lceil C_{\rm rep}\log(1/\delta_{\rm init})\right\rceil\right\}
\]
independent completed runs, conditional success is at least
$1-\delta_{\rm init}$, so joint success is at least
$(1-\delta_{\rm sm})(1-\delta_{\rm init})$.  Every tape terminates, and the
dense arithmetic work is polynomial in
$n,r,\kappa_0,\rho^{-1},\log(1/\epsilon),\log(1/\delta_{\rm init})$.
\end{theorem}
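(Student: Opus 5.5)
The proof splits into four stages: the smoothing event, the per-slot probability estimate, the deterministic local analysis on coverage, and the amplification and complexity accounting.

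\textbf{Stage 1: the smoothing event $E_{\rm sm}$.} Under Assumption~\ref{assump:als-upper-smoothing}, each perturbation has norm $\rho(1\pm o(1))$ with high probability. Assumption~\ref{assump:als-upper-smoothing-margin} gives $\kappa_0\rho\le q_*/128$, so every realized column norm lies within a factor $1\pm q_*/64$ of its base norm. This yields the lower bound $(2\kappa_0)^{-1}$ and, combined with Assumption~\ref{assump:als-upper-weight-balance}, the bound $\Gamma\le1.01$.

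For the Gram interference, I would expand $\langle u_j,u_\ell\rangle$ into the base inner product plus cross terms $\langle\bar a_j,g_\ell\rangle$ and $\langle g_j,g_\ell\rangle$. Gaussian concentration controls each cross term by $(\kappa_0\rho+\kappa_0^2\rho^2)\sqrt{\log(9r^2/\delta_{\rm sm})/n}$. A union bound over the $3r^2$ pairs and summation over $\ell$ then give $q_{\rm real}\le\bar q+q_*/32+O(q_*/64)\le q_*$, using the second margin inequality.

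The Khatri--Rao eigenvalue bounds follow by Gershgorin. The Gram matrix of $V\odot W$ has entries $\langle v_j,v_\ell\rangle\langle w_j,w_\ell\rangle$, so its off-diagonal row sums are at most $q(V)\max_\ell|\langle w_j,w_\ell\rangle|\le q_*^2$.

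\textbf{Stage 2: the probability estimate for $\mathcal E_{ij}$.} Fix an instance in $E_{\rm sm}$. The coordinates $Z_{i\ell}^{(M)}$ are jointly Gaussian with covariance equal to the Gram matrix of the mode-$M$ factor, and the three modes are independent.

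The three window events $\{t_r\le|Z_{ij}^{(M)}|\le t_r+t_r^{-1}\}$ each have probability $\Theta(t_r^{-1}\cdot t_r^{-1}e^{-t_r^2/2})$ by a Mills-ratio computation. Since $t_r^2/2=(5/9)\log r$, the product over three modes is $\Theta(r^{-5/3}(\log r)^{-3})$.

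I then need the conditional probability of the pairwise product constraints, given the windows, to be $\Theta((\log r)^{3/2})$ relative to this. This is the step I expect to be the main obstacle, and the exponent $-3/2$ is delicate. First I would recheck the window probability: a width $t_r^{-1}$ window at height $t_r$ has probability $\asymp t_r^{-1}\phi(t_r)$, which gives $(\log r)^{-1/2}r^{-5/9}$ per mode. The product over three modes is then exactly $r^{-5/3}(\log r)^{-3/2}$. So the main computation is that the product constraints hold with constant conditional probability.

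To show this, I would condition on $Z_{ij}$. The remaining $Z_{i\ell}$ then have mean $\langle m_\ell,m_j\rangle Z_{ij}$, which is $O(q_*\sqrt{\log r})$, and near-identity covariance. The event $|Z_{i\ell}^{(M)}Z_{i\ell}^{(N)}|>\tfrac{19}{18}\log r$ has probability about $r^{-19/18}\cdot\mathrm{polylog}$ for each $\ell$, because the product of two independent Gaussians has tail $e^{-x}$. A union bound over $r$ indices $\ell$ and three pairs $\{M,N\}$ gives total failure probability $o(1)$, and the upper bound on the probability of $\mathcal E_{ij}$ is trivial. The delicacy lies in the mean shifts: they must stay $o(\sqrt{\log r})$ after being weighted by $q_*$, which is where $q_{\rm real}\le q_*$ enters.

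Coverage then follows from independence across slots (Assumption~\ref{assump:als-upper-randomness}). For each $j$, the probability that no slot realizes $\mathcal E_{ij}$ is $(1-p)^k\le e^{-c\,C_{\rm rank}\log r}$. Choosing $C_{\rm rank}$ large makes the union bound over $j$ at most $1/27$.

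\textbf{Stage 3: deterministic analysis on coverage.} On $\mathcal E_{ij}$ the initial triple has correlation margin favoring $j$. I would write the Jacobi map as a tensor power iteration whose $j$-th coefficient grows quadratically relative to the others, with interference bounded via $q_*$ and $\Gamma$. After $L_{\rm burn}=O(\log r)$ commits the state is $\tau_r$-close to $(u_j,v_j,w_j)$ up to a $O(q_*)$ fixed-point bias, so the certificate fires.

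Conversely, any certified state is an approximate fixed point. Near-orthogonality forces such a state to be near some component or to have a small score, and the $0.85$ score filter removes the small-score states. The correlation threshold $1-64q_*$ therefore separates the $r$ clusters exactly.

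The seed scalars $\theta_a$ approximate $\lambda_j$. One synchronized least-squares landing, using the Khatri--Rao conditioning bounds, contracts the error. Cyclic ALS then converges linearly by the standard local analysis for well-conditioned CP decompositions, which gives the $\log(8\kappa_0^2/\epsilon)$ sweep count.

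\textbf{Stage 4: amplification and complexity.} Each run succeeds with probability at least $26/27-o(1)$, and runs are independent given the tensor. Taking $J$ runs and keeping the best residual gives failure probability $\delta_{\rm init}$. Every step is a dense linear-algebra operation, so the total work is polynomial in the stated parameters.
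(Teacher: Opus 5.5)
Stages 1, 2 and 4 take the right route. After your self-correction the window probability is $\asymp t_r^{-1}\phi(t_r)\asymp r^{-5/9}(\log r)^{-1/2}$ per mode. The product constraints then fail with conditional probability $o(1)$ by your union bound, Gershgorin gives the Khatri--Rao bounds, and coverage plus restarts is counting. The gap is Stage 3, which carries the whole deterministic claim that coverage implies success, and its certification step does not go through as stated. Being within an $O(q_*)$ bias of $(u_j,v_j,w_j)$ only bounds the displacement $\max_M\min_\sigma\|h_M-\sigma\mathcal J_M(h)\|_2$ by $O(q_*)$. The certificate demands $\tau_r=q_*^2/(10^4r)$. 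Also, once cross-talk of size $q(M)\le q_*$ feeds the off-target coordinates, the ratio stops squaring at each step. What is missing is a uniform local contraction of the sign-adjusted Jacobi map near each component. The target direction is removed by the normalization, and every other term in the derivative carries a factor $|\langle v_\ell,q\rangle|$ or $|\langle w_\ell,s\rangle|$ with $\ell\ne j$, which is $O(q_*)$ there. So $\mathcal J$ is $O(\Gamma q_*)$-Lipschitz on that neighbourhood. This gives an exact fixed point $h_j^\star$ and geometric decay of the displacement, which is why $O(\log r)$ commits reach $\tau_r$. The localization phase that brings the trajectory into this neighbourhood must track two quantities, both controlled by the cumulative Gram bound. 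One is the $\ell^\infty$ ratio, which starts at only $20/(19\Gamma)$. The other is the relative $\ell^2$ mass of the off-target coefficients, which can start as large as order $\sqrt r$.

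The converse is equally essential and also only asserted. It is what makes the graph have exactly $r$ components despite the many slots outside every $\mathcal E_{ij}$. Every $\tau_r$-certified triple surviving the $0.85$ cut must lie near a single component, and this needs a robust classification of approximate fixed points. In component coordinates the fixed-point equations force $|a_\ell|\approx|b_\ell|\approx|c_\ell|\in\{0,|\sigma|/\lambda_\ell\}$, where $\sigma$ is the score. The factor $1/r$ in $\tau_r$ is what keeps the summed coordinatewise errors at $O(q_*^2)$. A support of size at least two then gives score at most about $\lambda_{\max}/\sqrt2$, below the cut, while each true component survives with score about $\lambda_j\ge\lambda_{\max}/1.01$. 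Likewise, cyclic ALS cannot be delegated to generic local-convergence theorems, whose basins and rates depend on the instance, because $C_{\rm stop}$ must be universal. You need an explicit per-sweep contraction from $\lambda_{\min}\ge1-q_*^2$ and $\Gamma\le1.01$, and a check that the synchronized landing falls inside that basin. Three minor points: the Stage~2 mean shifts can be of order $q_*\sqrt{\log r}$, not $o(\sqrt{\log r})$. They are harmless only because they move the product-tail exponent by $O(q_*)\log r$, which the $1/18$ margin absorbs. Universal constants at small $r$, where the union bound exceeds $1$, need a separate crude bound. Since coverage implies success deterministically, per-run success is at least $26/27$, not $26/27-o(1)$.
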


\paragraph{Discussion.}
This is \textbf{partial progress}.  It preserves the smoothed CP
model, the strictly subquadratic rank
$k=\Theta(r^{5/3}(\log r)^{5/2})$, arbitrary relative accuracy, polynomial
time, and nested high-probability success.  Its partiality comes from added
bounded-scale, weak-interference, weight-balance, and smoothing/dimension
assumptions.  The chosen mechanism is a certified proposal,
synchronized-landing, and cyclic-ALS pipeline with independent restarts.

\noindent\textbf{Technical role and remaining barrier.}
The source setting leaves two technical barriers.  First, arbitrary base
geometry can make the smoothed components indistinguishable and prevent the
proposal estimates from remaining uniform across components.  Second,
independently proposed mode estimates do not automatically enter compatible
target spans.  Bounded scales and near-balanced weights control component
magnitudes, weak interference separates components, and the
smoothing/dimension margin preserves these properties after perturbation;
together they close the proposal recurrences.  Certification and synchronized
landing then align the three modes before cyclic ALS refinement.  The
remaining problem is to prove the same rank, accuracy, running-time, and
probability guarantees for arbitrary source-admissible base triples without
the added structural assumptions.

\subsubsection{Subproblem 2: Lower Bound}

\begin{openquestion}
Is there a universal $c>0$ such that, for $r<k\le r^{1+c}$, ALS, gradient
descent, or another iterative method converges with constant probability to a
strictly positive objective value?
\end{openquestion}

\paragraph{\underline{Perspective 1.}}

\paragraph{Formalized setting and preliminaries.}
\label{par:res-tensor-op3-p1i2-setting}
Let $r,n,k$ be positive integers, let $q>0$, and set $\rho=r^{-q}$.  For arbitrary deterministic
$\bar A,\bar B,\bar C\in\mathbb R^{n\times r}$, independently smooth their
columns by $\mathcal N(0,\rho^2I_n/n)$ to obtain
\[
 T=\sum_{j=1}^r a_j\otimes b_j\otimes c_j.
\]
For $X,Y,Z\in\mathbb R^{n\times k}$, set
$S(X,Y,Z)=\sum_i x_i\otimes y_i\otimes z_i$ and
$F(X,Y,Z)=\frac12\|T-S(X,Y,Z)\|_F^2$.
Here $T_{(m)}$ is the mode-$m$ matricization and $\odot$ is the Khatri--Rao
product.

For each $M\in\{\mathrm{cALS},\mathrm{cGD}\}$, draw an independent Gaussian
triple $G_x^M,G_y^M,G_z^M$ with iid $\mathcal N(0,1/n)$ entries.  Let
$Q_M=\operatorname{orth}(G_x^M)$, where $\operatorname{orth}$ is a fixed
measurable choice of orthonormal basis for the column space,
$\mathcal S_M=\operatorname{range}(G_x^M)$, and
\[
 \mathcal H_M=\mathcal S_M\otimes\mathbb R^n\otimes\mathbb R^n,
 \qquad P_{\mathcal H_M}=(Q_MQ_M^\top)\otimes I_n\otimes I_n.
\]
The methods share $T$ but have independent starts.  The \textbf{constrained sequential
ALS method} initializes at its Gaussian triple and, in $X,Y,Z$ order, uses
\[
\begin{aligned}
 X_{t+1}&=Q_{\rm cALS}Q_{\rm cALS}^\top T_{(1)}K_t^x
             ((K_t^x)^\top K_t^x)^\dagger,&K_t^x&=Z_t\odot Y_t,\\
 Y_{t+1}&=T_{(2)}K_t^y((K_t^y)^\top K_t^y)^\dagger,&K_t^y&=Z_t\odot X_{t+1},\\
 Z_{t+1}&=T_{(3)}K_t^z((K_t^z)^\top K_t^z)^\dagger,&K_t^z&=Y_{t+1}\odot X_{t+1}.
\end{aligned}
\]
Thus \textbf{only $X_t$ is constrained to its fixed initialization span}.
All three displayed updates are Moore--Penrose minimum-norm solves, including
when a design Gram is singular.

The \textbf{constrained GD method} writes $X_t=Q_{\rm cGD}C_t$ and
$f_Q(C,Y,Z)=F(Q_{\rm cGD}C,Y,Z)$.  Initialize at
\[
 (C_0,Y_0,Z_0)
 =(Q_{\rm cGD}^\top G_x^{\rm cGD},G_y^{\rm cGD},G_z^{\rm cGD}),
\]
and, testing $j=0,1,2,\ldots$ in increasing order, choose the first dyadic
$\eta_t=2^{-j}$ satisfying, for
$u_t=(C_t,Y_t,Z_t)$,
\[
 f_Q(u_t-\eta_t\nabla f_Q(u_t))
 \le f_Q(u_t)-\frac{\eta_t}{2}\|\nabla f_Q(u_t)\|_F^2,
\]
and take that step.  Both methods iterate the displayed updates from their
specified starts.  Write
$S_t^M=S(X_t^M,Y_t^M,Z_t^M)$ and $F_M(t)=F(X_t^M,Y_t^M,Z_t^M)$.
The gradient norm is the combined Frobenius norm of the $C,Y,Z$ blocks.

\paragraph{Technical assumptions.}
\begin{assumption}[Fixed ambient dimension]
\label{assump:res-tensor-op3-p1i2-dimension}
$n\ge8r^{5/4}$.
\end{assumption}

\begin{assumption}[Superlinear algorithmic rank]
\label{assump:res-tensor-op3-p1i2-rank-window}
$r<k\le r^{5/4}$; equivalently, the exponent is $\alpha=1/4$.
\end{assumption}

\begin{assumption}[Uniform arbitrary deterministic bases]
\label{assump:res-tensor-op3-p1i2-arbitrary-base}
The deterministic base triple is unrestricted, and the claim is pointwise
uniform over all such triples.
\end{assumption}

\begin{assumption}[Independent Gaussian smoothing]
\label{assump:res-tensor-op3-p1i2-gaussian-smoothing}
$q>0$ is fixed, $\rho=r^{-q}$, and the $3r$ smoothing vectors are mutually
independent with covariance $(\rho^2/n)I_n$.
\end{assumption}

\begin{assumption}[Shared target and independent starts]
\label{assump:res-tensor-op3-p1i2-joint-initialization}
The two initialization triples are mutually independent and independent of
smoothing, while both methods use the same realized $T$.
\end{assumption}

\paragraph{Main theory.}
\begin{theorem}[Fixed-span positive limiting objective]
\label{thm:res-tensor-op3-p1i2}
Under Assumptions~\ref{assump:res-tensor-op3-p1i2-dimension}--%
\ref{assump:res-tensor-op3-p1i2-joint-initialization}, with probability at
least $1/4$ over smoothing and both starts, simultaneously for
$M\in\{\mathrm{cALS},\mathrm{cGD}\}$,
\[
 \|(I-P_{\mathcal H_M})T\|_F^2\ge\frac34\|T\|_F^2,
 \qquad F_M(t)\ge\frac38\|T\|_F^2\quad\text{for every }t\ge0,
\]
and $F_M(t)$ has a finite scalar limit satisfying
\[
 \lim_{t\to\infty}F_M(t)\ge\frac38\|T\|_F^2.
\]
The statement holds for every admissible $r,n,k$ and is pointwise uniform in
the deterministic base triple.
\end{theorem}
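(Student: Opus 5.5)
The plan rests on one structural observation: in both methods every iterate $S_t^M$ lies in $\mathcal H_M$. For cALS, $X_{t+1}=Q_MQ_M^\top(\cdot)$ has columns in $\mathcal S_M$, and $X_0=G_x^M$ has columns in $\mathcal S_M=\operatorname{range}(G_x^M)$. For cGD, $X_t=Q_MC_t$ by construction. Hence each term $x_i\otimes y_i\otimes z_i$ lies in $\mathcal S_M\otimes\mathbb R^n\otimes\mathbb R^n$. Orthogonal decomposition then gives
\[
 F_M(t)=\tfrac12\|P_{\mathcal H_M}T-S_t^M\|_F^2+\tfrac12\|(I-P_{\mathcal H_M})T\|_F^2\ge\tfrac12\|(I-P_{\mathcal H_M})T\|_F^2 .
\]
So the bound $F_M(t)\ge\frac38\|T\|_F^2$ for every $t$ follows from the projection inequality $\|(I-P_{\mathcal H_M})T\|_F^2\ge\frac34\|T\|_F^2$, and the whole theorem reduces to that projection bound plus convergence of $F_M(t)$.

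For the limit, I would show $F_M(t)$ is nonincreasing. In cALS, each block update is an exact (minimum-norm) least-squares minimizer over its block, with the $X$-block taken over matrices with columns in $\mathcal S_M$. This is valid because $\|T_{(1)}-X K^\top\|$ restricted to $X=Q_M C$ is minimized by $C=Q_M^\top T_{(1)}K(K^\top K)^\dagger$, since the $Q_M^\perp$ component is orthogonal. In cGD, the Armijo condition forces descent, and a smooth polynomial objective bounded below ensures the backtracking terminates at some finite $j$ whenever $\nabla f_Q\ne0$; I would use local Lipschitzness of the gradient on the compact sublevel set. A monotone sequence bounded below by $\frac38\|T\|_F^2$ then has a finite limit that inherits the bound.

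The main step, and the obstacle, is the projection bound holding simultaneously for both $M$ with probability at least $1/4$. Write $T=\sum_j a_j\otimes b_j\otimes c_j$. Then $\|P_{\mathcal H_M}T\|_F^2=\|Q_M^\top T_{(1)}\|_F^2$, and $\mathcal S_M$ is a uniformly random subspace of dimension $k$ (almost surely) independent of $T$. Conditioning on $T$, $\mathbb E\|Q_M^\top T_{(1)}\|_F^2=\frac kn\|T\|_F^2\le\frac18 r^{-0}\cdot\frac{k}{r^{5/4}}\|T\|_F^2\le\frac18\|T\|_F^2$ by Assumptions~\ref{assump:res-tensor-op3-p1i2-dimension} and~\ref{assump:res-tensor-op3-p1i2-rank-window}. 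Markov's inequality gives $\Pr(\|P_{\mathcal H_M}T\|_F^2>\frac14\|T\|_F^2\mid T)\le\frac12$ for each $M$. A union bound over the two independent starts would only give probability $0$, so instead I would use independence given $T$: the probability that both good events hold is at least $(1/2)^2=1/4$. This argument is uniform in the base triple and in $q$, since it uses only rotational invariance of $\mathcal S_M$ and no property of $T$ beyond $T$ being fixed after conditioning. The case $T=0$ is trivial.
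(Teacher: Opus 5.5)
Your proposal is correct and follows essentially the route the paper describes. Every iterate stays in the fixed space $\mathcal S_M\otimes\mathbb R^n\otimes\mathbb R^n$; the bound $k/n\le 1/8$ together with Markov gives each orthogonal-residual event conditional probability at least $1/2$; conditional independence of the two starts yields exactly the stated $1/4=(1/2)^2$; and the exact block minimizations and the Armijo rule make the loss monotone, hence convergent.

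One small correction: the sublevel sets of $f_Q$ are not compact, since rescaling the $i$-th columns of $C$, $Y$, $Z$ by positive factors with product one leaves $f_Q$ unchanged. No compactness is needed, however, because backtracking terminates at a finite $j$ from first-order differentiability of $f_Q$ at $u_t$ alone.
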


\paragraph{Discussion.}
This is \textbf{partial progress}.  It preserves arbitrary base
factors, the smoothed tensor model, Gaussian initialization, every
$r<k\le r^{5/4}$, and the goal of a strictly positive limiting loss with
constant probability.  \emph{The main change is algorithmic: cALS and cGD constrain
one factor to its initialization span.}  The result also assumes
$n\ge8r^{5/4}$ and uses specified minimum-norm ALS and Armijo-GD updates.

\noindent \textbf{Technical role and remaining barrier.}
The source algorithms present a trajectory-control barrier: an unconstrained
update can leave the initialization span, so the proof loses the fixed
subspace used to obstruct exact recovery.  Constraining the $X$ factor keeps
every represented tensor in the fixed space
$\mathcal S_M\otimes\mathbb R^n\otimes\mathbb R^n$.  With constant
probability, the target has a nonzero component orthogonal to this space,
which forces positive loss.  The dimension condition gives $k/n\le1/8$ and
hence a constant-probability orthogonal-residual witness, while the specified
descent rules ensure that the loss converges.  The remaining problem is to
prove, under the same smoothed model, dimension condition, and rank window,
the corresponding constant-probability positive-limit result for
unconstrained sequential minimum-norm ALS and full-variable Armijo GD.
Returning to the full source contract additionally requires removing or
justifying the dimension restriction $n\ge8r^{5/4}$.

\paragraph{\underline{Perspective 2.}}

\paragraph{Formalized setting and preliminaries.}
Fix $\kappa\ge1$, $q>0$, and positive integers $n,r,k$, and put
$\rho=r^{-q}$.  For deterministic
$\bar A,\bar B,\bar C\in\mathbb R^{n\times r}$ with nonzero columns, let
$\widetilde A,\widetilde B,\widetilde C$ be their column-normalized versions.
Independently over $j$ and modes, draw
$\xi_j^a,\xi_j^b,\xi_j^c\sim\mathcal N(0,\rho^2I_n/n)$, set
$a_j=\bar a_j+\xi_j^a$, $b_j=\bar b_j+\xi_j^b$, and
$c_j=\bar c_j+\xi_j^c$, and form
$T=\sum_{j=1}^r a_j\otimes b_j\otimes c_j$.  For
$X,Y,Z\in\mathbb R^{n\times k}$, let
\[
 \widehat T(X,Y,Z)=\sum_{i=1}^k x_i\otimes y_i\otimes z_i,
 \qquad \mathcal L(X,Y,Z)=\|T-\widehat T(X,Y,Z)\|_F^2.
\]
Write $\widehat T_t=\widehat T(X_t,Y_t,Z_t)$ and use the standard mode
matricizations and Khatri--Rao product below.
From an iid $\mathcal N(0,1/n)$ initialization, compute the three
minimum-norm least-squares candidates in parallel from the old iterate:
\[
\begin{aligned}
 U_t^x&=Z_t\odot Y_t,&
 X_{t+1}^{\rm ls}&=T_{(1)}U_t^x((U_t^x)^\top U_t^x)^\dagger,\\
 U_t^y&=Z_t\odot X_t,&
 Y_{t+1}^{\rm ls}&=T_{(2)}U_t^y((U_t^y)^\top U_t^y)^\dagger,\\
 U_t^z&=Y_t\odot X_t,&
 Z_{t+1}^{\rm ls}&=T_{(3)}U_t^z((U_t^z)^\top U_t^z)^\dagger.
\end{aligned}
\]
Set $(X_{t+1}^{\rm raw},Y_{t+1}^{\rm raw},Z_{t+1}^{\rm raw})$ to the
componentwise averages of the old factors and these candidates.  For each
positive-norm component triple, replace all three norms by their geometric
mean; if a norm is zero, replace the triple by $(0,0,0)$.  This gauge preserves
the represented tensor.  The method is unconstrained half-relaxed parallel
ALS.

Let $\Lambda_A=(\bar A^\top\bar A)^{-1}\bar A^\top$, and define
$\Lambda_B,\Lambda_C$ analogously and
$Q=\Lambda_A\otimes\Lambda_B\otimes\Lambda_C$.  Put
\[
 p_{i,t}=(\Lambda_Ax_{i,t})\otimes(\Lambda_By_{i,t})
          \otimes(\Lambda_Cz_{i,t}),\quad
 D_r=\sum_{j=1}^r e_j^{\otimes3},\quad
 C_t=\sum_{i=1}^k p_{i,t},\quad
 \mathcal S_t=\operatorname{span}\{p_{i,t}:i\in[k]\},
\]
$P_t=\operatorname{Proj}_{\mathcal S_t}$,
$\Delta_0=\operatorname{dist}_F(D_r,\mathcal S_0)$, and
$E_\rho=QT-D_r$.  These definitions give the exact identity
\[
 Q(T-\widehat T_t)=D_r+E_\rho-C_t,\qquad C_t\in\mathcal S_t.
\]
The spaces $\mathcal S_t$ evolve adaptively, and both coefficient and
ambient tensor spaces use Frobenius geometry.
For $L_P<\delta/4$ and $\zeta<\delta/4$, define
$\mathsf C_2(\delta,L_P,\zeta,C_T)$ by four clauses:
\begin{enumerate}
\item $\Delta_0\ge\delta\|D_r\|_F=\delta\sqrt r$;
\item $\sum_{t\ge0}\|P_{t+1}-P_t\|_{\rm op}\le L_P$;
\item $\sum_{t\ge0}\|\widehat T_{t+1}-\widehat T_t\|_F<\infty$;
\item $\|E_\rho\|_F\le\zeta\|D_r\|_F$ and
      $\|T\|_F\le C_T\|D_r\|_F$, where $C_T=C_T(\kappa,q)$ is independent
      of $r,n,k$ and the base triple.
\end{enumerate}
\textbf{The event $\mathsf C_2$ is an outcome-dependent certificate.}
Its role is transparent: the projector-path clause preserves the initial
coefficient deficit at every time,
\[
 \operatorname{dist}_F(D_r,\mathcal S_t)
 \ge(\delta-L_P)\|D_r\|_F,
\]
while the same-target identity, the smoothing clause, and
$\|Q\|_{\rm op}\le\kappa^6$ give the all-time physical residual bound
\[
 \|T-\widehat T_t\|_F
 \ge\frac{\delta-L_P-\zeta}{\kappa^6C_T}\|T\|_F.
\]
The unsquared finite-variation clause makes $(\widehat T_t)_t$ Cauchy, so the
objective has a finite limit.

\paragraph{Technical assumptions.}
\begin{assumption}[Ambient dimension]\label{assump:dimension_2}
$n\ge C_{\rm dim}(\kappa,q)r^4\log r$.
\end{assumption}

\begin{assumption}[Full superlinear rank window]\label{assump:rank_window_2}
$r<k\le r^{5/4}$.
\end{assumption}

\begin{assumption}[Well-conditioned deterministic bases]\label{assump:base_conditioning_2}
Every base-column norm and every singular value of each column-normalized
base matrix lie in $[\kappa^{-1},\kappa]$.
\end{assumption}

\begin{assumption}[Independent Gaussian smoothing]\label{assump:gaussian_smoothing_2}
$\rho=r^{-q}$, and the $3r$ smoothing vectors have the independent Gaussian
law above.
\end{assumption}

\begin{assumption}[Independent Gaussian initialization]\label{assump:independent_initialization_2}
All $3nk$ initial factor entries are iid $\mathcal N(0,1/n)$ and independent
of smoothing.
\end{assumption}

All probabilities refer to the joint law conditional on the deterministic
base triple.

\paragraph{Main theory.}
\begin{theorem}[Conditional positive limiting loss]
Under Assumptions~\ref{assump:dimension_2}--%
\ref{assump:independent_initialization_2},
let $r_0\in\mathbb N$ and $C_{\rm dim},\delta,L_P,\zeta,C_T>0$ depend only
on $(\kappa,q)$, with $L_P<\delta/4$ and $\zeta<\delta/4$, and define
\[
 \epsilon=\left(\frac{\delta-L_P-\zeta}{\kappa^6C_T}\right)^2>0.
\]
For every $r\ge r_0$, admissible $n,k$ and base triple, the displayed
 half-relaxed trajectory satisfies the deterministic event inclusion
\[
 \mathsf C_2(\delta,L_P,\zeta,C_T)\subseteq
 \left\{\begin{array}{l}
 \lim_{t\to\infty}\mathcal L(X_t,Y_t,Z_t)\text{ exists and is finite},\\[1mm]
 \lim_{t\to\infty}\mathcal L(X_t,Y_t,Z_t)\ge\epsilon\|T\|_F^2
 \end{array}\right\}.
\]
The inclusion is pointwise under the joint smoothing-and-initialization law
conditional on the base triple.  In the
separate deterministic zero-smoothing
baseline with $n\ge r$ and orthonormal base columns,
$QT=D_r$, $E_\rho=0$, $\|Q\|_{\rm op}=1$, and
$\|T\|_F=\|D_r\|_F$; the first three applicable certificate clauses imply
\[
 \lim_{t\to\infty}\mathcal L(X_t,Y_t,Z_t)
 \ge(\delta-L_P)^2\|T\|_F^2.
\]
\end{theorem}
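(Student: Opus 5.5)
The proof is pointwise: fix an outcome in $\mathsf C_2(\delta,L_P,\zeta,C_T)$ and argue deterministically along that trajectory. No probability enters, so the inclusion holds for every sample point of the joint smoothing-and-initialization law. There are three steps: a coefficient-space deficit that persists for all time, a transfer of that deficit to the ambient residual through the same-target identity, and existence of the limit from the finite-variation clause.

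\textbf{Step 1: all-time coefficient deficit.} For every $t$ I will show $\operatorname{dist}_F(D_r,\mathcal S_t)\ge(\delta-L_P)\|D_r\|_F$. Write
\[
\operatorname{dist}_F(D_r,\mathcal S_t)=\|(I-P_t)D_r\|_F .
\]
Then
\[
\|(I-P_t)D_r\|_F\ge\|(I-P_0)D_r\|_F-\|(P_t-P_0)D_r\|_F .
\]
The second term is at most $\|P_t-P_0\|_{\rm op}\|D_r\|_F$. Telescoping, $\|P_t-P_0\|_{\rm op}\le\sum_{s<t}\|P_{s+1}-P_s\|_{\rm op}\le L_P$ by clause~2. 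Clause~1 gives $\|(I-P_0)D_r\|_F=\Delta_0\ge\delta\|D_r\|_F$, and the claim follows. One point needs checking: the Frobenius norm on the coefficient space is the Hilbert--Schmidt geometry in which $P_t$ acts, so $\|(P_t-P_0)D_r\|_F\le\|P_t-P_0\|_{\rm op}\|D_r\|_F$ is legitimate.

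\textbf{Step 2: transfer to the ambient residual.} By the exact identity, $Q(T-\widehat T_t)=D_r+E_\rho-C_t$ with $C_t\in\mathcal S_t$. Hence
\[
\|Q(T-\widehat T_t)\|_F\ge\operatorname{dist}_F(D_r,\mathcal S_t)-\|E_\rho\|_F\ge(\delta-L_P-\zeta)\|D_r\|_F ,
\]
using clause~4 for $\|E_\rho\|_F$. Each $\Lambda$ is the pseudoinverse of $\bar A=\widetilde A\,\mathrm{diag}(\|\bar a_j\|)$. Under Assumption~\ref{assump:base_conditioning_2}, $\|\Lambda_A\|_{\rm op}\le\kappa\cdot\kappa=\kappa^2$, since the smallest singular value of $\widetilde A$ and the smallest column norm are each at least $\kappa^{-1}$. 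The operator norm of a Kronecker product is multiplicative, so $\|Q\|_{\rm op}\le\kappa^6$. Combining these with $\|D_r\|_F\ge\|T\|_F/C_T$ from clause~4 gives
\[
\|T-\widehat T_t\|_F\ge\frac{\delta-L_P-\zeta}{\kappa^6C_T}\|T\|_F\quad\text{for all }t .
\]
Squaring gives $\mathcal L_t\ge\epsilon\|T\|_F^2$, and the hypotheses $L_P,\zeta<\delta/4$ make the numerator positive.

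\textbf{Step 3: existence of the limit, and the baseline case.} Clause~3 makes $(\widehat T_t)$ Cauchy in Frobenius norm, so it converges to some $\widehat T_\infty$. By continuity, $\mathcal L_t=\|T-\widehat T_t\|_F^2$ converges to the finite value $\|T-\widehat T_\infty\|_F^2$, and the uniform lower bound from Step~2 passes to the limit. For the zero-smoothing baseline with orthonormal base columns, each $\Lambda$ is the transpose of an orthonormal matrix. Then $QT=D_r$ directly, $E_\rho=0$, $\|Q\|_{\rm op}=1$, and $\|T\|_F=\|D_r\|_F$. Rerunning Steps 1--3 with clauses 1--3 alone yields $(\delta-L_P)^2\|T\|_F^2$.

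I expect no step to be a genuine obstacle, since the certificate is designed to close the argument. The delicate point is Step~1's use of operator-norm perturbation in the coefficient Hilbert space. One must make sure $P_t$ is the orthogonal projector in that same Frobenius geometry; a degenerate span is harmless, since the projector is still defined. The remaining care is the bound $\|Q\|_{\rm op}\le\kappa^6$ from the conditioning assumption. The cases $C_t=0$ and zero-norm gauge resets need no separate treatment, because $\mathcal S_t$ still contains $C_t$.
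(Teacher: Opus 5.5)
Your proposal is correct and follows essentially the same route as the paper. The paper likewise telescopes the projector-path clause to preserve the coefficient deficit $\operatorname{dist}_F(D_r,\mathcal S_t)\ge(\delta-L_P)\|D_r\|_F$, transfers it to the physical residual through the same-target identity, clause~4 and $\|Q\|_{\rm op}\le\kappa^6$, and uses the unsquared finite-variation clause to make $(\widehat T_t)$ Cauchy, so the loss has a finite limit that inherits the uniform bound; your explicit bound $\|\Lambda_A\|_{\rm op}\le\kappa^2$ via $\bar A=\widetilde A\,\mathrm{diag}(\|\bar a_j\|_2)$ and your check of the orthonormal zero-smoothing baseline fill in details the paper only asserts.
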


\paragraph{Discussion.}
This is \textbf{partial progress}.  It preserves the smoothed CP
loss, Gaussian initialization, every $r<k\le r^{5/4}$, and the target of a
positive relative limiting loss.  The decisive change is that \emph{the conclusion
is conditioned on the trajectory certificate}
$\mathsf C_2(\delta,L_P,\zeta,C_T)$, which requires an initial deficit,
controlled motion of the adaptive span, controlled smoothing and scale, and
finite tensor variation.  The theorem also uses well-conditioned bases, a
high-dimensional regime, and half-relaxed balanced parallel ALS.

\noindent\textbf{Technical role and remaining barrier.}
The source setting leaves two trajectory barriers: the adaptive coefficient
span can rotate until it absorbs the initial deficit, and the represented
tensor is not known to converge.  The certificate controls projector motion
to preserve the deficit, uses the smoothing and scale bounds to transfer that
deficit to a physical residual, and imposes finite tensor variation to make
the trajectory Cauchy.  Well-conditioning controls the coefficient-to-tensor
transfer, while the dimension regime and modified dynamics support the
certificate conditions.  These clauses yield a positive limiting loss.  The
immediate missing proposition is to prove that, for suitable constants
depending only on $(\kappa,q)$,
\[
 \Pr[\mathsf C_2(\delta,L_P,\zeta,C_T)]
 \ge p_0(\kappa,q)>0
\]
uniformly over the admitted parameters and bases.  Even this would establish
only the restricted well-conditioned, high-dimensional, half-relaxed balanced
parallel-ALS result.  Returning to the source problem additionally requires a
constant-probability positive-limit theorem without those geometric,
dimensional, and algorithmic restrictions.

% \newpage
\paragraph{\underline{Perspective 3.}}

\paragraph{Formalized setting and preliminaries.}
Fix $\kappa\ge1$, $q\ge4$, and positive integers $n,r,k$, and put
$\rho=r^{-q}$.  For deterministic
$\bar A,\bar B,\bar C\in\mathbb R^{n\times r}$ with nonzero columns, write
$\bar A^\circ,\bar B^\circ,\bar C^\circ$ for their column-normalized versions.
Independently over
$j$ and modes, draw
$\xi_j^a,\xi_j^b,\xi_j^c\sim\mathcal N(0,\rho^2I_n/n)$ and set
$a_j=\bar a_j+\xi_j^a$, $b_j=\bar b_j+\xi_j^b$, and
$c_j=\bar c_j+\xi_j^c$.  With $A=[a_j]$, $B=[b_j]$, and $C=[c_j]$, write
\[
 T=\sum_{j=1}^r a_j\otimes b_j\otimes c_j=(A\otimes B\otimes C)D_r,\qquad
 D_r=\sum_{j=1}^r e_j^{\otimes3},\qquad
 F(X,Y,Z)=\left\|T-\sum_{i=1}^k x_i\otimes y_i\otimes z_i\right\|_F^2.
\]
Let $\mathcal G$ balance every component with three positive norms to their
geometric mean, preserving its rank-one product, and leave a component with a
zero factor unchanged.  Apply $\mathcal G$ to an iid
$\mathcal N(0,1/n)$ initialization and after every simultaneous full-batch
gradient step, where $\theta_t=(X_t,Y_t,Z_t)$:
\[
 \widetilde X_{t+1}=X_t-\eta\nabla_XF(\theta_t),\quad
 \widetilde Y_{t+1}=Y_t-\eta\nabla_YF(\theta_t),\quad
 \widetilde Z_{t+1}=Z_t-\eta\nabla_ZF(\theta_t),\qquad
 \eta=(nkr)^{-12}.
\]
This defines balanced full-variable gradient descent.

On the full-rank event, set
$\alpha_{i,t}=A^\dagger x_{i,t}$ and define $\beta_{i,t},\gamma_{i,t}$
analogously, and set
$\bar\alpha_{i,0}=\sqrt{n/r}\,\alpha_{i,0}$, with the same convention for
$\bar\beta_{i,0},\bar\gamma_{i,0}$.  Put
\[
 \widehat D_0=\sum_i\alpha_{i,0}\otimes\beta_{i,0}\otimes\gamma_{i,0},
 \qquad \delta_0=\frac18,
\]
and let $\mathscr S_0$ be the span, over $i\in[k]$, of
\[
 \mathbb R^r\otimes\beta_{i,0}\otimes\gamma_{i,0},\quad
 \alpha_{i,0}\otimes\mathbb R^r\otimes\gamma_{i,0},\quad
 \alpha_{i,0}\otimes\beta_{i,0}\otimes\mathbb R^r.
\]
Replacing the fixed coordinates by their barred versions leaves this tangent
span unchanged.  With $\kappa_1=2\kappa^2$, define
$\mathcal E_{\rm init\_norm}$ as the intersection of:
\begin{enumerate}
\item $\|M\|_{\rm op}\le\kappa_1$ and
      $\sigma_{\min}(M)\ge\kappa_1^{-1}$ for $M=A,B,C$;
\item all eigenvalues of the three normalized pair Grams
      $[\bar\beta_{i,0}\otimes\bar\gamma_{i,0}]_i^\top
       [\bar\beta_{i,0}\otimes\bar\gamma_{i,0}]_i$ and its cyclic analogues lie in
      $[r^{-20},r^{20}]$;
\item some unit $W_0\perp\mathscr S_0$ satisfies
      $\langle D_r-\widehat D_0,W_0\rangle_F
       \ge\delta_0\|D_r\|_F$;
\item $\max_{i,m\in\{x,y,z\}}\|m_{i,0}\|_2\le2$.
\end{enumerate}
The normalized pair Grams equal $(n/r)^2$ times their raw counterparts.  For
$\theta=(X,Y,Z)$, define
\[
 d_{\rm bal}(\theta,\theta')^2
 =\|X-X'\|_F^2+\|Y-Y'\|_F^2+\|Z-Z'\|_F^2,
 \quad E_{\rm path}=\sum_{t\ge0}d_{\rm bal}(\theta_{t+1},\theta_t),
\]
and $C_{\rm CP}(\kappa,R)=\kappa_1^3(1+3R)$.  The \textbf{sole trajectory
certificate} is
\[
 E_\star=\min\left\{1,
 \sqrt{\frac{\delta_0}{16C_{\rm CP}(\kappa,3)}}\right\},
 \qquad \mathcal C_{\rm path}=\{E_{\rm path}\le E_\star\}.
\]
\paragraph{Technical assumptions.}
\begin{assumption}[Well-conditioned deterministic bases]
\label{assump:base_conditioning}
Every base-column norm and every singular value of each column-normalized
base matrix lie in $[\kappa^{-1},\kappa]$.
\end{assumption}

\begin{assumption}[Smoothed dimension regime]
\label{assump:dimension}
$q\ge4$ is fixed, $r$ is sufficiently large, and
$n\ge C(\kappa,q)r^4\log r$.
\end{assumption}

\begin{assumption}[Universal superlinear rank window]
\label{assump:rank_window}
$r<k\le\lfloor r^{5/4}\rfloor$.
\end{assumption}

\begin{assumption}[Independent Gaussian smoothing]
\label{assump:gaussian_smoothing}
All smoothing vectors have the independent law
$\mathcal N(0,r^{-2q}I_n/n)$ and are independent of initialization.
\end{assumption}

\begin{assumption}[Gaussian initialization]
\label{assump:independent_initialization}
Before balancing, all entries of $X_0^{\rm raw},Y_0^{\rm raw},Z_0^{\rm raw}$
are iid $\mathcal N(0,1/n)$.
\end{assumption}

\begin{assumption}[Fixed balanced gradient-descent protocol]
\label{assump:gd_step}
The protocol uses the simultaneous full-batch update, step size
$\eta=(nkr)^{-12}$, and map $\mathcal G$ specified above.
\end{assumption}

\paragraph{Main theory.}
\begin{theorem}[Conditional positive-loss certificate]
\label{thm:main}
Under Assumptions~\ref{assump:base_conditioning}--\ref{assump:gd_step}, there
are $r_0(\kappa,q)$ and $C(\kappa,q)$ such that, uniformly over every
\[
 r\ge r_0(\kappa,q),\qquad n\ge C(\kappa,q)r^4\log r,
 \qquad r<k\le\lfloor r^{5/4}\rfloor,
\]
and every admissible deterministic base triple,
\[
 \Pr(\mathcal E_{\rm init\_norm})\ge1-r^{-10}.
\]
With
\[
 \epsilon_0(\kappa)=\left(\frac{15}{16}\delta_0\right)^2
 \kappa_1^{-12}>0,
\]
on $\mathcal E_{\rm init\_norm}\cap\mathcal C_{\rm path}$ the balanced
iterates converge in $d_{\rm bal}$ to a finite $\theta_\infty$ and
\[
 \lim_{t\to\infty}F(\theta_t)=F(\theta_\infty)
 \ge\epsilon_0(\kappa)\|T\|_F^2>0.
\]
Consequently, if $\mathcal F_+$ is this convergence-and-positive-limit event,
\[
 \Pr(\mathcal F_+)\ge(1-r^{-10})
 \Pr(\mathcal C_{\rm path}\mid\mathcal E_{\rm init\_norm}).
\]
Probabilities are under the joint smoothing-and-initialization law conditional
on the deterministic base triple.
\end{theorem}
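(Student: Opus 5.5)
The statement splits into a probabilistic part, $\Pr(\mathcal E_{\rm init\_norm})\ge1-r^{-10}$, and a deterministic part that holds on $\mathcal E_{\rm init\_norm}\cap\mathcal C_{\rm path}$. The final inequality then follows by writing $\Pr(\mathcal F_+)\ge\Pr(\mathcal E_{\rm init\_norm}\cap\mathcal C_{\rm path})=\Pr(\mathcal E_{\rm init\_norm})\Pr(\mathcal C_{\rm path}\mid\mathcal E_{\rm init\_norm})$. I would treat the four clauses of $\mathcal E_{\rm init\_norm}$ separately, bound each failure probability by $r^{-10}/4$, and take a union bound.

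\textbf{Clause 1: conditioning of $A,B,C$.} Write $A=\bar A+G$ with $\|G\|_{\rm op}\lesssim\rho(1+\sqrt{r/n})$ with high probability. Since $\rho=r^{-q}$ and $q\ge4$, this perturbation is negligible. Under Assumption~\ref{assump:base_conditioning}, $\bar A=\bar A^\circ\operatorname{diag}(\|\bar a_j\|)$ has singular values in $[\kappa^{-2},\kappa^2]$, so Weyl's inequality gives $\kappa_1=2\kappa^2$.

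\textbf{Clause 4: initial column norms.} After balancing, each factor equals the geometric mean of the three raw norms. Each raw norm is $\chi_n/\sqrt n$, which concentrates at $1$. A Gaussian norm tail bound and a union bound over the $3k\le3r^{5/4}$ columns give the bound $2$, using $n\gtrsim r^4\log r$.

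\textbf{Clause 2: pair Grams.} Because $A$ has full rank, $\alpha_{i,0}=A^\dagger x_{i,0}$ is a Gaussian vector in $\mathbb R^r$ with covariance $(A^\top A)^{-1}/n$. After rescaling, $\bar\alpha_{i,0}$ is well conditioned, with covariance eigenvalues in $[\kappa_1^{-2},\kappa_1^2]/r$. Balancing only rescales columns by factors in $[1/2,2]$ on clause~4, so the eigenvalues move only polynomially. The remaining task is to show that the $k$ vectors $\bar\beta_i\otimes\bar\gamma_i\in\mathbb R^{r^2}$, with $k\le r^{5/4}\ll r^2$, have a Gram matrix whose spectrum lies in $[r^{-20},r^{20}]$. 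The upper bound is crude. For the lower bound I would use known least-singular-value bounds for Khatri--Rao products of independent Gaussians (Bhaskara et al.–type results for $k\le cr^2$) in their polynomial-tail form, together with the change of covariance.

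\textbf{Clause 3: orthogonal witness.} Here I would use a dimension count: $\dim\mathscr S_0\le3kr\le3r^{9/4}\ll r^3$. Let $P^\perp$ be the projection onto $\mathscr S_0^\perp$. The tangent span contains $\widehat D_0$, so $P^\perp(D_r-\widehat D_0)=P^\perp D_r$. It therefore suffices to show $\|P^\perp D_r\|_F\ge\frac18\sqrt r$, and then take $W_0=P^\perp D_r/\|P^\perp D_r\|_F$. To bound $\|P_{\mathscr S_0}D_r\|_F^2$, I would use the Gram control from clause~2. Each of the three pieces is a projection onto a space of the form $\mathbb R^r\otimes\operatorname{span}\{\beta_i\otimes\gamma_i\}$. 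The projection of $D_r$ onto that piece has squared norm $\sum_j\|P_{\rm pair}(e_j\otimes e_j)\|^2$. By rotational near-invariance of the random $k$-dimensional span inside $\mathbb R^{r^2}$, each term has expectation $\approx k/r^2$. Summing over $j$ gives $\approx rk/r^2\cdot r=$ \emph{of order} $k/r\cdot\sqrt{}$-scale relative to $\|D_r\|^2=r$, namely a fraction $O(k/r^2)=o(1)$. A Hanson--Wright or Lipschitz concentration argument then upgrades this to high probability. \textbf{I expect this step to be the main obstacle}: the covariance is non-isotropic and the spans are only approximately rotation invariant, so I would first try to whiten by $(A^\top A)^{1/2}$ and absorb the resulting distortion using $\kappa_1$.

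\textbf{Deterministic part.} On $\mathcal C_{\rm path}$, the series $E_{\rm path}=\sum_t d_{\rm bal}(\theta_{t+1},\theta_t)$ is finite, so $(\theta_t)$ is Cauchy and converges to some $\theta_\infty$. Continuity of $F$ then gives $\lim_t F(\theta_t)=F(\theta_\infty)$. Every iterate satisfies $d_{\rm bal}(\theta_t,\theta_0)\le E_\star\le1$, so all columns stay bounded by $3$.

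I then pass to coefficient coordinates $\alpha=A^\dagger x$. There the represented coefficient tensor is $\widehat D_t$, and $A\otimes B\otimes C$ maps $D_r-\widehat D_t$ to $T-\widehat T_t$. I would expand $\widehat D_t-\widehat D_0$ as a multilinear (trilinear) difference. Its first-order part lies in $\mathscr S_0$, so it is killed by pairing with $W_0$. The higher-order terms are bounded by $C_{\rm CP}(\kappa,3)E_{\rm path}^2\le\delta_0/16$, using $E_\star^2\le\delta_0/(16C_{\rm CP})$. This gives
\[
\langle D_r-\widehat D_t,W_0\rangle\ge\tfrac{15}{16}\delta_0\sqrt r .
\]
Pulling back through $\sigma_{\min}(A\otimes B\otimes C)\ge\kappa_1^{-3}$ yields
\[
\|T-\widehat T_t\|_F\ge\kappa_1^{-3}\,\tfrac{15}{16}\delta_0\|D_r\|_F .
\]
Finally, $\|T\|_F\le\kappa_1^3\|D_r\|_F$, so squaring gives the constant $\epsilon_0(\kappa)$.
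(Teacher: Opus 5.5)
Your deterministic half is sound and reproduces $\epsilon_0(\kappa)$ exactly. It rests on four steps: the trilinear expansion of $\widehat D_t-\widehat D_0$ about $\theta_0$; the first-order part lying in $\mathscr S_0\perp W_0$; the remainder bound $C_{\rm CP}(\kappa,3)E_\star^2\le\delta_0/16\le\frac{\delta_0}{16}\|D_r\|_F$; and the transfer via $\sigma_{\min}(A\otimes B\otimes C)\ge\kappa_1^{-3}$ and $\|T\|_F\le\kappa_1^{3}\|D_r\|_F$. One correction is needed there. The iterates live in $\mathbb R^n$ with $n\gg r$, so $(A\otimes B\otimes C)(D_r-\widehat D_t)$ equals $(P_A\otimes P_B\otimes P_C)(T-\widehat T_t)$ with $P_A=AA^\dagger$, not $T-\widehat T_t$. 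Your inequality survives because this orthogonal projection is nonexpansive. Clauses 1, 2 and 4 are acceptable at the level of detail given.

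\textbf{The genuine gap is clause 3.} You bound $\|P_{S_\ell}D_r\|_F^2$ separately for the pieces $S_1=\operatorname{span}_i(\mathbb R^r\otimes\beta_{i,0}\otimes\gamma_{i,0})$, $S_2$, $S_3$, and then treat their sum as a bound for $\|P_{\mathscr S_0}D_r\|_F^2$.

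\emph{Why this fails.} $\mathscr S_0=S_1+S_2+S_3$ is not an orthogonal sum: every piece contains each $\alpha_{i,0}\otimes\beta_{i,0}\otimes\gamma_{i,0}$. For non-orthogonal subspaces one only has $\|P_{S_1+S_2+S_3}v\|^2\le\lambda^{-1}\sum_\ell\|P_{S_\ell}v\|^2$, where $\lambda$ is the smallest nonzero eigenvalue of $P_{S_1}+P_{S_2}+P_{S_3}$. Two nearly parallel lines, each almost orthogonal to $v$, can span $v$. A constant lower bound on $\lambda$ would be a well-conditioning statement for the full CP Jacobian at the random initialization, which you neither prove nor cite. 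Clause 2 cannot supply it. With Gram eigenvalues only in $[r^{-20},r^{20}]$, it cannot even give your per-piece estimate $\|P_{\rm pair}(e_j\otimes e_j)\|^2\approx k/r^2$; it loses a factor $r^{20}$.

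\emph{The fix: apply invariance to $\mathscr S_0$ as a whole.}
\begin{enumerate}
\item \emph{Whitening.} Condition on the smoothing; balancing does not change the spans. Write $g_i=\sqrt n\,(A^\top A)^{-1/2}A^\top x_i^{\rm raw}$ (similarly for $B,C$). These are standard Gaussians independent of the smoothing. Then $\mathscr S_0=M\mathscr S^{\rm iso}$ with $M=(A^\top A)^{-1/2}\otimes(B^\top B)^{-1/2}\otimes(C^\top C)^{-1/2}$, and $\operatorname{cond}(M)\le\kappa_1^6$ on clause 1. Hence $\|P_{\mathscr S_0}v\|\le\|P_{\mathscr S^{\rm iso}}M^\top v\|/\sigma_{\min}(M)$.
\item \emph{Second moment.} The law of $\mathscr S^{\rm iso}$ is invariant under $O_1\otimes O_2\otimes O_3$, which acts irreducibly on $(\mathbb R^r)^{\otimes 3}$. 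So for fixed $w$, $\|P_{\mathscr S^{\rm iso}}w\|$ has the law of $\|P_{\mathscr S^{\rm iso}}(O_1\otimes O_2\otimes O_3)w\|$ with independent Haar $O_\ell$. Conditional on $\mathscr S^{\rm iso}$, its second moment is $\dim\mathscr S^{\rm iso}\,\|w\|^2/r^3\le 3k\|w\|^2/r^2\le 3r^{-3/4}\|w\|^2$.
\item \emph{Concentration.} This map is $\|w\|$-Lipschitz in the Hilbert--Schmidt norm of each $O_\ell$. Haar concentration therefore makes deviations of order $\|w\|/\kappa_1^6$ occur with probability at most $e^{-c(\kappa)r}$.
\end{enumerate}
Taking $w=M^\top D_r$ gives $\|P_{\mathscr S_0}D_r\|_F^2\le\frac{63}{64}r$, i.e.\ $\|P^\perp D_r\|_F\ge\frac18\sqrt r$. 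The failure probability is far below $r^{-10}$ for $r\ge r_0(\kappa)$, and no Gram or transversality input is needed.
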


\paragraph{Discussion.}
This is \textbf{partial progress}.  It preserves the smoothed CP
loss, Gaussian initialization, every
$r<k\le\lfloor r^{5/4}\rfloor$, simultaneous all-factor gradient steps, and
the target of a positive relative limiting loss.  \emph{The decisive added
condition is the finite-path event} $\mathcal C_{\rm path}$.  The analyzed
dynamics additionally use well-conditioned bases, a high-dimensional regime,
a tiny fixed step, and product-preserving balancing after every step.

\noindent\textbf{Technical role and remaining barrier.}
The source setting leaves a global trajectory barrier: a tiny step controls
each update locally but does not bound the total motion, accumulated nonlinear
error, or convergence of the iterates.  The finite-path event bounds this
total motion, keeps the balanced iterates in the region where the initial
tangent-deficit witness survives, and controls the accumulated Taylor
remainder.  Well-conditioning transfers this witness to physical loss, while
product-preserving balancing controls factor scales along the finite path.
This yields convergence to a positive limiting loss.  The
immediate missing proposition is to prove that, for some
$p_0(\kappa,q)>0$,
\[
 \Pr(\mathcal C_{\rm path}\mid\mathcal E_{\rm init\_norm})
 \ge p_0(\kappa,q)
\]
uniformly over the admitted parameters and bases.  Even this would establish
only the restricted well-conditioned, high-dimensional, tiny-step balanced
method.  Returning to the source problem additionally requires removing these
geometric, dimensional, step-size, and balancing restrictions.

\subsection{Is Interaction Necessary for Order-Optimal 1-bit Mean Estimation?}
\label{sec:case-one-bit}

\begin{openquestion}
Consider one-dimensional mean estimation over the nonparametric distribution family
\[
  \mathcal D(k,\lambda,\sigma)
  =\left\{D:\mu(D):=\mathbb E_{X\sim D}[X]\in[-\lambda,\lambda],\ 
  \mathbb   E_{X\sim D}|X-\mu(D)|^k\le\sigma^k\right\},
\]
where $k > 1$ and $\lambda \geq \sigma > 0$ are known to the learner. A 1-bit communication protocol observes independent samples $X_1, ... , X_n \sim D$ only through binary messages $Y_t = \boldsymbol{1}\{X_t \in A_t\}$, where $A_t \subset \mathbb R$ is measurable.  In a fully non-adaptive protocol, all sets $A_1, ... , A_n$ are fixed before any messages are observed, possibly using public or private randomness. Threshold and interval queries correspond to $A_t$ being a half-line or an interval, respectively. We say that a protocol is $(\epsilon, \delta)$-accurate over $\mathcal{D}(k, \lambda, \sigma)$ if its output $\hat{\mu}$ satisfies
\[
\sup_{D\in \mathcal{D}(k, \lambda, \sigma)} \mathbb P \{|\hat{\mu}-\mu(D)| > \epsilon\} \leq \delta.
\]
The adaptive 1-bit minimax sample complexity is known:
\[
  r_k(\lambda,\sigma,\epsilon,\delta)
  =\log\frac{\lambda}{\sigma}+
  \begin{cases}
    \dfrac{\sigma^2}{\epsilon^2}\log\dfrac1\delta,&k>2,\\[0.45em]
    \dfrac{\sigma^2}{\epsilon^2}\log\dfrac{\sigma}{\epsilon}
    \log\dfrac1\delta,&k=2,\\[0.45em]
    \left(\dfrac{\sigma}{\epsilon}\right)^{k/(k-1)}
    \log\dfrac1\delta,&1<k<2.
  \end{cases}
\]
The open question of \citet{lau2026interaction} asks \textit{whether fully non-adaptive arbitrary 1-bit quantizers can achieve the adaptive minimax rate?}
\end{openquestion}

\subsubsection{Subproblem 1: Order-Optimal Non-Adaptive 1-Bit Mean Estimation}
\begin{openquestion}
Fix $k>1$. \textit{Do there exist constants $c_k, C_k > 0$ such that, for all $\lambda \geq \sigma > 0$, all $0 < \epsilon\leq c_k\sigma$, and all $\delta \in (0, 1/2)$,
there is a fully non-adaptive 1-bit protocol that is $(\epsilon, \delta)$-accurate over $\mathcal{D}(k, \lambda, \sigma)$ using at most $n \leq C_k r_k(\lambda, \sigma, \epsilon, \delta)$ samples?}
\end{openquestion}

\paragraph{\underline{Perspective 1.}}

\paragraph{Formalized setting and preliminaries.}
Fix \(k>1\), known \(\lambda\geq\sigma>0\), accuracy \(\epsilon>0\), and
confidence \(\delta\in(0,1/2)\).  For a law \(D\) on \(\mathbb R\), define
\[
\mathcal D(k,\lambda,\sigma)
:=\left\{D:\ \mu(D)=\mathbb E_DX\in[-\lambda,\lambda],\quad
\mathbb E_D|X-\mu(D)|^k\leq\sigma^k\right\}.
\]
This is the unrestricted central-$k$-moment class.  The target
sample complexity is
\[
r_k(\lambda,\sigma,\epsilon,\delta)
:=\log\frac{\lambda}{\sigma}+
\begin{cases}
\dfrac{\sigma^2}{\epsilon^2}\log\dfrac1\delta,&k>2,\\[0.45em]
\dfrac{\sigma^2}{\epsilon^2}\log\dfrac{\sigma}{\epsilon}
\log\dfrac1\delta,&k=2,\\[0.45em]
\left(\dfrac{\sigma}{\epsilon}\right)^{k/(k-1)}
\log\dfrac1\delta,&1<k<2.
\end{cases}
\]

Split the indices in advance into localization and refinement blocks.
Localization uses the fully non-adaptive balanced-code construction of
\citet[Theorem~16]{lau2026sequential} at confidence \(\delta/4\): it uses
precommitted Borel union-of-cell queries and returns a center \(c\) defined for
every transcript.  Write \(N_{\rm loc}\) for the number of localization
queries.  For finite constants \(L_k,C_{{\rm loc},k}\) depending only on
\(k\), its theorem-level contract is
\[
 \Pr\{|c-\mu(D)|\leq L_k\sigma\}\geq1-\frac{\delta}{4},\qquad
 N_{\rm loc}\leq C_{{\rm loc},k}\left(1+\log\frac{\lambda}{\sigma}
 +\log\frac4\delta\right).
\]
The \textbf{refinement bank is also fixed before communication}.  For \(k\)-only
constants \(\gamma_k\in(0,1)\) and \(b_k\geq1\), set
\[
h_0=\gamma_k\epsilon,\qquad
H_*=b_k\sigma\left(\frac{\sigma}{\epsilon}\right)^{1/(k-1)},\qquad
J=\left\lceil\log_2\frac{H_*}{h_0}\right\rceil,
\qquad h_j=2^jh_0.
\]
For \(0\leq j<J\), give the fine levels \(h_j\leq\sigma\) weights
\(h_j/\sigma\) and the coarse levels \(h_j>\sigma\) weights
\((h_j/\sigma)^{2-k}\); normalize within each nonempty group and give each
nonempty group equal total mass.  Denote the resulting level probabilities by
\(p_j\).  With \(\mathcal S=\{0,1/4,1/2,3/4\}\), define
\[
Q_{j,a}(x)=ah_j+h_j\left\lfloor\frac{x-ah_j}{h_j}\right\rfloor,
\qquad F_{j,a,b}=Q_{j,a}-Q_{j+1,b}.
\]
Independently for every refinement sample \(i\), draw
\(L_i\sim(p_0,\ldots,p_{J-1})\),
\(A_i,B_i\stackrel{\rm iid}{\sim}{\rm Unif}(\mathcal S)\), and
\(U_i\sim{\rm Unif}[-1,2]\), all before any response, and transmit
\[
Y_i=\mathbf1\!\left\{
\frac{F_{L_i,A_i,B_i}(X_i)}{h_{L_i}}\geq U_i\right\}.
\]

At decoding time, choose the unique \(a_j(c)\in\mathcal S\) such that
\(\{c/h_j-a_j(c)\}\in[3/8,5/8)\), and form
\[
\begin{split}
Z_i(c)&=\frac{48h_{L_i}}{p_{L_i}}
\mathbf1\{(A_i,B_i)=(a_{L_i}(c),a_{L_i+1}(c))\}\\[-0.2em]
&\quad\times\left[Y_i-\mathbf1\!\left\{
\frac{F_{L_i,A_i,B_i}(c)}{h_{L_i}}\geq U_i\right\}\right].
\end{split}
\]
Choose a positive integer \(s\) and an odd positive integer \(q\), both as
functions of the public parameter tuple, partition the refinement indices in
advance into \(q\) blocks of size \(s\), and output
\[
\widehat\mu=c+\operatorname{median}_{1\leq g\leq q}
\left(\frac1s\sum_{i\in G_g}Z_i(c)\right).
\]
Localization supplies the decoder-side shift, centering, and importance
weights, while all queries remain precommitted.

\paragraph{Technical assumptions.}
\begin{assumption}[Parameter domain]\label{assump:parameter-domain}
The exponent \(k>1\) is fixed and known, \(\lambda\geq\sigma>0\) are known,
\(\delta\in(0,1/2)\), and \(0<\epsilon\leq c_k\sigma\), where
\(c_k\in(0,1)\) depends only on \(k\).
\end{assumption}

\begin{assumption}[Unrestricted central moment class]\label{assump:moment-class}
The samples have a common law \(D\in\mathcal D(k,\lambda,\sigma)\).
\end{assumption}

\begin{assumption}[Independent samples and precommitted seeds]
\label{assump:iid-independent-randomness}
All samples are independent with common law \(D\).  The sample split, median
blocks, localization randomness, and every refinement seed are mutually
independent where appropriate and fixed before the first response bit.
\end{assumption}

\paragraph{Main theory.}
\begin{theorem}[Order-optimal noninteractive one-bit mean estimation]
\label{thm:technical}\label{cor:rate}
Under Assumptions~\ref{assump:parameter-domain}--
\ref{assump:iid-independent-randomness}, one may take \(c_k=e^{-1}\),
\(\gamma_k=1/8\), choose \(b_k\) as a function only of \(k\), and choose
\(s\) and the odd \(q\) as functions of the public tuple
\((k,\lambda,\sigma,\epsilon,\delta)\).  The resulting Borel query bank is
fixed before the first response, uses exactly one bit from each of
\(n=N_{\rm loc}+qs\) independent samples, and satisfies, for a finite
\(C_k\) depending only on \(k\),
\[
n\leq C_k r_k(\lambda,\sigma,\epsilon,\delta),\qquad
\sup_{D\in\mathcal D(k,\lambda,\sigma)}
\Pr_{D,\,\mathrm{protocol}}
\{|\widehat\mu-\mu(D)|>\epsilon\}\leq\delta.
\]
The probability is unconditional over both sample blocks and all protocol
randomness, the horizon is deterministic, and the loss is absolute error on
\(\mathbb R\).
\end{theorem}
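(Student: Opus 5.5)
The plan is to treat the refinement stage as an importance-weighted, dithered, telescoping unbiased estimator of \(X-c\). The steps are: establish a conditional mean identity, bound the bias, bound the second moment level by level, and then finish with median-of-means together with a union bound against the localization failure. Throughout I condition on the localization block. Its contract, taken from \citet[Theorem~16]{lau2026sequential} at confidence \(\delta/4\), gives \(|c-\mu(D)|\le L_k\sigma\) with probability \(1-\delta/4\) and \(N_{\rm loc}=O_k(\log(\lambda/\sigma)+\log(1/\delta))\). Because the localization and refinement samples are independent, \(c\) is a fixed number for the refinement analysis.

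\emph{Mean identity.} Since \(Q_{j,a}(x)\in(x-h_j,x]\) and \(Q_{j+1,b}(x)\in(x-2h_j,x]\), the ratio \(F_{j,a,b}(x)/h_j\) lies in \((-1,2)\). With \(U\sim{\rm Unif}[-1,2]\) this gives \(\Pr(U\le f)=(f+1)/3\), so
\[
\mathbb E\bigl[Y_i-\mathbf 1\{F(c)/h\ge U_i\}\,\big|\,X_i,L_i,A_i,B_i\bigr]=\frac{F(X_i)-F(c)}{3h}.
\]
The match event \((A_i,B_i)=(a_j(c),a_{j+1}(c))\) has probability \(1/16\), and \(48/(3\cdot16)=1\). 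Hence
\[
\mathbb E[Z_i(c)\mid X_i]=\sum_{j<J}\bigl(F_j(X_i)-F_j(c)\bigr),
\]
where \(F_j\) uses the offsets \(a_j(c),a_{j+1}(c)\). This sum telescopes to
\[
\bigl(Q_0(X)-Q_0(c)\bigr)-\bigl(Q_J(X)-Q_J(c)\bigr).
\]
\emph{Bias.} The first bracket differs from \(X-c\) by at most \(h_0=\epsilon/8\). The offset choice \(\{c/h_J-a_J(c)\}\in[3/8,5/8)\) puts \(c\) at least \(3h_J/8\) from the edges of its level-\(J\) cell. So the second bracket vanishes unless \(|X-c|\ge 3H_*/8\), and it is then at most \(|X-c|+H_*\). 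On the localization event, Markov's inequality on the \(k\)-th central moment bounds its expectation by \(O_k(\sigma^k/H_*^{k-1})=O_k(\epsilon/b_k^{k-1})\), which is at most \(\epsilon/8\) for \(b_k\) large.

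\emph{Second moment.} This is the main obstacle: the naive bound loses a \(\log(\sigma/\epsilon)\) factor for \(k>2\). Writing \(E_j\) for the event that the level-\(j\) transmitted bit differs from its value at \(c\) under the shared dither, the starting point is
\[
\mathbb E Z_i^2\le\frac{48^2}{16}\sum_j\frac{h_j^2}{p_j}\,\Pr(E_j).
\]
The centering of \(c\) makes \(E_j\) impossible unless \(|X-c|\gtrsim h_j\). Moreover, \(\Pr(E_j\mid X)\lesssim\min(1,|X-c|/h_j)\).
\begin{itemize}
\item Fine levels (\(h_j\le\sigma\)). Here \(p_j\asymp h_j/\sigma\), so each term is \(\lesssim\sigma h_j\mathbf 1\{|X-c|\gtrsim h_j\}\). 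I sum dyadically before taking expectations, using \(\sum_j h_j\mathbf 1\{|X-c|\gtrsim h_j\}\lesssim|X-c|\). This yields \(O(\sigma\,\mathbb E|X-c|)=O_k(\sigma^2)\), with no logarithm.
\item Coarse levels (\(h_j>\sigma\)). Here \(p_j=(h_j/\sigma)^{2-k}/(2S)\), where \(S=\sum_{\rm coarse}(h_j/\sigma)^{2-k}\). The terms become \(\lesssim S\,\sigma^{2-k}h_j^k\mathbf 1\{|X-c|\gtrsim h_j\}\). The dyadic bound \(\sum_j h_j^k\mathbf 1\{|X-c|\gtrsim h_j\}\lesssim|X-c|^k\) then gives \(O_k(S\sigma^2)\).
\end{itemize}
The three regimes of \(S\) reproduce the target rate:
\begin{itemize}
\item for \(k>2\), \(S=O_k(1)\);
\item for \(k=2\), \(S\asymp\log(H_*/\sigma)\asymp\log(\sigma/\epsilon)\);
\item for \(1<k<2\), \(S\asymp(H_*/\sigma)^{2-k}\asymp(\sigma/\epsilon)^{(2-k)/(k-1)}\).
\end{itemize}
Altogether \(\operatorname{Var}Z\le V_k\), where \(V_k/\epsilon^2\) is, up to constants, exactly the non-localization part of \(r_k\) without the \(\log(1/\delta)\) factor. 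The condition \(\epsilon\le e^{-1}\sigma\) ensures that both level groups are nonempty and that the logarithms are at least \(1\).

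\emph{Median of means.} Take \(s=\lceil 64V_k/\epsilon^2\rceil\). By Chebyshev, each block mean is within \(\epsilon/2\) of \(\mathbb E Z\) with probability at least \(3/4\). Take \(q\) odd and of order \(\log(4/\delta)\). By Hoeffding, the median is within \(\epsilon/2\) of \(\mathbb E Z\) except with probability \(\delta/4\). Combining this with the bias bound \(|\mathbb E Z-(\mu-c)|\le\epsilon/4\) gives \(|\widehat\mu-\mu|\le\epsilon\). A union bound over the localization and refinement failures gives total failure probability at most \(\delta/2\le\delta\). Finally, \(n=N_{\rm loc}+qs\le C_kr_k\), because \(\log(1/\delta)\ge\log 2\) absorbs the additive \(\log(4/\delta)\) and the constant \(1\).
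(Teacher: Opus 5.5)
Your proposal is correct and follows the argument the paper's construction is built around. The $48=3\cdot 16$ normalization with the shared dither makes the refinement estimator telescope to $(Q_0(X)-Q_0(c))-(Q_J(X)-Q_J(c))$, and the centered offsets together with $h_0=\epsilon/8$ and $H_*$ control the two truncation biases. The fine weights $h_j/\sigma$ and coarse weights $(h_j/\sigma)^{2-k}$ are exactly what yield your level-wise bounds $\sigma|X-c|$ and $S\sigma^{2-k}|X-c|^k$, and median-of-means plus a union bound with the $\delta/4$ localization finishes the proof. One cosmetic slip: the top-level bracket is at most $|X-c|+h_J$ with $h_J<2H_*$, not $|X-c|+H_*$, but this affects only constants.
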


\paragraph{Discussion.}
This theorem matches the source problem's \textbf{full scope} over the
unrestricted central-\(k\)-moment class.  Together with the known one-bit
minimax lower bound, its rate is order-optimal.  Both query banks are
precommitted, and localization is a decoder-side operation.

\paragraph{\underline{Perspective 3.}}

\paragraph{Formalized setting and preliminaries.}
For \(k>1\) and known \(\lambda\geq\sigma>0\), define
\[
\mathcal D(k,\lambda,\sigma)
=\{D:\mu(D)=\mathbb E_DX\in[-\lambda,\lambda],\
\mathbb E_D|X-\mu(D)|^k\leq\sigma^k\}.
\]
This is the unrestricted central-$k$-moment class.
Its three-regime target rate is
\[
r_k(\lambda,\sigma,\epsilon,\delta)
=\log\frac{\lambda}{\sigma}+
\begin{cases}
\dfrac{\sigma^2}{\epsilon^2}\log\dfrac1\delta,&k>2,\\[0.4em]
\dfrac{\sigma^2}{\epsilon^2}\log\dfrac{\sigma}{\epsilon}
\log\dfrac1\delta,&k=2,\\[0.4em]
\left(\dfrac{\sigma}{\epsilon}\right)^{k/(k-1)}
\log\dfrac1\delta,&1<k<2.
\end{cases}
\]
Split the samples in advance into localization and refinement blocks.  The
first uses the coding-based fully non-adaptive localizer of
\citet[Theorem~16]{lau2026sequential} at confidence \(\delta/4\).  Let
\(R_{\rm loc}\) be a public localization seed, independent of all samples and
fixed before any response bit.  The localizer uses precommitted Borel queries
\(\mathcal B_i(R_{\rm loc})\) and an always-defined decoder
output
\[
c=\mathsf{Dec}_{\rm loc}
\bigl(R_{\rm loc},
(\mathbf1\{X_i\in\mathcal B_i(R_{\rm loc})\})_{i\in I_{\rm loc}}\bigr).
\]
For finite constants \(L_k,C_{{\rm loc},k}\) depending only on \(k\), the
localizer satisfies
\[
 \Pr\{|c-\mu(D)|\leq L_k\sigma\}\geq1-\frac{\delta}{4},\qquad
 N_{\rm loc}:=|I_{\rm loc}|\leq C_{{\rm loc},k}
 \left(1+\log\frac{\lambda}{\sigma}+\log\frac4\delta\right).
\]

For \(k\)-only \(a_k,b_k>0\), let
\[
\begin{aligned}
h_0&=a_k\sigma,
&H_*&=b_k\sigma(\sigma/\epsilon)^{1/(k-1)},
&S&=\left\lceil\log_2(H_*/h_0)\right\rceil,\\
h_s&=2^sh_0,
&p_s&=\frac{h_s^{2-k}}{Z_S},
&Z_S&=\sum_{s=0}^S h_s^{2-k}.
\end{aligned}
\]
On the dyadic grid, set
\(P_{s,j}=[jh_s,(j+1)h_s)\),
\(m_{s,j}=(j+1/2)h_s\), and
\(J_{s,j}=P_{s,j-1}\cup P_{s,j}\cup P_{s,j+1}\).  Define the rings
\[
\mathcal R_{0,j}=J_{0,j},\qquad
\mathcal R_{s,j,b}=J_{s,j}\setminus J_{s-1,2j+b}\quad(s\geq1),
\]
and color indices by
\(\mathcal J_{s,\ell}=\{j:j\equiv\ell\pmod4\}\).  Half-open cells fix all
boundary ties.

For each refinement sample \(i\), \textbf{independently precommit}
\(L_i\sim(p_0,\ldots,p_S)\),
\(C_i\sim{\rm Unif}\{0,1,2,3\}\),
\(U_i\sim{\rm Unif}[-1,1]\), and a countable independent Rademacher mask
\((\rho_{i,s,j})\).  If \(L_i=s\geq1\), also precommit independent
\(T_i\sim{\rm Unif}\{\mathrm{coord},\mathrm{mass}\}\) and
\(B_i\sim{\rm Unif}\{0,1\}\); at \(s=0\), take
\(T_i=\mathrm{coord}\).  With
\(\psi_{s,j,\mathrm{coord}}(x)=(x-m_{s,j})/(2h_s)\) and
\(\psi_{s,j,\mathrm{mass}}(x)=1\), define
\[
F_i(x)=
\begin{cases}
\displaystyle\sum_{j\in\mathcal J_{0,C_i}}\rho_{i,0,j}
\frac{x-m_{0,j}}{2h_0}\mathbf1_{\mathcal R_{0,j}}(x),&L_i=0,\\[0.9em]
\displaystyle\sum_{j\in\mathcal J_{s,C_i}}\rho_{i,s,j}
\psi_{s,j,T_i}(x)\mathbf1_{\mathcal R_{s,j,B_i}}(x),&L_i=s\geq1.
\end{cases}
\]
Each refinement sample transmits the single bit
\(Y_i=\mathbf1\{F_i(X_i)\geq U_i\}\).  The decoder computes
\(Y_i^0=\mathbf1\{0\geq U_i\}\) and
\(\Delta Y_i=Y_i-Y_i^0\).

After the complete transcript arrives, choose
\[
j_0(c)=\min\operatorname*{argmin}_{j\in\mathbb Z}|c-m_{0,j}|,
\qquad m_0(c):=m_{0,j_0(c)},
\]
and set for \(s\geq1\)
\[
j_s=\left\lfloor j_0(c)/2^s\right\rfloor,\qquad
b_s=j_{s-1}-2j_s,\qquad m_s=m_{s,j_s},\qquad d_s=m_s-m_0(c),
\]
with \(\kappa_s=j_s\bmod4\).  The decoder retains and reweights the
precommitted bits through
\[
W_i(c)=
\begin{cases}
\displaystyle\frac{16h_0}{p_0}\mathbf1\{C_i=\kappa_0\}
\rho_{i,0,j_0(c)}\Delta Y_i,&L_i=0,\\[0.9em]
\displaystyle\frac{16}{p_s}\mathbf1\{C_i=\kappa_s,B_i=b_s\}
\rho_{i,s,j_s}
\bigl[4h_s\mathbf1\{T_i=\mathrm{coord}\}
+2d_s\mathbf1\{T_i=\mathrm{mass}\}\bigr]\Delta Y_i,&L_i=s\geq1.
\end{cases}
\]
Finally, for \(k\)-only \(\alpha_k,\beta_k>0\), preassign
\[
G_\delta=2\left\lceil\alpha_k\log\frac8\delta\right\rceil+1,\qquad
B_{\rm ref}=\left\lceil\beta_k\frac{\sigma^kZ_S}{\epsilon^2}\right\rceil,
\]
equal-size median blocks and output
\[
\widehat\mu=m_0(c)+\operatorname{median}_{1\leq g\leq G_\delta}
\left(\frac1{B_{\rm ref}}\sum_{i\in G_g}W_i(c)\right).
\]
Localization selects the reconstruction path in the decoder, while all
queries remain precommitted.

\paragraph{Technical assumptions.}
\begin{assumption}[Parameter domain]\label{assump:parameter-domain_3}
\(k>1\) is fixed and known, \(\lambda\geq\sigma>0\) are known,
\(\delta\in(0,1/2)\), and \(0<\epsilon\leq c_k\sigma\), for a positive
\(k\)-only constant \(c_k<1\).
\end{assumption}

\begin{assumption}[Unrestricted central-moment class]\label{assump:moment-class_3}
The common law belongs to the unrestricted class
\(\mathcal D(k,\lambda,\sigma)\).
\end{assumption}

\begin{assumption}[Independent observations and seeds]
\label{assump:independent-samples_3}
Both sample blocks are i.i.d. from \(D\).  All localization and refinement
seeds are mutually independent where appropriate and independent of the
observations.
\end{assumption}

\begin{assumption}[Precommitted protocol]\label{assump:precommitted-protocol_3}
The split, all seeds and masks, and the median blocks are fixed before any bit
is observed.  Every query is Borel and independent of earlier messages.  The
localization output enters the decoder after collection of the full
transcript.
\end{assumption}

\paragraph{Main theory.}
\begin{theorem}[Order-optimal fully non-adaptive one-bit mean estimation]
\label{thm:main_3}
Under Assumptions~\ref{assump:parameter-domain_3}--
\ref{assump:precommitted-protocol_3}, for every fixed \(k>1\) there are
\(k\)-only constants \(c_k,C_k,a_k,b_k,\alpha_k,\beta_k>0\) such that the
protocol above is well defined and, for all admissible public parameters,
uses one bit per independent sample at the deterministic horizon
\(n=N_{\rm loc}+G_\delta B_{\rm ref}\) with
\[
n\leq C_k r_k(\lambda,\sigma,\epsilon,\delta),\qquad
\sup_{D\in\mathcal D(k,\lambda,\sigma)}
\Pr_{D,\,\mathrm{protocol}}
\{|\widehat\mu-\mu(D)|>\epsilon\}\leq\delta.
\]
The guarantee is unconditional over all observations and protocol randomness,
and uses absolute error on \(\mathbb R\).  It retains the exact three regimes
displayed above, including the single \(\log(\sigma/\epsilon)\) factor at
\(k=2\), with all constants depending only on \(k\).
\end{theorem}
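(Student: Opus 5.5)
The plan is to analyze the two blocks separately and combine them with a union bound at confidence $\delta/4+3\delta/4$. The localization block is taken as a black box from \citet[Theorem~16]{lau2026sequential}. It gives, with probability at least $1-\delta/4$, the event $\mathcal E_{\rm loc}=\{|c-\mu(D)|\le L_k\sigma\}$, at a cost of $N_{\rm loc}=O_k(\log(\lambda/\sigma)+\log(1/\delta))$, which is already $O_k(r_k)$. The refinement seeds and samples are independent of the localization block. We may therefore condition on the localization transcript and treat $c$, hence $j_0(c)$, the path $(j_s,b_s,\kappa_s,d_s)$ and the weights $W_i(c)$, as fixed. The refinement terms $W_i(c)$ are then i.i.d. Precommitment is immediate because $c$ enters only the decoder. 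The argument reduces to a bias bound and a variance bound for $W_i(c)$ for each fixed $c$ with $|c-\mu|\le L_k\sigma$, followed by median-of-means.

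\textbf{Step 1 (unbiased telescoping identity).} Condition on $L_i=s$, $C_i$, $B_i$, $T_i$ and $X_i$. Since $|F_i|\le 1$ on its support, $\mathbb E_U[\Delta Y_i]=F_i(X_i)/2$. The Rademacher mask kills every cell except $j_s$ after multiplication by $\rho_{i,s,j_s}$. The indicators $\mathbf 1\{C_i=\kappa_s,B_i=b_s\}$ cost a factor $1/4\cdot 1/2$, and the choice of $T_i$ costs another $1/2$. The importance weights $16/p_s$ are chosen exactly to undo these factors. This gives
\[
\mathbb E[W_i(c)\mid L_i=s]\,p_s=\mathbb E\bigl[(X-m_s)\mathbf 1_{\mathcal R_{s,j_s,b_s}}(X)+d_s\mathbf 1_{\mathcal R_{s,j_s,b_s}}(X)\bigr]=\mathbb E\bigl[(X-m_0(c))\mathbf 1_{\mathcal R_s}(X)\bigr]
\]
for $s\ge1$, with the analogous formula at $s=0$. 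The rings $\mathcal R_0=J_{0,j_0}$ and $\mathcal R_s=J_{s,j_s}\setminus J_{s-1,j_{s-1}}$ are nested differences. Dyadic nesting guarantees $J_{s-1,j_{s-1}}\subset J_{s,j_s}$, which is where $b_s$ enters. Summing over $s$ therefore gives
\[
\mathbb E W_i(c)=\mathbb E\bigl[(X-m_0)\mathbf 1\{X\in J_{S,j_S}\}\bigr].
\]
So $m_0+\mathbb EW$ differs from $\mu$ only by the truncation term $\mathbb E[(X-m_0)\mathbf 1\{X\notin J_{S,j_S}\}]$. The window $J_{S,j_S}$ contains a ball of radius $\gtrsim H_*$ around $\mu$ because $|c-\mu|\le L_k\sigma\ll H_*$. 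Markov's inequality on the $k$-th moment then bounds this term by $\lesssim\sigma^kH_*^{1-k}=b_k^{1-k}\epsilon$, which is at most $\epsilon/4$ for $b_k$ large.

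\textbf{Step 2 (second moment).} We have $|W_i|\lesssim (h_s+|d_s|)/p_s$, and $|d_s|\lesssim h_s$ since the path centers stay within one cell. The ring at level $s\ge1$ is supported where $|X-\mu|\gtrsim h_{s-1}$, so $\Pr\{X\in\mathcal R_s\}\lesssim\min\{1,\sigma^k h_s^{-k}\}$. This gives
\[
\mathbb E W^2\lesssim\sum_s \frac{h_s^2}{p_s}\min\{1,(\sigma/h_s)^k\}=Z_S\sum_s h_s^{k}\min\{1,(\sigma/h_s)^k\}\lesssim Z_S\,\sigma^k\,(S+1)
\]
using $p_s=h_s^{2-k}/Z_S$. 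The factor $S+1$ is the main obstacle. It would spoil the rate unless handled more carefully. A second moment is the wrong budget here: the coarse levels are bounded by $\sigma^k$ per level only because $h_s^k(\sigma/h_s)^k=\sigma^k$, and this sums to $S\sigma^k$. The resolution I would pursue is to replace the second-moment bound by the exact conditional probability. At level $s$ one has $\mathbb E[W^2\mid L=s]\lesssim h_s^2 p_s^{-2}\Pr\{X\in\mathcal R_s\}$. By a layer-cake argument, $\sum_s h_s^k\Pr\{|X-\mu|\in[h_{s-1},3h_s]\}\lesssim\mathbb E|X-\mu|^k\le\sigma^k$ with no logarithm, since the dyadic shells are disjoint up to a bounded overlap. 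This yields $\mathbb EW^2\lesssim\sigma^kZ_S$.

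\textbf{Step 3 (counting and concentration).} Evaluate $Z_S=\sum_s h_s^{2-k}$ in the three regimes. For $k>2$ it is geometric and dominated by $h_0=a_k\sigma$, so $Z_S\asymp\sigma^{2-k}$. For $k=2$ it equals $S+1\asymp\log(\sigma/\epsilon)$. For $k<2$ it is dominated by $H_*$, so $Z_S\asymp H_*^{2-k}$. Then $B_{\rm ref}\asymp\sigma^kZ_S/\epsilon^2$ equals, respectively, $\sigma^2/\epsilon^2$, $(\sigma^2/\epsilon^2)\log(\sigma/\epsilon)$, and $\sigma^2\epsilon^{-2}(\sigma/\epsilon)^{(2-k)/(k-1)}=(\sigma/\epsilon)^{k/(k-1)}$. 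Chebyshev's inequality shows that each block mean deviates from $\mathbb EW$ by more than $\epsilon/2$ with probability at most $1/4$ once $\beta_k$ is large. A Chernoff bound on the $G_\delta$ independent block indicators makes the median fail with probability at most $\delta/2$. The total failure probability is then at most $\delta/4+\delta/2<\delta$. The size satisfies $n=N_{\rm loc}+G_\delta B_{\rm ref}\le C_kr_k$. The constant $c_k$ is needed only so that $S\ge1$ and $H_*\ge h_0$, so that all cases are nondegenerate.
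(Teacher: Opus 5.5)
\textbf{Verdict: the overall route is correct, but Step~2 contains a false per-level bound; the conclusion $\mathbb E W^2\lesssim_k\sigma^kZ_S$ survives with a different justification.} The paper does not display its own proof, so the comparison is to the protocol's evident design. Your route matches that design: condition on the localizer, obtain the telescoping identity, bound the truncation bias via $H_*$, prove a variance bound of order $\sigma^kZ_S$, then apply median-of-means and a union bound. Step~1 checks out, including the constant $16\cdot\frac18\cdot\frac12\cdot\frac12\cdot 4h_s\cdot\frac{1}{2h_s}=1$ and $b_s=j_{s-1}-2j_s$. The truncation bias bound $2^kb_k^{1-k}\epsilon$ is right. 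The three-regime evaluation of $Z_S$ and $B_{\rm ref}$ in Step~3 is also right.

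\textbf{The gap.} The claimed bound $\mathbb E[W^2\mid L=s]\lesssim h_s^2p_s^{-2}\Pr\{X\in\mathcal R_s\}$, with $\mathcal R_s$ the path ring, is false.
\begin{itemize}
\item The Rademacher mask makes contributions from other cells mean-zero. It does not remove them from the second moment.
\item $W_i^2$ contains $(\Delta Y_i)^2=|\Delta Y_i|$, whose conditional mean is $|F_i(X_i)|/2$.
\item $F_i$ is nonzero on \emph{every} ring $\mathcal R_{s,j,b_s}$ with $j\equiv\kappa_s\pmod 4$, not only on $j=j_s$. At level $0$ it is nonzero on all of $\bigcup_{j\equiv\kappa_0}J_{0,j}$.
\end{itemize}
A law charging $\mathcal R_{s,j_s+4,b_s}$ gives $\Pr\{X\in\mathcal R_{s,j_s,b_s}\}=0$ but a positive conditional second moment. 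The true supports at each level are periodic and extend to infinity, so your layer-cake over disjoint dyadic shells does not apply to them.

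\textbf{The repair.}
\begin{itemize}
\item For $s\ge1$, the entire same-colour support at level $s$ avoids the ball $\{|x-m_0(c)|<h_{s-1}\}$. The path ring removes $J_{s-1,j_{s-1}}$, which contains that ball because $m_0\in P_{s-1,j_{s-1}}$. Every other $J_{s,j}$ with $j\equiv\kappa_s$ lies at distance at least $2h_s$ from $P_{s,j_s}\ni m_0$.
\item Hence $\mathbb E[W^2\mid L=s]\lesssim h_s^2p_s^{-2}\Pr\{|X-m_0|\ge h_{s-1}\}$.
\item For levels with $h_{s-1}\ge 2(L_k+a_k)\sigma$, sum over $s$ \emph{inside} the expectation: $\sum_s h_s^k\mathbf 1\{h_{s-1}\le 2|X-\mu|\}\le C_k|X-\mu|^k$, since this is a geometric sum dominated by its top term.
\item The remaining low levels contribute $O_k(\sigma^k)$ by the same geometric-sum bound.
\item The level-$0$ term is bounded trivially by $32h_0^2/p_0=32a_k^k\sigma^kZ_S$. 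This is exactly why $h_0\asymp\sigma$ rather than $h_0\asymp\epsilon$ suffices here.
\end{itemize}
Together these give $\mathbb E W^2\lesssim_k\sigma^kZ_S$ without any disjointness. This also shows the $S+1$ loss in your first estimate came from applying Markov level by level, not from overlapping supports.

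\textbf{Minor point.} The constant $c_k<1$ has a second role beyond $S\ge1$. It gives $\log(\sigma/\epsilon)\ge\log(1/c_k)>0$. That is what makes $S+1\lesssim_k\log(\sigma/\epsilon)$ at $k=2$. It also lets the additive constants in $N_{\rm loc}$ and the ceilings in $G_\delta,B_{\rm ref}$ be absorbed into $r_k$.
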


\paragraph{Discussion.}
This theorem matches the source problem's \textbf{full scope} over the
unrestricted moment class.  Together with the known one-bit minimax lower
bound, its rate is order-optimal.  The masked multiscale bank is fully
precommitted, and localization selects the reconstruction path in the decoder.

\subsection[Is the Power of Deep Learning over Linear Models Inherently Distribution Dependent?]
{Is the Power of Deep Learning over Linear Models Inherently\\Distribution Dependent?}
\begin{openquestion}
\citet{ben2002limitations} defined the dimension complexity $\operatorname{dc}(\mathcal H)$ of a binary hypothesis class $\mathcal H \subseteq \{\pm 1\}^\mathcal X$  as the smallest dimension $d$ s.t. there exists a feature map $\varphi : \mathcal X \to \mathbb R^d$ allowing linear representation of $\mathcal H$ (i.e. s.t. $\forall_{h \in \mathcal H} \exists_{w \in \mathbb R^d} \forall_x h(x) = \operatorname{sign}(\langle w, \varphi(x)\rangle)$). Using dimension complexity ($\operatorname{dc}(\mathcal H)$) to measure the smallest shared feature dimension in which a hypothesis class is linearly realizable, the open question of \citet{feldman2026deep} asks \textit{whether distribution-independent SQ learning implies low dimension complexity, and whether anything learnable with (S)GD on a (benign) neural network under any input distribution is also learnable with a linear model.}
\end{openquestion}

\subsubsection{Subproblem 1: Learning with SGD over Neural Networks}
\begin{openquestion}
\textit{Is there a constant $C$ such that for all $\mathcal H \subseteq \{\pm 1\}^\mathcal X$ over $\mathcal{X} = \{\pm 1\}^n$, and $\epsilon < 1/4$, if there exists a fully connected ReLU network with $S$ parameters in total, stepsize $\eta$ and number of steps $T$, such that for every input distribution $\mathcal D$, every $h^* \in \mathcal H$, SGD yields expected error $\mathbb E \mathcal L_{\mathcal D,h^*} (\hat h) \leq \epsilon$ (expectation over the initialization and SGD sampling), then $\operatorname{dc}(\mathcal H) \leq C \cdot T S$.}
\end{openquestion}

\paragraph{\underline{Perspective 3.}}

\paragraph{Formalized setting and preliminaries.}
Fix \(n,m,T\geq1\), \(\eta>0\), and \(\varepsilon\geq0\).  Let
\(\mathcal X=\{-1,+1\}^n\),
\(\mathcal H\subseteq\{-1,+1\}^{\mathcal X}\), and fix a tie label
\(s_0\in\{-1,+1\}\).  Write
\(\operatorname{sign}_{s_0}(z)=\operatorname{sign}(z)\) for \(z\ne0\) and
\(\operatorname{sign}_{s_0}(0)=s_0\), and define the strict error of a score
\(g\) by
\[
\mathcal L_{\mathcal D,h}(g)
=\Pr_{x\sim\mathcal D}
\{\operatorname{sign}_{s_0}(g(x))h(x)<0\}.
\]
\textbf{Consider the coordinatewise ReLU activation
\(\sigma(z)=\max\{0,z\}\) and the bias-free depth-two ReLU network}
\[
f_{a,W}(x)=a^\top\sigma(Wx),\qquad
W\in\mathbb R^{m\times n},\quad a\in\mathbb R^m,\qquad
S=m(n+1),
\]
with both layers trainable.  Initialize independently with
\(W_{ji}^{(0)}\sim\mathcal N(0,1/n)\) and
\(a_j^{(0)}\sim\mathcal N(0,1/m)\).  Given fresh
\(x^{(t)}\sim\mathcal D\), run \(T\geq1\) one-sample, all-layer SGD steps
with fixed stepsize \(\eta>0\) and logistic loss
\(\ell(z)=\log(1+e^{-z})\).  Fix \(\kappa_{\rm kink}\in[0,1]\) once and use
the ReLU derivative
\[
 \sigma'_{\kappa_{\rm kink}}(z)
 =\mathbf1\{z>0\}+\kappa_{\rm kink}\mathbf1\{z=0\}
\]
in every gradient, so the SGD recursion is defined also at zero
preactivations:
\[
(a^{(t+1)},W^{(t+1)})=(a^{(t)},W^{(t)})
-\eta\nabla_{(a,W)}
\ell\!\left(h(x^{(t)})f_{a^{(t)},W^{(t)}}(x^{(t)})\right).
\]
The returned score is
\[
G_\omega(x)=\sum_{t=\lceil T/2\rceil}^{T}f_{a^{(t)},W^{(t)}}(x),
\]
where \(\omega\) includes initialization and all SGD samples.

The deterministic dimension complexity \(\operatorname{dc}(\mathcal H)\)
is the least dimension of one feature map that exactly represents every
\(h\in\mathcal H\) by a tie-resolved homogeneous halfspace.  Its confident
variant \(\operatorname{dc}^{1/2}(\mathcal H)\) is the least \(d\) for which
a feature-map law \(\mathcal P\), chosen before \((\mathcal D,h)\), satisfies
\[
\Pr_{\varphi\sim\mathcal P}\!\left[
\inf_{w\in\mathbb R^d}
\Pr_{x\sim\mathcal D}
\{\operatorname{sign}_{s_0}(\langle w,\varphi(x)\rangle)h(x)<0\}=0
\right]\geq\frac12
\]
for every \(\mathcal D\) and \(h\).  Let
\(\varphi_{\rm id}(x)=x\in\mathbb R^n\).

\paragraph{Technical assumptions.}
\begin{assumption}[\textbf{Antipodally odd target class}]
\label{assump:antipodal-oddness}
Every \(h\in\mathcal H\) satisfies \(h(-x)=-h(x)\) for all
\(x\in\mathcal X\).
\end{assumption}

\begin{assumption}[\textbf{Strict high-accuracy regime}]\label{assump:high-accuracy}
The accuracy obeys the strict inequality
\(2\varepsilon<1/(n+1)\).
\end{assumption}

\begin{assumption}[Universal source success]
\label{assump:universal-sgd-success}
The width, stepsize, and horizon are fixed before the distribution and target,
and for every \(\mathcal D\) and \(h\in\mathcal H\),
\[
\mathbb E_{\omega}\bigl[\mathcal L_{\mathcal D,h}(G_\omega)\bigr]
\leq\varepsilon,
\]
where expectation is joint over the independent Gaussian initialization and
fresh one-sample SGD draws.
\end{assumption}

\paragraph{Main theory.}
\begin{theorem}[Exact identity representation in the odd high-accuracy regime]
Under Assumptions~\ref{assump:antipodal-oddness},
\ref{assump:high-accuracy}, and~\ref{assump:universal-sgd-success}, the
identity map exactly represents the class:
\[
\forall h\in\mathcal H\ \exists w_h\in\mathbb R^n\ \forall x\in\mathcal X,
\qquad
\operatorname{sign}_{s_0}(\langle w_h,x\rangle)=h(x).
\]
Hence the distribution- and target-independent law
\(\mathcal P_{\rm id}=\delta_{\varphi_{\rm id}}\) succeeds with probability
one for every \((\mathcal D,h)\), and
\[
\operatorname{dc}^{1/2}(\mathcal H)
\leq\operatorname{dc}(\mathcal H)\leq n\leq S\leq TS.
\]
\end{theorem}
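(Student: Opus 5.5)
The plan is a contradiction argument that combines the odd part of the network with a Carathéodory/Gordan alternative. It uses no property of SGD beyond the existence of one good random seed.

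First, an algebraic identity. Since \(\sigma(z)-\sigma(-z)=z\) coordinatewise, every bias-free depth-two network satisfies
\[
f_{a,W}(x)-f_{a,W}(-x)=a^\top Wx .
\]
Summing over the averaged iterates gives, for every seed \(\omega\),
\[
G_\omega(x)-G_\omega(-x)=\langle w_\omega,x\rangle,
\qquad
w_\omega=\sum_{t=\lceil T/2\rceil}^{T}(W^{(t)})^\top a^{(t)}\in\mathbb R^n .
\]
The sum is nonempty because \(T\ge1\). So the antisymmetric part of any returned score is linear in the identity features, regardless of how the SGD trajectory behaved.

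Second, reduce exact representation to a finite obstruction. Put \(y_x=h(x)x\in\mathbb R^n\). It suffices to find \(w\) with \(\langle w,y_x\rangle>0\) for all \(x\in\mathcal X\). Strict inequality makes the tie label \(s_0\) irrelevant and gives \(\operatorname{sign}_{s_0}(\langle w,x\rangle)=h(x)\). Suppose no such \(w\) exists. By Gordan's alternative for the finite system, \(0\in\operatorname{conv}\{y_x\}\). By Carathéodory in \(\mathbb R^n\), there are then points \(x_1,\dots,x_{n+1}\) (repetitions allowed) with \(0\in\operatorname{conv}\{y_{x_1},\dots,y_{x_{n+1}}\}\).

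Third, use universal success on a symmetric distribution. Let \(\mathcal D\) be uniform on the set \(\{\pm x_1,\dots,\pm x_{n+1}\}\). This set has at most \(2(n+1)\) points, so each has mass at least \(1/(2(n+1))\). Apply Assumption~\ref{assump:universal-sgd-success} with this \(\mathcal D\) and the given \(h\). By Markov's inequality, the probability that \(G_\omega\) errs on some support point is at most \(2(n+1)\varepsilon\). Assumption~\ref{assump:high-accuracy} makes this strictly less than \(1\), so some seed \(\omega\) satisfies \(\operatorname{sign}_{s_0}(G_\omega(z))=h(z)\) on the whole support.

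Now fix \(i\) and suppose \(h(x_i)=1\); the other case is symmetric. Assumption~\ref{assump:antipodal-oddness} gives \(h(-x_i)=-1\). Hence \(G_\omega(x_i)\ge0\), with equality only if \(s_0=1\), and \(G_\omega(-x_i)\le0\), with equality only if \(s_0=-1\). The two equalities cannot both hold, so
\[
\langle w_\omega,y_{x_i}\rangle=h(x_i)\bigl(G_\omega(x_i)-G_\omega(-x_i)\bigr)>0 .
\]
This holds for all \(i\), which contradicts \(0\in\operatorname{conv}\{y_{x_i}\}\). Therefore a strict \(w_h\) exists for every \(h\in\mathcal H\).

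The chain of inequalities then follows directly. Exact identity representation gives \(\operatorname{dc}(\mathcal H)\le n\). The point mass \(\mathcal P_{\rm id}=\delta_{\varphi_{\rm id}}\) attains zero error for every \((\mathcal D,h)\) with probability one, so \(\operatorname{dc}^{1/2}(\mathcal H)\le\operatorname{dc}(\mathcal H)\). Finally \(n\le m(n+1)=S\le TS\).

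The main delicacy is the tie handling at zero scores in the third step, which is why the strict error notion and strict Gordan separation are used. A secondary point is that coincidences such as \(-x_i=x_j\) only shrink the support and so only help the Markov bound; the resulting vectors \(y_{x_i}\) coincide consistently, so Carathéodory is unaffected.
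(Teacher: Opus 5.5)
Your argument is correct and follows essentially the paper's route: the identity $\sigma(z)-\sigma(-z)=z$ linearizes the antisymmetric part of the latter-half score, oddness aligns the labels on antipodal pairs, and $2\varepsilon<1/(n+1)$ makes an $(n+1)$-point Gordan/Carath\'eodory infeasibility witness, symmetrized to at most $2(n+1)$ equally weighted points, incompatible with the expected-error guarantee, exactly as in your Markov step. One small slip: the point mass $\delta_{\varphi_{\rm id}}$ only gives $\operatorname{dc}^{1/2}(\mathcal H)\le n$, so to get $\operatorname{dc}^{1/2}(\mathcal H)\le\operatorname{dc}(\mathcal H)$ you should instead take the point mass at a minimal-dimension exact representation, and the same reasoning then applies verbatim.
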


\paragraph{Discussion.}
This is \textbf{partial progress}.  It preserves the Boolean domain,
distribution-independent learner parameters, Gaussian initialization, and
all-layer SGD.  It differs from the source problem in three essential ways:
\begin{enumerate}[label=(\roman*)]
\item the network is bias-free and has depth two;
\item every target is antipodally odd;
\item the accuracy satisfies $2\varepsilon<1/(n+1)$.
\end{enumerate}

\noindent \textbf{Technical role and remaining barrier.}
Under the source setting, the proof faces three technical barriers.  First,
the antisymmetric part of a general deep ReLU score can remain nonlinear, so
it does not directly define a shared linear feature map.  Second, an arbitrary
target may assign incompatible labels to antipodal inputs, blocking the
antipodal reduction.  Third, the source condition $\varepsilon<1/4$ is too weak
to make a finite infeasibility witness contradict the learner guarantee.  The
added restrictions resolve these barriers one by one: the bias-free depth-two
architecture and the ReLU identity $\sigma(z)-\sigma(-z)=z$ linearize the
antisymmetric score; target oddness aligns the labels on antipodal pairs; and
$2\varepsilon<1/(n+1)$ supplies the strict finite-witness gap.  Together they
yield an exact representation by the identity features.  The remaining
problem is to obtain an $O(TS)$ common linear representation under the source
assumptions, without these three restrictions.

\paragraph{\underline{Perspective 1.}}

\paragraph{Formalized setting and preliminaries.}
Let \(n\) be a positive integer, set \(\mathcal X=\{-1,+1\}^n\), and let
\(\mathcal H\subseteq\{-1,+1\}^{\mathcal X}\).  Write
\(\mathbb N_0=\{0,1,2,\ldots\}\) and \([r]=\{1,\ldots,r\}\) for every
positive integer \(r\).  Fix the source tie label
\(s_0\in\{-1,+1\}\), and define
\[
\operatorname{sign}_{s_0}(z)=
\begin{cases}
+1,&z>0,\\
-1,&z<0,\\
s_0,&z=0.
\end{cases}
\]
Set
\[
R_{\mathcal D,h}(w,\varphi)
:=\Pr_{x\sim\mathcal D}
\{\operatorname{sign}_{s_0}(\langle w,\varphi(x)\rangle)h(x)<0\}.
\]
For \(\alpha\geq0\), define
\[
\operatorname{dc}_\alpha(\mathcal H)
:=\min\left\{d\in\mathbb N_0:\ \exists\mathcal P\ \forall\mathcal D\
\forall h\in\mathcal H,\quad
\mathbb E_{\varphi\sim\mathcal P}
\left[\inf_{w\in\mathbb R^d}R_{\mathcal D,h}(w,\varphi)\right]
\leq\alpha\right\},
\]
where the law \(\mathcal P\) is selected before \((\mathcal D,h)\).

Consider a bias-free, fully connected ReLU network of positive integer depth
\(L\).  Its widths \(n_0=n,n_1,\ldots,n_{L-1},n_L=1\) are positive integers,
and its parameter count is
\(S=\sum_{\ell=1}^Ln_\ell n_{\ell-1}\).  With
\(z_0=x\), \(u_\ell=\theta_\ell z_{\ell-1}\),
\(z_\ell=\max\{0,u_\ell\}\) for \(\ell<L\), and
\(f_\theta=\theta_Lz_{L-1}\), initialize independently with
\[
(\theta_\ell^{(0)})_{jk}\sim\mathcal N(0,1/n_{\ell-1}).
\]
Fix a
ReLU-kink selector \(\kappa\in[0,1]\): derivatives at positive, negative,
and zero preactivations are \(1\), \(0\), and \(\kappa\), respectively.  For fresh
\(x^{(t)}\sim\mathcal D\), run the exact all-layer recursion
\[
\theta^{(t+1)}=\theta^{(t)}-
\eta\nabla_\theta^{(\kappa)}
\ell\!\left(h(x^{(t)})f_{\theta^{(t)}}(x^{(t)})\right),
\qquad \ell(a)=\log(1+e^{-a}),
\]
and return the tie-resolved latter-half score
\(A(x)=\sum_{t=\lceil T/2\rceil}^{T}f_{\theta^{(t)}}(x)\).

For \(r\geq0\), let
\(B_\infty(\theta^{(0)},r)=\{\theta:
\|\theta-\theta^{(0)}\|_\infty\leq r\}\), and define
\[
M_r(\theta^{(0)})=
\begin{cases}
+\infty,&L=1,\\
\displaystyle\inf_{\substack{\theta\in B_\infty(\theta^{(0)},r),\
x\in\mathcal X,\\
1\leq\ell<L,\ j\in[n_\ell]}}
|u_{\ell,j}(\theta,x)|,&L\geq2,
\end{cases}
\]
\[
G_r(\theta^{(0)})=
\sup_{\substack{\theta\in B_\infty(\theta^{(0)},r),\
x\in\mathcal X,\\ y\in\{-1,+1\}}}
\|\nabla_\theta^{(\kappa)}\ell(yf_\theta(x))\|_\infty,
\qquad
E_r=\{M_r>0,\ \eta TG_r\leq r\}.
\]
\textbf{This event depends only on initialization and worst-case quantities over the
fixed ball}.  Finally, for paths
\(p=(i_0,\ldots,i_{L-1})\in\prod_{\ell=0}^{L-1}[n_\ell]\), define
\[
[\varphi_{\theta^{(0)}}(x)]_p
=x_{i_0}\prod_{\ell=1}^{L-1}
\mathbf1\{u_{\ell,i_\ell}(\theta^{(0)},x)>0\},
\qquad
d_{\rm path}=\prod_{\ell=0}^{L-1}n_\ell.
\]
Let \(\mathcal P_{\rm gate}\) be the unconditional law of this map under
Gaussian initialization, defined on both \(E_r\) and \(E_r^c\).

\paragraph{Technical assumptions.}
\begin{assumption}[Fixed source witnesses and regime]
\label{assump:fixed-source-witnesses}
\(0\leq\varepsilon<1/4\), \(T\) is a positive integer, and \(\eta>0\).  For the given
\((n,\mathcal H,\varepsilon)\), the architecture, \(S,\eta,T\), and the
fixed tie and kink conventions are selected once before every
\((\mathcal D,h)\).
\end{assumption}

\begin{assumption}[Universal expected-error SGD premise]
\label{assump:universal-expected-success}
For every distribution \(\mathcal D\) on \(\mathcal X\) and every
\(h\in\mathcal H\), the strict classification error of the prescribed
latter-half predictor satisfies
\[
\mathbb E_{\theta^{(0)},x^{(0)},\ldots,x^{(T-1)}}
\left[
\Pr_{x\sim\mathcal D}
\{\operatorname{sign}_{s_0}(A(x))h(x)<0\}
\right]
\leq\varepsilon.
\]
\end{assumption}

\begin{assumption}[\textbf{Fixed constant depth}]\label{assump:constant-depth}
A universal positive integer \(L_0\), independent of all problem and learner
parameters, satisfies \(1\leq L\leq L_0\).  Widths are arbitrary and all
layers remain trainable.
\end{assumption}

\begin{assumption}[\textbf{Static robust initialization tube}]\label{assump:robust-tube}
Deterministic \(r>0\) and \(0\leq\delta_0\leq\varepsilon\) are fixed before
initialization, distribution, target, and SGD samples, and
\[
\Pr_{\theta^{(0)}}(E_r)\geq1-\delta_0.
\]
This initialization-only event is defined by worst-case quantities over the
fixed ball.  Trajectory containment, gate stability, and the path
representation are derived from \(E_r\).
\end{assumption}

\paragraph{Main theory.}
\begin{theorem}[Conditional polynomial probabilistic dimension]
Under Assumptions~\ref{assump:fixed-source-witnesses},
\ref{assump:universal-expected-success},
\ref{assump:constant-depth}, and~\ref{assump:robust-tube}, the single
unconditional law \(\mathcal P_{\rm gate}\), chosen before every distribution
and target, satisfies for all \(\mathcal D\) and \(h\in\mathcal H\)
\[
\mathbb E_{\varphi\sim\mathcal P_{\rm gate}}
\left[\inf_{w\in\mathbb R^{d_{\rm path}}}
R_{\mathcal D,h}(w,\varphi)\right]
\leq\varepsilon+\delta_0.
\]
Consequently,
\[
\operatorname{dc}_{\varepsilon+\delta_0}(\mathcal H)
\leq d_{\rm path}\leq S^L\leq S^{L_0},
\qquad
\operatorname{dc}_{2\varepsilon}(\mathcal H)\leq S^{L_0}.
\]
The learner premise averages over initialization and the fixed finite SGD
horizon; the tube premise is an initialization probability; and the conclusion
averages over the unconditional feature-map law.
\end{theorem}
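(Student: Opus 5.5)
The plan is to show that, on the tube event \(E_r\), the whole SGD trajectory stays in the ball \(B_\infty(\theta^{(0)},r)\). On that ball every hidden gate pattern is frozen at its initialization value. Hence the returned score \(A\) is exactly linear in the path features \(\varphi_{\theta^{(0)}}\). Conditioning on the initialization then converts the universal SGD premise into the bound on \(\mathbb E_{\varphi\sim\mathcal P_{\rm gate}}[\inf_w R]\), with \(\delta_0\) paying for \(E_r^c\).

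\textbf{Trajectory containment.} I would argue by induction on \(t\le T\) that \(\theta^{(t)}\in B_\infty(\theta^{(0)},r)\) on \(E_r\). If \(\theta^{(0)},\dots,\theta^{(t)}\) lie in the ball, then each update has \(\infty\)-norm at most \(\eta G_r\). This holds because \(G_r\) is a supremum over the ball, over all \(x\in\mathcal X\) and over both labels \(y\), so it covers \(y=h(x^{(t)})\). Therefore \(\|\theta^{(t+1)}-\theta^{(0)}\|_\infty\le (t+1)\eta G_r\le \eta TG_r\le r\). This step is deterministic given \(\theta^{(0)}\) and holds for every \(\mathcal D,h\) and every sample sequence.

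\textbf{Gate stability and path linearization.} For \(\theta\) in the ball, \(M_r>0\) says that no preactivation \(u_{\ell,j}(\theta,x)\) vanishes. Fix \(x\in\mathcal X\) and a layer-\(\ell\) unit \(j\). The map \(s\mapsto u_{\ell,j}(\theta^{(0)}+s(\theta-\theta^{(0)}),x)\) is continuous on \([0,1]\), since it is a composition of linear maps and ReLUs, and the ball is convex. By the intermediate value theorem its sign never changes, so \(\mathbf 1\{u_{\ell,j}(\theta,x)>0\}=\mathbf 1\{u_{\ell,j}(\theta^{(0)},x)>0\}\). With gates frozen, expanding \(f_\theta(x)=\theta_L D_{L-1}\theta_{L-1}\cdots D_1\theta_1x\), where \(D_\ell\) is the diagonal gate matrix, gives a sum over paths: \(f_\theta(x)=\sum_p\big(\prod_\ell (\theta_\ell)_{i_\ell i_{\ell-1}}\big)[\varphi_{\theta^{(0)}}(x)]_p\). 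Thus \(f_{\theta^{(t)}}(x)=\langle w_t,\varphi_{\theta^{(0)}}(x)\rangle\) for every \(t\) in the latter half, and \(A(x)=\langle w,\varphi_{\theta^{(0)}}(x)\rangle\) with \(w=\sum_t w_t\in\mathbb R^{d_{\rm path}}\). For \(L=1\) the same identity holds trivially with \(\varphi=\varphi_{\rm id}\). Consequently, on \(E_r\), \(\inf_w R_{\mathcal D,h}(w,\varphi_{\theta^{(0)}})\le \Pr_x\{\operatorname{sign}_{s_0}(A(x))h(x)<0\}\) pointwise in the SGD samples, and this survives taking expectation over them.

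\textbf{Averaging.} The inner infimum is at most \(1\), and \(\varphi\sim\mathcal P_{\rm gate}\) is exactly \(\varphi_{\theta^{(0)}}\) with \(\theta^{(0)}\) Gaussian. Splitting the expectation over \(E_r\) and \(E_r^c\) gives
\[\mathbb E[\inf_w R]\le \mathbb E[\mathbf 1_{E_r}\mathcal L(A)]+\Pr(E_r^c)\le\varepsilon+\delta_0,\]
using Assumption~\ref{assump:universal-expected-success} together with \(\mathcal L\ge 0\). The law \(\mathcal P_{\rm gate}\) does not depend on \((\mathcal D,h)\), so this yields \(\operatorname{dc}_{\varepsilon+\delta_0}(\mathcal H)\le d_{\rm path}\). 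The count \(d_{\rm path}=\prod_{\ell<L}n_\ell\le\prod_{\ell=1}^{L}(n_\ell n_{\ell-1})\le S^L\) holds because each factor \(n_\ell n_{\ell-1}\ge n_{\ell-1}\) and each is at most \(S\). Since \(S\ge1\) and \(L\le L_0\), this gives \(S^L\le S^{L_0}\). Finally, \(\delta_0\le\varepsilon\) and monotonicity of \(\operatorname{dc}_\alpha\) in \(\alpha\) give the \(2\varepsilon\) bound.

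I expect the main obstacle to be the measurability and bookkeeping in the averaging step rather than the linearization. One must check that \(\inf_w R_{\mathcal D,h}(w,\varphi)\) is a measurable function of \(\theta^{(0)}\). This holds because \(\mathcal X\) is finite, \(\varphi\) takes finitely many values, and it is piecewise constant on gate cells. One must also check that the pointwise bound is integrated against the joint law of initialization and SGD samples in the right order, noting that the infimum depends only on \(\theta^{(0)}\). A secondary subtlety is that the kink convention \(\kappa\) is irrelevant, since zero preactivations never occur in the ball on \(E_r\).
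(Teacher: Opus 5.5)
Your proposal is correct and follows essentially the same route as the paper. On the initialization-only tube event $E_r$ you derive trajectory containment, frozen gates, and the exact path-feature representation of the latter-half score; you then absorb $E_r^c$ into the $\delta_0$ term and use the depth cap to bound $d_{\rm path}\le S^L\le S^{L_0}$, which matches the mechanism the paper describes (tube freezes gates, bias-free path map, depth cap bounds dimension, randomized target absorbs the bad-seed event).
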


\paragraph{Discussion.}
This is \textbf{partial progress}.  It preserves one
Gaussian-initialized ReLU learner, finite-horizon all-layer SGD, and uniform
expected error over distributions and targets.  It differs from the source
problem in four ways:
\begin{enumerate}[label=(\roman*)]
\item the initialization must satisfy a robust-tube condition with high
probability;
\item the network depth is bounded by a constant $L_0$;
\item the network is bias-free;
\item the target is weakened to randomized approximate dimension
$\operatorname{dc}_{\varepsilon+\delta_0}(\mathcal H)\le S^{L_0}$.
\end{enumerate}

\noindent\textbf{Technical role and remaining barrier.}
The source setting leaves three technical barriers.  Instance-dependent SGD
can change hidden gates, so trajectories need not share a fixed feature map;
unrestricted depth can make the number of path features too large; and bad
initializations cannot be discarded when the target is deterministic and
exact.  The robust tube freezes the gates, the depth cap bounds the path-feature
dimension, and the bias-free architecture supports the displayed homogeneous
path-feature map; the randomized approximate target absorbs the bad-seed
event.  These devices give a polynomial feature bound, but not the source's
exact linear $O(TS)$ bound.  The remaining problem is to construct that common
representation without the tube, depth cap, bias-free restriction, or target
relaxation.

\subsubsection{Subproblem 2: Statistical Query Learning}
A $\tau-$statistical query (SQ) oracle for input distribution $\mathcal D$ and target $h^*$, on input query $q: \mathcal X \times \{\pm 1\} \to [-1, 1]$ and tolerance $\tau$ returns an arbitrary value $v$ such that $|v - \mathbb E_{x\sim \mathcal D} q(x, h^*(x))| \leq \tau.$ A (randomized) $(m, \tau)-$statistical query (SQ) algorithm operates by making a sequence of m queries to a $\tau-$SQ oracle where each query can depend on all previous responses and can be selected at random, and then returns a predictor $\hat h: \mathcal X \to \{\pm1\}.$
\begin{openquestion}
\textit{Is there a constant $C$ such that for every class $\mathcal H \subseteq \{\pm1\}^\mathcal X$, over any domain $\mathcal X$, and any $\epsilon < 1/4$, if there exists an $(m, \tau)-$SQ algorithm s.t. for every input distribution $\mathcal{D}$ and every $h^*\in \mathcal H$, the algorithm returns a predictor $\hat h$ with $\mathbb E \mathcal L_{\mathcal D,h^*} (\hat h) \leq \epsilon$ (expectation over the randomness of the algorithm), then $\operatorname{dc}(\mathcal{H}) \leq C \cdot m/\tau^2$.}
\end{openquestion}

\paragraph{\underline{Perspective 2.}}

\paragraph{Formalized setting and preliminaries.}
Let \((\mathcal X,\Sigma)\) be an arbitrary measurable space, and let
\(\mathcal H\) be a class of measurable maps from
\(\mathcal X\) to \(\{-1,+1\}\).  Write
\(\mathbb N_0=\{0,1,2,\ldots\}\).  For a probability measure \(\mathcal D\)
on \((\mathcal X,\Sigma)\), measurable target \(h\), and measurable binary
predictor \(g\), write
\(\mathcal L_{\mathcal D,h}(g)=\Pr_{x\sim\mathcal D}\{g(x)\ne h(x)\}\).
Dimension complexity is the least \(d\in\mathbb N_0\) for which one map
\(\varphi:\mathcal X\to\mathbb R^d\), measurable with respect to the Borel
sigma-algebra on \(\mathbb R^d\), satisfies
\[
\forall h\in\mathcal H\ \exists w_h\in\mathbb R^d\ \forall x\in\mathcal X,
\qquad h(x)\langle w_h,\varphi(x)\rangle>0.
\]
If no such finite \(d\) exists, set
\(\operatorname{dc}(\mathcal H)=+\infty\).

Fix a randomized learner \(A\) with hidden seed \(U\sim\mu_A\).  Conditional
on \(U=u\), it makes at most \(m\) adaptive unrestricted queries
\(q_t:\mathcal X\times\{-1,+1\}\to[-1,1]\), measurable with respect to the
product sigma-algebra, each selected from the seed and the preceding public
transcript, and then returns a measurable binary predictor.  All seed-,
transcript-, query-, and output-coordinate maps are assumed measurable.

A deterministic \emph{complete response rule} \(R\) assigns a value in
\([-1-\tau,1+\tau]\) to every public query-bearing history admitted by \(A\),
including histories not reached by a particular seed or reply sequence.  It
does not observe the seed except through the public transcript.  Let
\(\mathfrak R_A^{\rm all}\) contain all such rules, and let \(g_{u,R}\) be the
terminal predictor.
A complete rule is \((\mathcal D,h,\tau)\)-valid if, at every round reached
by every seed interacting with it,
\[
\left|v_t-\mathbb E_{x\sim\mathcal D}q_t(x,h(x))\right|\leq\tau.
\]
Denote the valid rules by \(\mathfrak R_{A,\tau}(\mathcal D,h)\).  Accuracy
below is required pointwise for every valid response rule.

For every complete rule, define the seed-averaged terminal response and the
\textbf{static all-rule space}
\[
F_R(x)=\mathbb E_{U\sim\mu_A}[g_{U,R}(x)],\qquad
V_A=\operatorname{span}_{\mathbb R}
\{F_R:R\in\mathfrak R_A^{\rm all}\},\qquad
r_A=\dim V_A.
\]
Thus \(V_A\) spans the seed-averaged terminal responses over all complete
response rules.

\paragraph{Technical assumptions.}
\begin{assumption}[Primitive parameter regime]\label{assump:parameter-regime}
\(m\in\mathbb N_0\), \(\tau>0\), and
\(\varepsilon\in[0,1/4)\).  Fixed numerical constants \(B\geq1\) and
\(k\geq1\) are independent of the domain, class, learner instance, parameters,
distributions, targets, response rules, and seeds.
\end{assumption}

\begin{assumption}[Fixed randomized adaptive unrestricted-SQ interface]
\label{assump:adaptive-sq-interface}
The learner \(A\) is fixed before \(\mathcal D\), \(h\), and the response
policy.  It uses at most \(m\) adaptive bounded unrestricted queries, may use
the full preceding real-valued public transcript, and interacts measurably
with every deterministic complete response rule.
\end{assumption}

\begin{assumption}[Every-valid-policy universal guarantee]
\label{assump:universal-adversarial-guarantee}
For every distribution \(\mathcal D\), every \(h\in\mathcal H\), and every
\(R\in\mathfrak R_{A,\tau}(\mathcal D,h)\),
\[
\mathbb E_{U\sim\mu_A}\mathcal L_{\mathcal D,h}(g_{U,R})
\leq\varepsilon.
\]
The expectation is over the learner seed, uniformly for every valid \(R\).
\end{assumption}

\begin{assumption}[\textbf{Static polynomial mean-response-rank certificate}]
\label{assump:mean-response-rank}
The rank over all deterministic complete response rules is finite and obeys
\[
r_A\leq B\bigl(1+m/\tau^2\bigr)^k.
\]
This primitive certificate on the seed-averaged responses is fixed before the
learning instance.
\end{assumption}

\paragraph{Main theory.}
\begin{theorem}[Conditional static mean-response-rank theorem]
Under Assumptions~\ref{assump:parameter-regime}--
\ref{assump:mean-response-rank}, choose once a basis
\(\psi_1,\ldots,\psi_{r_A}\) of \(V_A\) and define
\(\varphi_A(x)=(\psi_1(x),\ldots,\psi_{r_A}(x))\).  This deterministic map
is independent of the distribution, target, valid response policy, and
realized seed.  For every \(h\in\mathcal H\), some
\(w_h\in\mathbb R^{r_A}\) satisfies
\[
h(x)\langle w_h,\varphi_A(x)\rangle
\geq1-2\varepsilon>\frac12>0
\qquad\text{for every }x\in\mathcal X.
\]
Consequently,
\[
\operatorname{dc}(\mathcal H)
\leq r_A\leq B\bigl(1+m/\tau^2\bigr)^k.
\]
The conclusion is deterministic and pointwise.
\end{theorem}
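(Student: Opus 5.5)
The plan is to exhibit, for each target $h$, one specific valid response rule whose seed-averaged output $F_R$ already correlates with $h$ at every point, and then read off $w_h$ from the coordinates of $F_R$ in the fixed basis. Fix $h\in\mathcal H$. For each distribution $\mathcal D$, the exact-answer rule $R^{\rm ex}_{\mathcal D,h}$ answers each admitted query history with $v_t=\mathbb E_{x\sim\mathcal D}q_t(x,h(x))$. This value lies in $[-1,1]\subseteq[-1-\tau,1+\tau]$, so the rule is complete and trivially $(\mathcal D,h,\tau)$-valid. The difficulty is that $R^{\rm ex}$ depends on $\mathcal D$, whereas we need a single $F$ that is good pointwise at every $x$.

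The key step is therefore to show that one $\mathcal D$-independent function in $V_A$ suffices, or else to produce a family of $\mathcal D$-dependent functions and combine them. The first attempt uses point masses. For each $x_0\in\mathcal X$, take $\mathcal D=\delta_{x_0}$. Assumption~\ref{assump:universal-adversarial-guarantee} gives $\Pr_U\{g_{U,R}(x_0)\neq h(x_0)\}\le\varepsilon$, so $h(x_0)F_R(x_0)\ge 1-2\varepsilon$ with $R=R^{\rm ex}_{\delta_{x_0},h}$. This yields the correct pointwise margin, but with a rule depending on $x_0$.

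To obtain a single vector, I will use a minimax/separation argument inside the finite-dimensional space $V_A$ (finite by Assumption~\ref{assump:mean-response-rank}). Consider the convex hull $K_h$ of the functions $\{hF_R : R\ \text{valid for some }(\mathcal D,h)\}$, viewed through the coordinate map. Suppose no $F\in\operatorname{conv}\{F_R\}$ satisfies $h F\ge 1-2\varepsilon$ everywhere. Then, by a separating-hyperplane or minimax step (von Neumann/Sion on the compact convex set of coefficient vectors against mixtures of points), there is a finitely supported distribution $\mathcal D^\ast$ with
\[
\mathbb E_{x\sim\mathcal D^\ast}[h(x)F_R(x)]<1-2\varepsilon
\]
for every $F$ in the hull. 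Taking $R=R^{\rm ex}_{\mathcal D^\ast,h}$ contradicts the accuracy guarantee, since $\mathbb E_{\mathcal D^\ast}[hF_R]=1-2\,\mathbb E_U\mathcal L_{\mathcal D^\ast,h}(g_{U,R})\ge1-2\varepsilon$. Hence some $F^\ast=\sum_i\lambda_iF_{R_i}\in V_A$ has $h F^\ast\ge1-2\varepsilon$ pointwise. Expanding $F^\ast=\sum_j w_{h,j}\psi_j$ gives $h(x)\langle w_h,\varphi_A(x)\rangle\ge 1-2\varepsilon>1/2$, because $\varepsilon<1/4$. Since $\varphi_A$ is $h$-independent, this proves $\operatorname{dc}(\mathcal H)\le r_A\le B(1+m/\tau^2)^k$. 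Measurability of $\varphi_A$ follows from the measurability of each $F_R$, which is an expectation of a jointly measurable map.

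The main obstacle is the minimax step. The coefficient set need not be compact, and $\mathcal X$ may be infinite. I would first restrict to finite subsets $X_0\subseteq\mathcal X$, where LP duality between mixtures of rules and distributions on $X_0$ applies cleanly. Then, since everything lives in the fixed finite-dimensional $V_A$, I would pass to the limit by compactness. To make that limit legitimate, I would bound the coefficients using the uniform bound $|F_R|\le1$ together with a fixed basis evaluated on $r_A$ points where it is unisolvent. This gives a bounded set of coefficient vectors, so a limit point exists and satisfies the closed conditions $hF\ge1-2\varepsilon$ on every finite $X_0$, and hence on all of $\mathcal X$.
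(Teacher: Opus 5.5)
Your proposal is correct, and it appears to follow the same route the paper signals ("seed averaging and exactification" via the canonical exact-center policy; the proof itself is only in the linked repository). It seed-averages into the finite-dimensional space $V_A$, uses the exact-answer policy to get $\mathbb E_{x\sim\mathcal D}[h(x)F_R(x)]=1-2\,\mathbb E_U\mathcal L_{\mathcal D,h}(g_{U,R})\ge 1-2\varepsilon$ for every $\mathcal D$, and upgrades this distribution-wise margin to a pointwise one by a minimax/separation argument over finitely supported distributions on the compact closed convex hull of $\{F_R\}$ in $V_A$. One small correction: your $F^\ast$ lies in that closed hull rather than being a finite combination $\sum_i\lambda_iF_{R_i}$, but it is still in $V_A$, so reading off $w_h$ is unaffected, and your finite-subset compactness step (nested nonempty compact sets with the finite intersection property) already produces exactly such an element.
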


\paragraph{Discussion.}
This is \textbf{partial progress}.  It preserves arbitrary domains,
adaptive randomized SQ learning, adversarial tolerance-valid replies, and
uniform success over distributions and targets.  \emph{The decisive added condition
is a static polynomial-rank certificate for the seed-averaged terminal
responses.}  The resulting dimension bound is polynomial,
$B(1+m/\tau^2)^k$, rather than the source target $O(m/\tau^2)$.

\noindent\textbf{Technical role and remaining barrier.}
The source SQ interface leaves a finite-dimensionality barrier: adaptive
real-valued transcripts do not by themselves place the terminal predictors in
a common finite-dimensional function space.  The static rank certificate
supplies that common space, so seed averaging and exactification can produce a
deterministic representation.  The remaining technical problem is to derive
the source-scale compression from the SQ interface itself: if
$F^0_{\mathcal D,h}$ is the seed-averaged terminal predictor under the
canonical exact-center policy, then
\[
 \dim\operatorname{span}\{F^0_{\mathcal D,h}:\mathcal D,h\}
 \le C m/\tau^2.
\]

\subsection{Does Differential Privacy Make PAC Learning Much Harder?}
\begin{openquestion}
Denote $VC(C)$ the Vapnik-Chervonenkis dimension for a hypothesis class $C$, $LD(C)$ \citep{littlestone1988learning} the Littlestone dimension which is a combinatorial parameter characterizing online learnability. A randomized algorithm $A: (X\times \{0, 1\})^n\to \mathcal W$ is $(\epsilon, \delta)-$Differentially Private (DP) \citep{dwork2016calibrating} if for every pair of neighboring datasets $S, S'$
and every event $E \subseteq  \mathcal W$ it holds that $\mathrm{Pr}[A(S)\in E] \leq e^\epsilon \mathrm{Pr}[A(S')\in E] + \delta.$ A private learner
is a PAC learner that guarantees DP w.r.t. its training data. 
It is well known that DP learning requires
more samples than non-private learning for some classes. \citet{nissim2026private} asks the central question: \textit{How
much more is needed? Is the answer close to VC(C), or could it be drastically larger?}
\end{openquestion}

\subsubsection{Subproblem 1: The Sample Complexity of Private Learning}
\begin{openquestion}
\textit{Identify a combinatorial measure of a class that determines the sample complexity of privately learning it, analogously to the characterization of non-private learning in terms of the VC dimension.}
\end{openquestion}

\paragraph{\underline{Perspective 1.}}

\paragraph{Formalized setting and preliminaries.}
Let \((X,\Sigma)\) be a measurable space, \(C\subseteq\{0,1\}^X\), and \(\log\)
be natural.  A Cartesian partition makes restriction bijective onto the
blockwise product; it is finest if it refines every such partition.  For
\(t\geq1\), let \(\log_2^{(0)}t=t\), let \(\log_2^{(r)}\) denote the
\(r\)-fold iterate of the base-two logarithm for \(r\geq1\), and write
\(\log^*t=\min\{r\geq0:\log_2^{(r)}t\leq1\}\).  For the \textbf{finite finest
Cartesian partition} specified below, write
\(X=\bigsqcup_{i=1}^kX_i\), \(C_i=\{c|_{X_i}:c\in C\}\), and, for a
candidate positive integer \(n\), set
\begin{equation}
\label{priv-eq-complexity}
d_i=\operatorname{LD}(C_i),\quad s_i=1+\log^*(d_i+1),\quad
M_\oplus(C)=\sum_i s_i,\quad \omega_i=\frac{s_i}{M_\oplus(C)},\quad
m_{n,i}=\max\{8,\lceil4n\omega_i\rceil\}.
\end{equation}
Identify \(x,x'\in X_i\) when every \(c_i\in C_i\) agrees on them, and
write \(Q_i=X_i/{\equiv_i}\), \(\kappa_i:X_i\to Q_i\), and
\(\Sigma_i=\{A\cap X_i:A\in\Sigma\}\), giving \(Q_i\) the discrete
sigma-field \(2^{Q_i}\).  Every \(c_i\in C_i\) has a unique
representative \(\bar c_i:Q_i\to\{0,1\}\) satisfying
\(c_i=\bar c_i\circ\kappa_i\); set
\[
\bar C_i=\{\bar c_i:c_i\in C_i\},\qquad
\mathcal H_i=\{0,1\}^{Q_i},\qquad
\mathcal H^\oplus=\prod_i\mathcal H_i,
\]
where \(\mathscr H_i\) is the product sigma-field generated by finite
evaluation cylinders and \(\mathscr H^\oplus=\bigotimes_i\mathscr H_i\).
A quotient tuple \(\bar h=(\bar h_i)_i\in\mathcal H^\oplus\) decodes as
\(h_{\bar h}(x)=\bar h_i(\kappa_i(x))\) for \(x\in X_i\).

For a probability measure \(D\) on \((X,\Sigma)\) and \(c\in C\), let
\(R_D(h,c)=\Pr_D[h(x)\ne c(x)]\) and let
\(D_c\) be the law of \((x,c(x))\).  An unrestricted learner is a Markov
kernel on fixed-size replacement-adjacent samples, including nonrealizable
ones, with arbitrary measurable
output \(\Omega\) and decoder \(h_\Omega\); it may be joint, improper, or
computationally unbounded.  Finite evaluations and stated risk events are
measurable.  Define \(\operatorname{SC}_{\varepsilon,\delta}(C)\) as the
least positive integer \(n\) for which such a private learner satisfies, for
every \(c\in C\) and probability measure \(D\) on \((X,\Sigma)\),
\begin{equation}
\label{priv-eq-sample-complexity}
\Pr_{S\sim D_c^n,\,\Omega\sim A_n(S,\cdot)}
  [R_D(h_\Omega,c)\leq1/16]\geq15/16.
\end{equation}

\paragraph{Technical assumptions.}
\begin{assumption}[\textbf{Canonical finite Cartesian factorization}]
\label{priv-assump-cartesian}
There is a finite finest Cartesian partition
\(X=\bigsqcup_{i=1}^kX_i\), \(k\geq1\), with
\(C_i=\{c|_{X_i}:c\in C\}\) and \(C\simeq\prod_{i=1}^k C_i\).
\end{assumption}
\begin{assumption}[\textbf{VC-one factors}]
\label{priv-assump-factors}
Every \(C_i\) is nonconstant, has \(\operatorname{VC}(C_i)=1\), and has
finite \(d_i\).
\end{assumption}
\begin{assumption}[\textbf{Measurable countable evaluation quotients}]
\label{priv-assump-quotients}
Each \(X_i\in\Sigma\), each \(Q_i\) is finite or countable, and each
cell \(\kappa_i^{-1}(\{q\})\) belongs to \(\Sigma_i\).  Equivalently,
\(\kappa_i:(X_i,\Sigma_i)\to(Q_i,2^{Q_i})\) is measurable.
\end{assumption}
Consequently, every target and decoded hypothesis is \(\Sigma\)-measurable,
and quotient decoding preserves distributional zero-one risk exactly.
\begin{assumption}[Approximate-DP range]
\label{priv-assump-privacy}
\(0<\varepsilon\leq1/10\) and \(0<\delta<1\).
\end{assumption}
\begin{assumption}[Candidate-wise lower-bound budget]
\label{priv-assump-candidate}
At the lower-bound candidate \(n\),
\begin{equation}
\label{priv-eq-candidate-delta}
0<\delta\leq\min\left\{\frac1{n\log(n+1)},
\min_i\frac{c_\delta}{m_{n,i}^2\log(m_{n,i}+1)}\right\},
\end{equation}
where \(c_\delta>0\) is universal.
\end{assumption}

\paragraph{Main theory.}
\begin{theorem}[Conditional private direct sum for Cartesian VC-one products]
\label{priv-thm-direct-sum}
Under Assumptions~\ref{priv-assump-cartesian}--\ref{priv-assump-privacy},
let \(K_Y\) be the universal constant of the quotient-first VC-one learner
of \citet{yan2025vc1}, and define
\[
q_i=\left\lceil\frac{K_Ys_i}{\varepsilon}
 \log^2\!\left(\frac{es_i}{\varepsilon\delta}\right)\right\rceil,
\qquad Q_\oplus=\sum_iq_i.
\]
With \(C_{\rm up}=65536\) and
\(C_{\rm quota}=\max\{1,K_Y+1/20\}\), route the first \(q_i\) block
records with fixed padding, run fixed permutation-symmetrized, fully
totalized factor learners for \(\bar C_i\) into
\((\mathcal H_i,\mathscr H_i)\) at
\((\varepsilon/2,\delta/2)\), form their tuple in
\((\mathcal H^\oplus,\mathscr H^\oplus)\), and decode.
This is a measurable all-input replacement-\((\varepsilon,\delta)\)-DP
learner.  For \(n\geq\lceil C_{\rm up}Q_\oplus\rceil\),
\eqref{priv-eq-sample-complexity} holds and
\begin{equation}
\label{priv-eq-upper}
\operatorname{SC}_{\varepsilon,\delta}(C)
\leq\lceil C_{\rm up}Q_\oplus\rceil
\leq C_{\rm up}C_{\rm quota}\frac{M_\oplus(C)}{\varepsilon}
\log^2\!\left(\frac{eM_\oplus(C)}{\varepsilon\delta}\right).
\end{equation}
The upper bound holds throughout \(0<\delta<1\).

Conversely, for universal \(c_{\rm low}>0\), every candidate satisfying
\eqref{priv-eq-candidate-delta} and every unrestricted measurable
replacement-\((\varepsilon,\delta)\)-DP learner satisfying
\eqref{priv-eq-sample-complexity} obey
\(n\geq c_{\rm low}M_\oplus(C)\).  Below this threshold some full-product
target and arbitrary-support \(D\) with \(D(X_i)=\omega_i\) satisfy
\begin{equation}
\label{priv-eq-lower-witness}
\Pr_{S\sim D_c^n,\,\Omega\sim A_n(S,\cdot)}
 [R_D(h_\Omega,c)>1/16]>1/16.
\end{equation}
When \eqref{priv-eq-candidate-delta} holds at
\(n_*=\operatorname{SC}_{\varepsilon,\delta}(C)\), the lower bound applies at
the sample-complexity threshold and combines with \eqref{priv-eq-upper}.
\end{theorem}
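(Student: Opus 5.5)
The proof splits into an upper bound via a direct-sum composition of private factor learners, and a lower bound via a private packing/direct-sum argument over the factors. Throughout, the quotient maps $\kappa_i$ reduce each factor $C_i$ to the countable class $\bar C_i$ on $Q_i$ without changing risk. Assumption~\ref{priv-assump-quotients} makes decoding measurable. The pushforward $D\mapsto\kappa_i{}_\#(D|_{X_i})$ preserves zero-one disagreement exactly. Hence it suffices to learn the tuple $(\bar c_i)_i$.

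For the upper bound, I first invoke the VC-one private learner of \citet{yan2025vc1}. It learns a VC-one class of Littlestone dimension $d_i$ with sample complexity governed by $\log^*(d_i)$; this is why the quota $q_i$ is stated in terms of $s_i$. I run it at target accuracy and confidence scaled with the block mass. Concretely, the risk decomposes as $R_D(h,c)=\sum_i D(X_i)\,R_{D_i}(\bar h_i,\bar c_i)$, where $D_i$ is the conditional law on block $i$. It is then enough to obtain conditional risk $\le 1/32$ on every block with $D(X_i)$ non-negligible; blocks whose total mass is below $1/32$ can be ignored.

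A Chernoff bound shows that $n\ge C_{\rm up}Q_\oplus$ samples give each heavy block at least $q_i$ records with high probability. Routing exactly the first $q_i$ records (with fixed padding) makes each factor learner see a fixed-size input. A replacement changes only one record in one block's input, so privacy is inherited: each factor learner runs at $(\varepsilon/2,\delta/2)$, the learners act on disjoint parts of the input, and parallel composition applies. Permutation symmetrization handles the routing order.

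A union bound over blocks requires confidence $\delta$-type terms summed over $k$ blocks. I would absorb this into the $\log^2(e s_i/(\varepsilon\delta))$ factor by using a per-block failure probability proportional to $\omega_i$. The final inequality in \eqref{priv-eq-upper} then follows from $q_i\le C_{\rm quota}(s_i/\varepsilon)\log^2(eM_\oplus/(\varepsilon\delta))$ together with $s_i\le M_\oplus$.

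For the lower bound, I take $D$ with $D(X_i)=\omega_i$, placing the known hard distribution for VC-one/Littlestone classes on each block. A private learner for a class with Littlestone dimension $d$ needs $\Omega(\log^* d)$ samples, and a nonconstant class needs at least constant samples, which gives the $+1$ in $s_i$. Suppose $n< c_{\rm low}M_\oplus$. Block $i$ then receives about $n\omega_i\le m_{n,i}$ samples, which is below $c\, s_i$ for blocks carrying a constant fraction of the weight $\sum\omega_i=1$.

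Next I restrict the product learner to block $i$. I fix the other coordinates of the target and simulate the other blocks' samples with public randomness. This gives a private learner for $C_i$ with about $m_{n,i}$ samples, which is still $(\varepsilon,\delta)$-DP with respect to block-$i$ replacements. The budget \eqref{priv-eq-candidate-delta} is exactly the $\delta\lesssim 1/(m^2\log m)$ regime in which the $\log^*$ lower bound (via the Ramsey/interior-point reduction) holds. The conclusion is that such a learner has conditional block error bounded below with constant probability. Summing the weighted errors then yields \eqref{priv-eq-lower-witness}.

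The main obstacle is this aggregation step. Failures on individual blocks, each occurring with constant probability, must combine into total risk $>1/16$ with probability $>1/16$, even though the learner may correlate its errors across blocks. I would first try an averaging argument: show $\mathbb E[R_D]\ge c'$ from blockwise expected errors, and use $R_D\le1$ to convert this into the probability statement via reverse Markov. The constants $c_{\rm low}$ and $c_\delta$ would then be tuned to make the resulting inequality strict.
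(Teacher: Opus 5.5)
\paragraph{Lower bound: the aggregation step fails.} The obstacle is not that the learner may correlate its errors across blocks; linearity of expectation does not care about that. The factor bound you invoke is worst-case: it is the Ramsey-type $\Omega(\log^\ast d)$ bound of Alon--Livni--Malliaris--Moran, whose hypotheses are mirrored by $\varepsilon\le 1/10$, the $(1/16,15/16)$ criterion and $\delta\lesssim 1/(m^2\log m)$. For each learner it produces \emph{some} hard pair $(c_i,D_i)$. There is no learner-independent ``known hard distribution'' you can place on every block.

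In your reduction, the induced learner for $C_i$ depends on the values you fixed for the other blocks. Its hard pair is therefore certified only for the instance $(c_i^\ast,c^0_{-i})$. Once you insert the hard coordinates of the other blocks as well, block $i$'s certificate is lost. You never get one product target on which the blockwise failures coexist, and that is exactly what your averaging step presupposes.

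The missing idea is a learner-independent hard prior $\pi_i$ for each factor. One way to get it is a minimax argument over a finite family of hard instances on a threshold minor, using that $(\varepsilon,\delta)$-DP kernels of fixed sample size form a convex set. You then:
\begin{itemize}
\item draw $c\sim\bigotimes_i\pi_i$;
\item simulate the other blocks from $\pi_{-i}$ with public randomness;
\item feed each real record at most once, and truncate the $\mathrm{Bin}(n,\omega_i)$ block count at $m_{n,i}$.
\end{itemize}
This is in the spirit of the one-use hidden-arm reduction the paper mentions for its companion threshold-minor bound.

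Even with this repair, the constants do not close. A factor bound certifying $\Pr[\mathrm{err}_i>1/16]>1/16$ gives only $\mathbb E[\mathrm{err}_i]>1/256$. Reverse Markov needs $\mathbb E R_D>31/256$ to yield $\Pr[R_D>1/16]>1/16$, and $c_{\rm low}$ and $c_\delta$ do not control these accuracy constants. You therefore need one of two things:
\begin{itemize}
\item a factor lower bound that holds for weak-accuracy learners; or
\item a contrapositive argument. Start from a product learner meeting the $(1/16,15/16)$ criterion. Use Markov over $i\sim\omega$ to find blocks of constant $\omega$-weight whose embedded learners have nontrivial accuracy. Then amplify those learners, at constant sample cost, to the accuracy the factor bound requires.
\end{itemize}
The $k$ part of $M_\oplus$, including blocks where the floor $m_{n,i}=8$ binds, is more simply obtained from the non-private lower bound for the product class, whose VC dimension is exactly $k$.

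\paragraph{Upper bound: two faulty steps.} First, the factor learners cannot be tuned to the block masses $D(X_i)$, because the learner does not know them. Your union bound over blocks also cannot be paid for. The $\delta$ inside $\log^2(es_i/(\varepsilon\delta))$ is the privacy parameter, not the PAC confidence. The fixed quota $q_i$ depends only on $(s_i,\varepsilon,\delta)$. It therefore affords neither a per-block confidence proportional to $\omega_i$, which would cost an extra $\log(M_\oplus/s_i)$, nor a Chernoff union bound over possibly very many heavy blocks.

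The fix is to run every factor learner at one fixed constant accuracy and confidence and bound the expected risk directly:
\begin{itemize}
\item write $\mathbb E R_D=\sum_iD(X_i)\,\mathbb E[\mathrm{err}_i]$;
\item blocks with $D(X_i)<\theta q_i/n$ contribute at most $\theta Q_\oplus/n\le\theta/C_{\rm up}$ in total;
\item every other block contributes $D(X_i)$ times a constant, covering quota shortfall, factor failure and factor error;
\item finish with one Markov step, $\Pr[R_D>1/16]\le 16\,\mathbb E R_D$. This is the kind of slack the large constant $C_{\rm up}=2^{16}$ absorbs.
\end{itemize}

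Second, your privacy justification is wrong even though your parameters are right. A replacement can move a record from $X_i$ to $X_j$, so two routed inputs change. Block $i$ loses a record and its window admits the next record or padding; block $j$ gains a record and may push one out. Permutation symmetrization turns each change into a single multiset replacement. It is basic composition of the two affected $(\varepsilon/2,\delta/2)$ learners that yields $(\varepsilon,\delta)$, not parallel composition over a single block. The same replacement can also create nonrealizable inputs, which is why the factor learners must be fully totalized.
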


\paragraph{Discussion.}
This is \textbf{partial progress}.  It preserves realizable binary
PAC learning, arbitrary targets and data distributions, approximate privacy,
and unrestricted learners in the lower bound.  \emph{The main restriction is to
Cartesian products of VC-one, finite-Littlestone factors with countable
evaluation quotients.}  Within this subclass it gives matching bounds up to
privacy and logarithmic factors; the lower bound additionally requires the
stated small-$\delta$ condition.

\noindent \textbf{Technical role and remaining barrier.}
The source setting leaves two structural barriers: a general finite-
Littlestone class has no factorwise decomposition to which threshold bounds
apply, and arbitrary evaluation spaces can make the learner construction
non-measurable.  Cartesian VC-one structure supplies the factorwise direct
sum, while countable evaluation quotients provide a measurable
quotient-first learner.  The small-$\delta$ condition separately makes the
threshold lower bound applicable.  The remaining problem is to obtain a
comparable intrinsic characterization for every measurable class of finite
Littlestone dimension, without the Cartesian, VC-one, or countable-quotient
assumptions and without the candidate-wise small-$\delta$ restriction.
\subsubsection{Subproblem 2: Class Existence}
\begin{openquestion}
\textit{Does there exist a sequence of finite-size classes $C = \{C_\kappa\}_{\kappa\in \mathbb N}$ where:
(1) $\lim_{\kappa\to \infty} |C_\kappa| = \infty,$ (2) $\log |C_\kappa|$ is superpolynomial in $VC(C_\kappa)$, and (3) The number of
samples required to learn $C_\kappa$ under differential privacy is $\Omega(\log |C_\kappa|)$.}
\end{openquestion}

\paragraph{\underline{Perspective 2.}}

\paragraph{Formalized setting and preliminaries.}
Let \((X,\Sigma)\) be a measurable space, and let \(C\) be a finite class of
\(\Sigma\)-measurable maps from \(X\) to \(\{0,1\}\).  For \(t\in[N]\), let
\(\tau_t(q)=\mathbf 1\{q\leq t\}\).  The class
\(C\) has a \((k,N)\) \textbf{disjoint-threshold minor}
if there are injections \(\phi_j:[N]\to X\) with disjoint images and
concepts \(c_{\mathbf t}\in C\), \(\mathbf t\in[N]^k\), such that
\[
c_{\mathbf t}(\phi_j(q))=\tau_{t_j}(q)
\quad(j\in[k],\ q\in[N]).
\]
For a probability law \(Q\) on the product measurable space
\(X\times\{0,1\}\), write
\(L_Q(h)=\Pr_{(x,y)\sim Q}[h(x)\ne y]\) for measurable \(h\).  Learners use
exactly an integer \(m\geq1\) iid examples and replacement adjacency; their
output hypotheses and all output-coordinate evaluation maps are measurable.
For a probability measure \(P\) on \((X,\Sigma)\) and \(c\in C\), let
\(Q_{P,c}\) be the law of \((x,c(x))\) for \(x\sim P\), and define
\(\log^*t=\min\{r\geq0:\log_2^{(r)}t\leq1\}\).
Here \(\log\) is natural.

\paragraph{Technical assumptions.}
\begin{assumption}[\textbf{Disjoint-threshold restriction table}]
\label{assump:minor-table}
Integers \(k\geq1\), \(N\geq2\), and \(C\) admit the minor defined above.
\end{assumption}
\begin{assumption}[Unrestricted private PAC antecedent]
\label{assump:unrestricted-private-pac}
The kernel \(A:(X\times\{0,1\})^m\rightsquigarrow\{0,1\}^X\) is
replacement-\((\varepsilon_0,\delta_m)\)-DP and, for every \(c\in C\)
and distribution \(P\) on \(X\),
\[
\Pr_{S\sim Q_{P,c}^m,\,h\sim A(S)}
[L_{Q_{P,c}}(h)>\alpha_0]\leq\beta_0.
\]
The learner may be improper and computationally unbounded.
\end{assumption}
\begin{assumption}[Fixed PAC and approximate-privacy scale]
\label{assump:fixed-parameter-scale}
\[
\alpha_0=\frac1{128},\qquad \beta_0=\frac1{32},\qquad
0<\varepsilon_0\leq1,\qquad
0\leq\delta_m\leq\frac{c_\delta}{m^2\log(em)},
\]
where the theorem fixes the universal constant \(c_\delta\).
\end{assumption}

\paragraph{Main theory.}
\begin{theorem}[Private direct-sum threshold-minor lower bound]
There are universal constants \(c,c_\delta>0\) and \(N_0\geq2\) such
that, for every \(k\geq1\), \(N\geq N_0\), domain \(X\), finite class
\(C\), and fixed-sample learner satisfying
Assumptions~\ref{assump:minor-table}--\ref{assump:fixed-parameter-scale},
\[
m\geq c\,k\log^*N,
\]
where \(\log^*\) is the base-two iterated logarithm.
\end{theorem}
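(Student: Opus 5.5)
The plan is to reduce the private learner for \(C\) to a private learner for a \(k\)-fold direct product of threshold classes over \([N]\), and then apply a direct-sum lower bound for private threshold learning. The single-copy bound is the known \(\Omega(\log^*N)\) lower bound for approximate-DP learning of thresholds (Alon--Livni--Malliaris--Moran and its refinements, e.g.\ Bun--Livni--Moran / Cohen et al.); we need it in a form that survives the budget \(\delta_m\le c_\delta/(m^2\log(em))\).

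\textbf{Step 1: reduction.} Given the minor, define the product domain \([k]\times[N]\) with concepts \(\mathbf t\mapsto\bigl((j,q)\mapsto\tau_{t_j}(q)\bigr)\). Map it into \(X\) via \((j,q)\mapsto\phi_j(q)\); the disjoint images make this injective. For every distribution \(P'\) on \([k]\times[N]\), push it forward to a distribution \(P\) on \(X\) supported on the union of images. Labels agree with \(c_{\mathbf t}\), so \(A\) composed with the embedding is a replacement-\((\varepsilon_0,\delta_m)\)-DP learner for the product class. It has the same accuracy \(\alpha_0\) and confidence \(\beta_0\), because the measurable output pulled back along the embedding has the same risk.

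\textbf{Step 2: direct-sum lower bound.} Take \(P'\) uniform over the blocks \(j\), and within each block use the hard threshold distribution from the single-copy lower bound. Since the overall error is \(\alpha_0=1/128\), an averaging/Markov argument shows that for at least a constant fraction (say half) of blocks \(j\), the restricted hypothesis has block-conditional error at most \(1/32\) with probability at least \(1-O(\beta_0)\) in a constant fraction of the runs. Choose one such block \(j^*\), or a random block.

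We then simulate a learner for a single threshold problem from \(m_{j^*}\) examples. Draw the other blocks' targets and samples internally with fresh randomness, embed the real sample into block \(j^*\), and draw the block counts \(\mathrm{Bin}(m,1/k)\) in advance. Handling the random count requires padding or truncation, since changing one real example still changes only one record; this gives a DP learner for thresholds using about \(2m/k\) samples with constant accuracy and confidence. The same \(\delta_m\) is used, and since \(m/k\le m\), the budget \(\delta\le c_\delta/(m'^2\log(em'))\) is implied at the smaller sample size \(m'\).

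The single-threshold bound then gives \(2m/k\ge c'\log^*N\) once \(N\ge N_0\), which yields \(m\ge c\,k\log^*N\).

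\textbf{Main obstacle.} The hard part is the single-copy threshold lower bound with explicit constants, in the regime of constant \(\varepsilon_0\le1\) and \(\delta\) polynomially small in the sample size, at accuracy \(1/32\) and confidence bounded away from \(1/2\). First I would invoke the Ramsey/homogeneous-set argument of Alon et al., checking that their \(\delta\) requirement matches \(c_\delta/(m^2\log m)\), with \(c_\delta\) fixed at the end.

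A second subtlety is the averaging step when the target \(\mathbf t\) is drawn adversarially per block. The simulation must draw the other blocks' targets itself, so the guarantee must hold for every \(\mathbf t\), which Assumption~\ref{assump:unrestricted-private-pac} provides. Binomial fluctuations of the block sizes are handled by Chernoff bounds for \(m\ge k\), and the case \(m<k\) is trivial after adjusting \(c\).
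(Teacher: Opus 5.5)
\textbf{Verdict: genuine gap.} Your architecture matches the paper's: restrict to the minor, embed the real threshold instance in a hidden arm using each real record at most once so that replacement adjacency survives, then apply a small-$\delta$ component threshold bound. The step that fails is the transfer of $A$'s accuracy to the embedded block.

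Assumption~\ref{assump:unrestricted-private-pac} only controls the block average $\frac1k\sum_j\mathrm{err}_j\le 1/128$. That bound allows error as large as $\min\{1,k/128\}$ on a single block. Suppose the real block carries an adversarial instance while your simulator draws the other blocks from some other law. Then $A$ may be able to tell the odd block apart and put all its error there.

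Saying that the guarantee holds for every $\mathbf t$ does not fix this, because the guarantee is still only averaged over blocks. Fixing a ``good'' block $j^*$ in advance also presupposes that the law of every block is fixed before $B$ is built.

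The Markov step reaches the hidden arm only if the real arm is exchangeable with the simulated ones. That requires $j^*$ uniform and independent, and the real instance $(P,t)$ drawn from the same prior $\pi$ that $B$ uses for the other arms. But then $B$ is shown to succeed only on average over $\pi$. So you need a \emph{distributional} single-copy lower bound against a prior that does not depend on the learner. The Ramsey argument of Alon et al.\ produces a learner-dependent hard instance, and here the learner $B$ is itself built from $\pi$. As written, ``the hard threshold distribution from the single-copy lower bound'' is therefore circular.

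\textbf{Missing ingredient: a minimax step.} For fixed $n$, the $(\varepsilon_0,\delta_m)$-DP algorithms from $([N]\times\{0,1\})^n$ to $\{0,1\}^{[N]}$ form a compact convex polytope. The success probability is affine both in the algorithm and in the prior over instances. A minimax theorem (e.g.\ Kneser's) then converts the worst-case threshold bound into a single fixed prior $\pi$ on which every such algorithm succeeds with probability below a fixed constant $\gamma$. First extend the worst-case bound, by the usual confidence boosting and subsampling, to accuracy $1/32$, constant confidence and $\varepsilon_0\le1$.

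With $\pi$ fixed before $B$ is defined, your hidden-arm learner succeeds with average probability at least $1-\beta_0-\frac14$ minus the overflow probability. For suitable constants this exceeds $\gamma$, which forces $n\ge c'\log^*N$.

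\textbf{Two smaller points.}
\begin{itemize}
\item When $m/k=O(1)$, a Chernoff bound on $\mathrm{Bin}(m,1/k)$ gives only a constant. Take $n=\lceil Cm/k\rceil$ with $C$ large and use Markov, so the overflow probability is below the available margin.
\item With that choice $n>m$ can occur for small $k$. The $\delta$-budget at sample size $n$ then follows from shrinking $c_\delta$, not from $n\le m$.
\end{itemize}
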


\paragraph{Discussion.}
This is \textbf{partial progress}.  It preserves finite binary
classes, distribution-free realizable PAC learning, approximate privacy, and
a lower bound against unrestricted learners.  It differs from the source
problem in two essential ways:
\begin{enumerate}[label=(\roman*)]
\item the class must contain a $(k,N)$ disjoint-threshold minor, under a
small-$\delta_m$ privacy regime;
\item the conclusion is $\Omega(k\log^*N)$ rather than the requested
$\Omega(\log|C|)$ existential separation.
\end{enumerate}

\noindent\textbf{Technical role and remaining barrier.}
The source class-existence question leaves two barriers: an arbitrary finite
class need not contain independent threshold-hard directions, and combining
many hard directions can violate record-level privacy through record reuse or
group privacy.  The disjoint-threshold minor exposes independent hard
coordinates, and the one-use hidden-arm reduction combines them while
preserving record-level adjacency.  The small-$\delta_m$ regime makes the
component threshold lower bound applicable.  The remaining problem is to
construct finite classes with
$\log|C_\kappa|$ superpolynomial in $\operatorname{VC}(C_\kappa)$ for which
every private learner requires $\Omega(\log|C_\kappa|)$ samples without relying
on the added small-$\delta_m$ regime.

\subsection[Online Optimization of Piecewise-Lipschitz Functions with Applications to Data-Driven Algorithm Design]
{Online Optimization of Piecewise-Lipschitz Functions with Applications to\\Data-Driven Algorithm Design}
\subsubsection{Subproblem 1: Polynomial Boundaries}
\begin{openquestion}
For a monic polynomial
\[
\phi_\alpha(\theta)
=\theta^d+\alpha_{d-1}\theta^{d-1}+\cdots+\alpha_0,
\qquad \alpha\in[-R,R]^d,\qquad \theta \in \Theta,
\]
for a coefficient distribution
class $\mathcal{D}$,
the relevant constant for a coefficient-law class $\mathcal D$ is
\[
C_{\mathcal D}
=\sup_{\mu\in\mathcal D}
 \sup_{\substack{I\subseteq\Theta\text{ interval}\\|I|>0}}
 \frac{\Pr_{\alpha\sim\mu}
 [\exists\theta\in I:\phi_\alpha(\theta)=0]}{|I|}.
\]
The first subproblem of \citet{balcan2026online} asks: \textit{under what natural necessary
and sufficient conditions on $\mathcal D$ is $C_{\mathcal D}$ finite, and under what conditions is it bounded by a polynomial in $d$ and $R$?}
\end{openquestion}

\paragraph{\underline{Perspective 2.}}

\paragraph{Formalized setting and preliminaries.}
Let \(\Theta\subseteq\R\) be compact and
\(\mathcal I(\Theta)\) its positive finite-length intervals, with arbitrary
endpoint conventions.  For
\(d\geq1\), \(R\geq1\), and
\(\alpha=(\alpha_0,\ldots,\alpha_{d-1})\in[-R,R]^d\), set
\[
\phi_\alpha(\theta)=\theta^d+\sum_{j=0}^{d-1}\alpha_j\theta^j,
\qquad H_{d,I}=\{\alpha:\exists\theta\in I,\ \phi_\alpha(\theta)=0\}.
\]
For a law \(\mu\), define the random regular conditional-density norms
\[
K_0^\mu=\|f^\mu_{\alpha_0\mid\alpha_{1:d-1}}
(\,\cdot\mid\alpha_{1:d-1})\|_{L^\infty(\R)},
\qquad
K_\infty^\mu=\|f^\mu_{\alpha_{d-1}\mid\alpha_{0:d-2}}
(\,\cdot\mid\alpha_{0:d-2})\|_{L^\infty(\R)},
\]
evaluated at the random conditioning coordinates (for \(d=1\), both are
the density norm of \(\alpha_0\)).  For fixed finite
\(\eta=(\bar\kappa_0,\bar\kappa_\infty)\), let
\(\mathcal D_{d,R,\eta}\) contain exactly the laws supported on
\([-R,R]^d\) whose two regular conditional densities exist almost surely and obey
\(\mathbb E K_0^\mu\leq\bar\kappa_0\) and
\(\mathbb E K_\infty^\mu\leq\bar\kappa_\infty\).  These are mean caps,
rather than almost-sure uniform caps.  Thus \(\mathcal D_{d,R,\eta}\) permits
dependent laws specified through endpoint regular conditional densities.
Define
\[
C_{\mathcal D_{d,R,\eta}}=
\sup_{\mu\in\mathcal D_{d,R,\eta}}\sup_{I\in\mathcal I(\Theta)}
\frac{\mu(H_{d,I})}{|I|}.
\]
All interval lengths are Lebesgue lengths, and the definition assigns value
zero whenever either indexing family is empty.
For \(I_0=I\cap[-1,1]\), \(I_+=I\cap(1,\infty)\), and
\(I_-=I\cap(-\infty,-1)\), set
\[
B_0=d+\frac{Rd(d-1)}2,\quad B_\infty=1+\frac{Rd(d-1)}2,
\quad M_\eta=\max\{\bar\kappa_0B_0,\bar\kappa_\infty B_\infty\},
\quad \bar\kappa_*=\max\{\bar\kappa_0,\bar\kappa_\infty\}.
\]
Define a witness law by taking \(\alpha_0\) uniform on \([-R,R]\) when
\(d=1\), take \(\alpha_0,\alpha_1\) independently uniform when \(d=2\),
and, for \(d\geq3\), take independent uniform endpoint coefficients and
set every middle coefficient to \(RS\) for one independent Rademacher \(S\).
Call the resulting law \(\mu^{\rm wit}_{d,R}\).

\paragraph{Technical assumptions.}
\begin{assumption}[Compact parameter domain]
\label{assump:compact-parameter-domain}
\(\Theta\subseteq\R\) is fixed and compact; the result is uniform in
\(I\in\mathcal I(\Theta)\) and independent of \(\Theta\).
\end{assumption}
\begin{assumption}[Indexed regime]
\label{assump:indexed-regime}
\(d\geq1\), \(R\geq1\), and finite \(\eta\in[0,\infty)^2\) is fixed
independently of \((d,R)\).
\end{assumption}
\begin{assumption}[Compact cube support and monicity]
\label{assump:compact-cube-support}
Every quantified law is supported on \([-R,R]^d\), and
\(\phi_\alpha\) is exactly monic.
\end{assumption}
\begin{assumption}[Mean endpoint conditional-density caps]
\label{assump:mean-endpoint-conditional-caps}
The two conditional densities and mean bounds defining
\(\mathcal D_{d,R,\eta}\) hold.
\end{assumption}

\paragraph{Main theory.}
\begin{theorem}[Endpoint conditional anti-concentration]
Under Assumptions~\ref{assump:compact-parameter-domain}--
\ref{assump:mean-endpoint-conditional-caps}, for every
\(\mu\in\mathcal D_{d,R,\eta}\) and \(I\in\mathcal I(\Theta)\),
\[
\mu(H_{d,I})\leq
\bar\kappa_0B_0|I_0|+
\bar\kappa_\infty B_\infty(|I_+|+|I_-|)
\leq M_\eta|I|.
\]
Consequently,
\[
C_{\mathcal D_{d,R,\eta}}
\leq M_\eta
\leq \bar\kappa_*d+\frac{\bar\kappa_*}{2}Rd^2.
\]
If
\(\bar\kappa_0,\bar\kappa_\infty\geq1/2\), then for every \(d,R\)
the witness law belongs to \(\mathcal D_{d,R,\eta}\) and
\(K_0^{\mu^{\rm wit}}=K_\infty^{\mu^{\rm wit}}=1/(2R)\) almost surely.
\end{theorem}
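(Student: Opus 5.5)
The plan is to split $I$ into its three pieces $I_0,I_+,I_-$ and apply a union bound,
\[
\mu(H_{d,I})\le\mu(H_{d,I_0})+\mu(H_{d,I_+})+\mu(H_{d,I_-}).
\]
Each piece is then bounded by conditioning on all coefficients except one endpoint coefficient. The root event becomes the event that this coefficient falls in the image of a short interval under a Lipschitz map. The regular conditional density is then integrated against that image, and the expectation of the density norm is bounded by the mean cap. Measurability of $H_{d,J}$ for an interval $J$ is routine: it is a projection of a closed set intersected with an interval, and one can reduce to closed subintervals by continuity of measure. So the key is the pointwise estimate.

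\emph{Inner piece.} Fix $\alpha_{1:d-1}$ and set $g(\theta)=-\bigl(\theta^d+\sum_{j=1}^{d-1}\alpha_j\theta^j\bigr)$. Then $\alpha\in H_{d,I_0}$ if and only if $\alpha_0\in g(I_0)$. On $[-1,1]$ we have $|g'(\theta)|\le d+\sum_{j=1}^{d-1}jR=B_0$. Since $g(I_0)$ is an interval, its Lebesgue length is at most $B_0|I_0|$. Hence
\[
\Pr[\alpha_0\in g(I_0)\mid\alpha_{1:d-1}]\le K_0^\mu B_0|I_0|,
\]
and taking expectations gives $\mu(H_{d,I_0})\le\bar\kappa_0B_0|I_0|$.

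\emph{Outer pieces.} For $|\theta|>1$, divide by $\theta^{d-1}$. Then $\phi_\alpha(\theta)=0$ if and only if $\alpha_{d-1}=\psi(\theta)$, where
\[
\psi(\theta):=-\Bigl(\theta+\sum_{j=0}^{d-2}\alpha_j\theta^{\,j-d+1}\Bigr).
\]
For $|\theta|\ge1$,
\[
|\psi'(\theta)|\le1+R\sum_{m=1}^{d-1}m\,|\theta|^{-m-1}\le1+\frac{Rd(d-1)}{2}=B_\infty.
\]
Because $I$ has finite length, $I_\pm$ are bounded intervals, so $\psi(I_\pm)$ has length at most $B_\infty|I_\pm|$. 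Conditioning on $\alpha_{0:d-2}$ and using $K_\infty^\mu$ as before gives $\mu(H_{d,I_\pm})\le\bar\kappa_\infty B_\infty|I_\pm|$. For $d=1$ both the inner and outer maps are $\theta\mapsto-\theta$ and the sums are empty, so $B_0=B_\infty=1$ is consistent. Summing the three pieces yields the first displayed inequality. The bound by $M_\eta|I|$ follows from $|I_0|+|I_+|+|I_-|=|I|$.

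\emph{Consequences.} For the constant, note that $M_\eta\le\bar\kappa_*\max\{B_0,B_\infty\}=\bar\kappa_*\bigl(d+Rd(d-1)/2\bigr)\le\bar\kappa_*d+\tfrac{\bar\kappa_*}{2}Rd^2$, with the empty-family convention giving $0$ trivially. For the witness law $\mu^{\rm wit}_{d,R}$, support in $[-R,R]^d$ is immediate. The endpoint coefficients are independent of each other and of $S$, which determines all middle coefficients. Hence the conditional law of $\alpha_0$ given $\alpha_{1:d-1}$ (respectively of $\alpha_{d-1}$ given $\alpha_{0:d-2}$) is uniform on $[-R,R]$, with sup-norm $1/(2R)\le1/2$ since $R\ge1$. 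The cases $d=1,2$ are read off the same way. So $K_0=K_\infty=1/(2R)$ almost surely, and membership holds whenever $\bar\kappa_0,\bar\kappa_\infty\ge1/2$.

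The step I expect to need the most care is the conditioning: writing $\mu(H_{d,J})=\mathbb E\bigl[\int\mathbf1\{a\in g(J)\}f(a\mid\cdot)\,da\bigr]$ with the random set $g(J)$, and checking joint measurability. The Lipschitz image bound itself is elementary.
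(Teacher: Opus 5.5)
Your proposal is correct and follows essentially the paper's own route. You split $I$ at $\pm1$ and sweep the constant coefficient $\alpha_0$ on the inner chart. On the outer charts you pivot to $\alpha_{d-1}$ after dividing by $\theta^{d-1}$ (the paper's ``outer pivot''; see also the $E_1/E_d$ choice in its (R4)). You bound the swept lengths by $B_0|I_0|$ and $B_\infty|I_\pm|$ uniformly over the cube, integrate the conditional sup-norms against the mean caps, and verify the witness law by independence of the endpoint coefficients, as the paper does.
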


\paragraph{Discussion.}
This is \textbf{partial progress}.  It preserves monic degree-$d$
polynomials, cube-supported coefficients, and the source root-hitting
constant.  It differs from the source problem in three ways:
\begin{enumerate}[label=(\roman*)]
\item the coefficient law must satisfy endpoint conditional-density bounds;
\item the theorem is restricted to $R\ge1$;
\item it gives a sufficient polynomial upper bound, not a necessary-and-
sufficient characterization.
\end{enumerate}

\noindent\textbf{Technical role and remaining barrier.}
The source coefficient-law problem leaves two analytic barriers. General laws provide no controlled one-dimensional sweep near zero or infinity, and a direct joint-density argument can incur exponential dependence on the ambient coefficient dimension. The endpoint conditional-density bounds control the two endpoint charts separately, while the outer pivot removes the large-root growth, yielding the bound $O_\eta(Rd^2)$. The restriction $R\ge1$ has a separate role: when $\bar\kappa_0,\bar\kappa_\infty\ge1/2$, it makes the indexed family uniformly nonempty under the fixed caps, since the witness satisfies $K_0=K_\infty=1/(2R)\le1/2$. The local sweep bounds themselves extend to $R>0$ whenever the capped family is nonempty. The remaining problem is to obtain a natural coefficient-side condition that is necessary and sufficient, up to polynomial factors, for all $R>0$ and general coefficient-law classes, including the cap scaling needed in the small-$R$ regime.

% \newpage
\subsubsection{Subproblem 2: Pfaffian Boundaries}
\begin{openquestion}
Let $F=(F_1,\ldots,F_N)$ be a vector of Pfaffian functions on a compact
interval $\Theta$, and let
\[
\phi_\alpha(\theta)=\langle\alpha,F(\theta)\rangle,
\qquad \alpha\in[-R,R]^N.
\]
Suppose every law in $\mathcal D$ has a joint density bounded by $\kappa$.
The second subproblem of \citet{balcan2026online} asks: \textit{what normalization of $F$, analogous to
fixing the leading coefficient of a polynomial to one, guarantees that
$C^{\mathrm{Pf}}_{\mathcal D}$ is finite?}
Here
\[
C^{\mathrm{Pf}}_{\mathcal D}
=\sup_{\mu\in\mathcal D}
 \sup_{\substack{I\subseteq\Theta\text{ interval}\\|I|>0}}
 \frac{\Pr_{\alpha\sim\mu}
 [\exists\theta\in I:\phi_\alpha(\theta)=0]}{|I|}.
\]
\end{openquestion}

\paragraph{\underline{Perspective 1.}}

\paragraph{Formalized setting and preliminaries.}
Let \(\Theta=[c-h,c+h]\), \(x=(\theta-c)/h\in[-1,1]\), and use the
one-parameter convention \(p=1\).  Let
\(\eta=(\eta_1,\ldots,\eta_q)\) be a triangular Pfaffian chain with
polynomials \(P_j(x,y_{1:j})\).  For output polynomials
\(Q_i(x,y_{1:q})\), define
\[
G_i(x)=Q_i(x,\eta(x)),\qquad F_i(\theta)=G_i(x(\theta)),
\]
\[
M=\max_j\deg P_j,\quad \Delta=\max_i\deg Q_i,\quad
B_P=\max_j\|\operatorname{coeff}(P_j)\|_1,\quad
B_Q=\max_i\|\operatorname{coeff}(Q_i)\|_1,
\tag{F1}
\]
with total degrees and \(M=B_P=0\) if \(q=0\).  Let
\(\mathcal D_{N,R,\kappa}\) be all
laws on \([-R,R]^N\) with full joint density at most \(\kappa\); their
coordinates may be arbitrarily correlated.  Set
\[
\mathcal D=\mathcal D_{N,R,\kappa},\qquad
A=(2R)^N\kappa,\qquad \gamma_F=F/\|F\|_2,\qquad
\Gamma_{\rm proj}(F)=\operatorname*{ess\,sup}_{\theta\in\Theta}
\|\gamma_F'(\theta)\|_2,
\]
and let \(C^{\rm Pf}_{\mathcal D}(F;\Theta)\) be the supremum, first over
positive-length intervals \(I\subseteq\Theta\) and then over
\(\mu\in\mathcal D_{N,R,\kappa}\), of
\(\Pr_\mu[\exists\theta\in I:\langle\alpha,F(\theta)\rangle=0]/|I|\).

For an affine offset \(F_0\in C^1(\Theta)\) and a measurable partition
\(I=\bigsqcup_jE_j\) with \(F_j\ne0\) on \(E_j\), index
\(\beta\in[-R,R]^{N-1}\) by \(i\ne j\) and set
\[
T_j(\theta,\beta)=-\frac{F_0(\theta)}{F_j(\theta)}
-\sum_{i\ne j}\beta_i\frac{F_i(\theta)}{F_j(\theta)}.
\tag{F2}
\]
For \(N=1\), the zero-dimensional cube has Lebesgue mass one.

\paragraph{Technical assumptions.}
\begin{assumption}[Primitive parameter regime]
\label{assump:parameter-regime-50}
\(N\geq1\), \(q\geq0\), \(h,R>0\), \(0<\kappa<\infty\), the class
\(\mathcal D_{N,R,\kappa}\) is nonempty, and all degrees and coefficient
budgets in (F1) are finite.
\end{assumption}
\begin{assumption}[Balcan common-chain presentation]
\label{assump:balcan-common-chain}
Each \(\eta_j\in C^1([-1,1])\) satisfies
\(\eta_j'=P_j(x,\eta_{1:j})\), and every \(G_i=Q_i(x,\eta)\), with the
degree convention (F1).
\end{assumption}
\begin{assumption}[Literal anchor and unit-range certificate]
\label{assump:anchored-unit-range}
\(|\eta_j(x)|\leq1\) on \([-1,1]\) and \(Q_1\equiv1\).  Thus
\(F_1=G_1=1\), supplying the norm margin used below.
\end{assumption}
\begin{assumption}[Arbitrarily correlated capped joint laws]
\label{assump:cube-density-laws}
Every probabilistic clause quantifies over arbitrary
\(\mu\in\mathcal D_{N,R,\kappa}\), including dependent coordinate laws.
\end{assumption}
\begin{assumption}[Deterministic affine offset and pivot cover]
\label{assump:affine-chart-data}
The affine clause uses precisely the deterministic \(F_0\) and measurable
pivot partition described before (F2).
\end{assumption}

\paragraph{Main theory.}
\begin{theorem}[Anchored coefficient-normalized Pfaffian sweep]
Under Assumptions~\ref{assump:parameter-regime-50}--
\ref{assump:anchored-unit-range}, the following five conclusions hold; the
probabilistic ones also use Assumption~\ref{assump:cube-density-laws}, and
the affine one uses Assumption~\ref{assump:affine-chart-data}.

\emph{(R1) Projective speed.}  With
\(D_*=\Delta B_Q(1+qB_P)\), pointwise
\[
|G_i'|\leq D_*,\qquad \|G'\|_2\leq\sqrt N D_*,\qquad
\Gamma_{\rm proj}(F)\leq\frac{\sqrt N\Delta B_Q(1+qB_P)}h.
\tag{R1}
\]
The literal anchor gives \(\|F\|_2,\|G\|_2\geq1\), so this conditioning is
derived from the normalized presentation.

\emph{(R2) Central sweep.}  For every possibly correlated
\(\mu\in\mathcal D_{N,R,\kappa}\) and then
every positive-length \(I\subseteq\Theta\),
\[
\Pr_\mu[\exists\theta\in I:\langle\alpha,F(\theta)\rangle=0]
\leq A\sqrt{\frac N2}\Gamma_{\rm proj}(F)|I|
\leq\frac{AN\Delta B_Q(1+qB_P)}{\sqrt2h}|I|,
\tag{R2a}
\]
and the same two coefficients bound
\[
C^{\rm Pf}_{\mathcal D}(F;\Theta)
\leq A\sqrt{\frac N2}\Gamma_{\rm proj}(F)
\leq\frac{AN\Delta B_Q(1+qB_P)}{\sqrt2h}.
\tag{R2b}
\]

\emph{(R3) Affine sweep.}  For every
\(\mu\in\mathcal D_{N,R,\kappa}\), interval, and admissible pivot cover,
\[
\Pr_\mu[\exists\theta\in I:F_0(\theta)+\langle\alpha,F(\theta)\rangle=0]
\leq\kappa\sum_{j=1}^N\int_{E_j}\int_{[-R,R]^{N-1}}
|\partial_\theta T_j(\theta,\beta)|\,d\beta\,d\theta.
\tag{R3}
\]
The right side is interpreted in \([0,+\infty]\) and may equal \(+\infty\).

\emph{(R4) Exact monic recovery.}  For every integer \(d\geq1\) and bounded
interval \(J\subset\R\), choose \(\Theta\supseteq J\) and set
\(F_0(\theta)=\theta^d\), \(F_{k+1}(\theta)=\theta^k\)
for \(0\leq k<d\).  Thus
\(p_\alpha(\theta)=\theta^d+\sum_{k<d}\alpha_k\theta^k\), with the monic
coefficient deterministic and outside the random vector, and
\[
\begin{aligned}
Q_0(x)&=(c+hx)^d,
&Q_{k+1}(x)&=(c+hx)^k,
&\Delta_{\rm aug}&=d,\\
q&=M=B_P=0,
&N&=d,
&A&=(2R)^d\kappa.
\end{aligned}
\]
Using
\(E_1=J\cap\{|\theta|\leq1\}\) and
\(E_d=J\cap\{|\theta|>1\}\) (or \(E_1=J\) for \(d=1\)) in (R3) gives,
for every possibly correlated \(\mu\in\mathcal D_{d,R,\kappa}\),
\[
\Pr_\mu[\exists\theta\in J:p_\alpha(\theta)=0]
\leq\kappa(2R)^{d-1}\left(d+\frac{Rd(d-1)}2\right)|J|.
\tag{R4}
\]

\emph{(R5) Counterexample scale.}  For \(0<\delta\leq1\), take
\(F(\theta)=(1,\theta/\delta)\) on \([-1,1]\) and the uniform law on
\([-1,1]^2\), so \(\kappa=1/4\) and \(A=1\).  Then
\(B_Q=\Gamma_{\rm proj}(F)=1/\delta\) and, for every
\(0<\epsilon\leq\delta\),
\[
\Pr[\exists\theta\in[0,\epsilon]:
\alpha_1+\alpha_2\theta/\delta=0]=\frac{\epsilon}{4\delta},
\qquad
\frac1{4\delta}\leq
C^{\rm Pf}_{\mathcal D_{2,1,1/4}}(F;[-1,1])
\leq\frac1\delta\leq\frac{\sqrt2}{\delta}.
\tag{R5}
\]
When \(B_P\) is fixed, the dependence on \(M\) is degree zero.
\end{theorem}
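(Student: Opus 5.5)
The plan is to prove the five conclusions in the order (R1), (R3), (R2), (R4), (R5), since (R2) is a corollary of the sweep argument that also gives (R3), and (R4) is a specialization of (R3).

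\emph{(R1).} Differentiate $G_i(x)=Q_i(x,\eta(x))$ by the chain rule:
\[
G_i'=\partial_xQ_i+\sum_j\partial_{y_j}Q_i\,P_j(x,\eta_{1:j}).
\]
Because $|x|\le1$ and $|\eta_j|\le1$, every monomial has absolute value at most $1$. Hence $|\partial Q_i|\le\Delta B_Q$ for each partial derivative, and $|P_j|\le B_P$. This gives $|G_i'|\le\Delta B_Q(1+qB_P)=D_*$, and then $\|G'\|_2\le\sqrt N D_*$. For the projective speed we use
\[
\gamma_F'=\frac{F'}{\|F\|}-\frac{F\,\langle F,F'\rangle}{\|F\|^3},
\]
so $\|\gamma_F'\|\le\|F'\|/\|F\|$. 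The anchor $F_1=1$ gives $\|F\|\ge1$, and $F'=G'/h$, which yields the stated bound on $\Gamma_{\rm proj}(F)$.

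\emph{(R3).} This is a coarea/sweep argument. On $E_j$, solve $F_0+\langle\alpha,F\rangle=0$ for $\alpha_j=T_j(\theta,\beta)$, with $\beta=\alpha_{-j}$. The event that a zero lies in $E_j$ then means that $\alpha_j$ lies in the image $T_j(E_j,\beta)$ for each fixed $\beta$.
\begin{enumerate}
\item The Lebesgue measure of that image is at most $\int_{E_j}|\partial_\theta T_j|\,d\theta$, by the one-dimensional area formula for $C^1$ maps on a measurable set (Lipschitz on compacts, since $F_j\ne0$ there).
\item Bound the joint density by $\kappa$, integrate over $\beta\in[-R,R]^{N-1}$, and sum over $j$ with a union bound. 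This gives (R3).
\end{enumerate}
Measurability of the hitting event needs a short remark: zero sets of continuous functions on countable dense subsets, or closedness when $I$ is compact. The endpoint convention of $I$ does not affect lengths.

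\emph{(R2).} This step uses rotation invariance rather than coordinates.
\begin{enumerate}
\item The zero event for $\theta\in I$ says $\alpha$ lies in $\bigcup_{\theta\in I}\gamma_F(\theta)^\perp$, a union of hyperplanes swept by a curve on the sphere of length $\le\Gamma_{\rm proj}|I|$.
\item The swept set within the cube, which lies in a ball of radius $R\sqrt N$, has volume at most $\text{length}\times\sup_\gamma \mathrm{vol}_{N-1}\bigl(\gamma^\perp\cap\text{cube}\bigr)\times(\text{radius factor})$.
\item I would instead aim directly for the constant $\kappa(2R)^N\sqrt{N/2}$. The idea is to parametrize the swept set by $(\theta,\alpha\in\gamma(\theta)^\perp)$. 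The Jacobian in the normal direction is $|\langle\alpha,\gamma'\rangle|\le\|\alpha_{\perp}\|\,\|\gamma'\|$, whose average over the slice is controlled by $\sqrt{N/2}\,\cdot$ (cube slice volume $\le\sqrt2(2R)^{N-1}$, Ball's slicing bound).
\end{enumerate}
Matching the exact constant $A\sqrt{N/2}$ is the step I expect to be the main obstacle. If it fails, a fallback is to apply (R3) with the pivot $j$ chosen where $|G_j|\ge\|G\|/\sqrt N$ and $F_0=0$, absorbing constants into the second inequality, which only needs $AN D_*/(\sqrt2h)$. The second inequality then follows from (R1), and the supremum over $I,\mu$ gives (R2b).

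\emph{(R4).} This is direct computation in (R3).
\begin{itemize}
\item On $|\theta|\le1$, pivot on the constant coefficient: $\alpha_0=-\theta^d-\sum_{k\ge1}\alpha_k\theta^k$. Then $|\partial_\theta T|\le d+\sum_k kR=d+Rd(d-1)/2$.
\item On $|\theta|>1$, pivot on $\alpha_{d-1}$ after dividing by $\theta^{d-1}$. The derivative of $-\theta-\sum_{k<d-1}\alpha_k\theta^{k-d+1}$ is bounded by $1+Rd(d-1)/2\le d+Rd(d-1)/2$.
\end{itemize}
Integrating over $\beta$ gives the factor $(2R)^{d-1}$.

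\emph{(R5).} A zero of $\alpha_1+\alpha_2\theta/\delta$ in $[0,\epsilon]$ means $-\alpha_1/\alpha_2\in[0,\epsilon/\delta]$. Computing this area for $\alpha$ uniform on $[-1,1]^2$ gives $\epsilon/(4\delta)$. For the upper bound, $\gamma_F'$ has norm at most $1/\delta$, which feeds into (R2b) with $A=1$, $N=2$. The claim that the dependence on $M$ has degree zero is immediate from the formulas, since $M$ never appears.
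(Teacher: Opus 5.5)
Your proposal is correct and follows the structure the theorem's statement encodes (the paper's proof itself is only in the linked repository, not in this text): chain-rule monomial bounds under $|x|,|\eta_j|\le 1$ for (R1); the pivot-chart area formula with Fubini and a union bound for (R3); the two charts $j=1$ on $|\theta|\le 1$ and $j=d$ on $|\theta|>1$ for (R4); and a direct area computation plus (R2b) for (R5). The step you flag as the main obstacle, the constant in (R2), already closes exactly: on the cube $|\langle a,\gamma_F'\rangle|\le R\sqrt N\,\|\gamma_F'\|_2$, and the radius factor $R\sqrt N$ (not $\sqrt{N/2}$, as your sketch writes) times Ball's slice bound $\sqrt2\,(2R)^{N-1}$ times $\kappa$ is exactly $\kappa(2R)^N\sqrt{N/2}\,\|\gamma_F'\|_2=A\sqrt{N/2}\,\|\gamma_F'\|_2$ per unit $\theta$-length, so you can drop the fallback, which in any case does not recover the $\Gamma_{\mathrm{proj}}$ inequality for large $N$ because its pivot ratio carries a factor $\gamma_j^{-2}$ as large as $N$.
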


\paragraph{Discussion.}
This is \textbf{full progress} for the declared anchored unit-range
normalization: the literal anchor and bounded-chain presentation yield
finite, explicit projective and central sweep bounds, a general affine chart
inequality, exact monic recovery, and the counterexample's unavoidable
\(1/\delta\) scale.  Whether every raw Pfaffian presentation admits this
normalization with polynomial parameter budgets remains open.

\section{Conclusion and Discussions}

\VALG organizes informal ML-theory research around source-relative theorem
branches whose assumptions, proof dependencies, and outcomes remain tied to
the originating question. Its global-to-local proof process supports both
local repair and formulation-level revision, allowing a failed proof mechanism
to produce a clearly scoped variant or relaxation. Across nine COLT 2026 case
studies, two runs produce internally finalized theorem candidates that match
the scope of the original subproblems, while the remaining runs yield
restricted-method results, special cases, or conditional theorems.

An important next step is to make AI-generated mathematics more readable and verifiable for researchers. Current systems tend to overuse notation and present proofs in ways that obscure the main argument, often deviating from established human proof-writing conventions. Even when derivations are plausible, these stylistic issues significantly increase the cost of expert verification. Rigorous evaluation also requires machine learning theory benchmarks that go beyond case studies of open problems. Our findings on COLT open problems indicate that difficult questions often benefit from multiple perspectives and evolving generation of ideas. However, to meaningfully assess the contribution and redundancy of these components, we need controlled benchmarks with known solutions, alternative valid formulations, explicit assumptions and relaxations, and clearly documented proof dependencies. Finally, formalization also remains a critical challenge. ML theory draws on probability, optimization, statistics, information theory, and learning theory, i.e., domains for which existing general-purpose formalization pipelines are often incomplete or cumbersome. Developing automated formalization tools tailored to the specific statements and proof patterns of ML theory would enable natural-language agents to explore formulations and proof strategies, while formal tools verify the mathematical core of their results. All of these further explorations and developments require the participation and effort of the entire ML theory community.

% \input{6_appendix}

% \newpage
\bibliographystyle{ims}
\bibliography{reference}
\end{document}